\title[How catastrophic can catastrophic forgetting be in linear regression?]{How catastrophic can catastrophic forgetting be in linear regression?}
\setlist[enumerate]{label*=\arabic*.}
\setlist{leftmargin=5.5mm} % TODO: verify that it's conventional
\newcommand{\ie}{\textit{i.e., }}
\newcommand{\eg}{\textit{e.g., }}
\newcommand{\wrt}{\textit{w.r.t. }}
\newcommand{\vect}[1]{\boldsymbol{#1}}
\newcommand{\doubleN}{\mathbb{N}}
\newcommand{\naturals}{\doubleN^{+}}
\DeclareMathOperator*{\argmin}{argmin}
\DeclareMathOperator*{\argmax}{argmax}  
\DeclareMathOperator*{\range}{range}     
\DeclareMathOperator*{\kernel}{null}              
\newcommand{\vw}{\vect{w}}
\newcommand{\1}{\vect{1}}
\newcommand{\0}{\mat{0}}
\newcommand{\teacher}{\vw^{\star}}
\newcommand{\mat}[1]{\boldsymbol{#1}}
\newcommand{\X}{\mat{X}}
\newcommand{\M}{\mat{M}}
\newcommand{\U}{\mat{U}}
\newcommand{\Q}{\mat{Q}}
\newcommand{\V}{\mat{V}}
\newcommand{\I}{\mat{I}}
\newcommand{\A}{\mat{A}}
\newcommand{\B}{\mat{B}}
\newcommand{\x}{\vect{x}}
\newcommand{\vv}{\vect{v}}
\newcommand{\vu}{\vect{u}}
\newcommand{\y}{\vect{y}}
\newcommand{\w}{\vw}
\newcommand{\rank}{\operatorname{rank}}
\newcommand{\tr}{\operatorname{tr}}
\newcommand{\trace}{\tr}
\def\mSigma{{\mat{\Sigma}}}
\def\mLambda{{\mat{\Lambda}}}
\def\mP{{\mat{P}}}
\newcommand\smallcdots{\cdotp\!\hskip.6pt\cdotp\!\hskip.6pt\cdotp}
\def\reals{\mathbb{R}}
\def\complex{\mathbb{C}}
\newcommand{\norm}[1]{\left\Vert{#1}\right\Vert}
\newcommand{\abs}[1]{\left\vert{#1}\right\vert}
\newcommand{\cnt}[1]{\left[{#1}\right]}
\newcommand{\explain}[1]{\left[\substack{#1}\right]}
\newcommand{\expectation}{\mathop{\mathbb{E}}}
\newcommand{\prn}[1]{\left({#1}\right)}
\newcommand{\bigprn}[1]{\big({#1}\big)}
\newcommand{\Bigprn}[1]{\Big({#1}\Big)}
\newcommand{\biggprn}[1]{\bigg({#1}\bigg)}
\newcommand{\tprn}[1]{({#1})}
\newcommand{\smallnorm}[1]{\Vert{#1}\Vert}
\newcommand{\tnorm}[1]{\smallnorm{#1}}
\newcommand{\bignorm}[1]{\big\Vert{#1}\big\Vert}
\newcommand{\Bignorm}[1]{\Big\Vert{#1}\Big\Vert}
\newcommand{\biggnorm}[1]{\bigg\Vert{#1}\bigg\Vert}
\newcommand{\hop}{*}
\newcommand{\ang}[2]{\theta_{\tau\left({#1,#2}\right)}}
\newcommand{\tsum}{{\sum}}
\newcommand{\ball}[1]{\mathcal{B}^{d}}
\newcommand{\cf}{catastrophic forgetting\xspace}
\newcommand{\Coll}{\mathcal{S}_T}
\newcommand{\coll}{S}
\newcommand{\sol}[1]{\mathcal{W}_{#1}}
\newcommand{\itr}{t}
\newcommand{\bigO}{\mathcal{O}}
\newcommand{\expfor}{\bar{F}}
\newcommand{\maxrank}{r_{\max}}
\newcommand{\avgrank}{r_{\operatorname{avg}}}
\newcommand{\decrease}[2]{\Delta_{#1}(#2)}
\newtheorem{claim}[theorem]{Claim}
\newtheorem{assumption}[theorem]{Assumption}
\newtheorem{property}{Property}
\newcommand{\remove}[1]{REMOVE!}
\newenvironment{proof-sketch}{\noindent{\bf Proof sketch.}}{}%\qed\bigskip}
\def\figref#1{Figure~\ref{#1}}
\def\secref#1{Section~\ref{#1}}
\def\Secref#1{Section~\ref{#1}}
\def\eqref#1{Eq.~(\ref{#1})}
\def\lemref#1{Lemma~\ref{#1}}
\def\thmref#1{Theorem~\ref{#1}}
\def\Thmref#1{Theorem~\ref{#1}}
\def\corref#1{Corollary~\ref{#1}}
\def\clmref#1{Claim~\ref{#1}}
\def\defref#1{Definition~\ref{#1}}
\def\propref#1{Prop.~\ref{#1}}
\def\appref#1{Appendix~\ref{#1}}
\def\appsref#1{App~\ref{#1}}
\definecolor{residuals}{RGB}{200,27,80}
\definecolor{itay}{RGB}{210,30,220}
\definecolor{rachel}{RGB}{40,180,220}
\definecolor{red2}{RGB}{240,17,17}
\definecolor{snote}{RGB}{50,200,50}
\newcommand{\unnotice}[1]{#1}
\newenvironment{recall}[1][\proofname]{\par
% Not working with COLT
%%%\pushQED{\qed}%
\normalfont \topsep6\p@\@plus6\p@\relax
\trivlist
\item\relax
{\bfseries
Recall #1}%
%{(#2)}
{\bfseries\@addpunct{.}}\hspace\labelsep\ignorespaces
}
\long\def\supptitle#1{

   \gdef\@runningheadingerrortitle{0}

   % If paper is under review print this as headings

   \ifnum\statePaper=0
    {
     \gdef\@runningtitle{Manuscript under review by AISTATS \@conferenceyear}
    }
   \fi

   % If the paper is accepted, print the title or the running title as heading.

   \ifnum\statePaper=1
   {
   \ifx\undefined\@runningtitle
    {
    \gdef\@runningtitle{#1}
    }
   \fi
   }
   \fi

   \ifnum\@runningheadingerrortitle=0
         {
         \global\setbox\titrun=\vbox{\small\bfseries\@runningtitle}%
         \ifdim\wd\titrun>\textwidth%
            {\gdef\@runningheadingerrortitle{2}
             \gdef\@messagetitle{Running heading title too long}
            }%
         \else\ifdim\ht\titrun>10pt
              {\gdef\@runningheadingerrortitle{3}
              \gdef\@messagetitle{Running heading title breaks the line}
              }%
              \fi
          \fi
         }
    \fi

   \ifnum\@runningheadingerrortitle>0
     {
        \fancyhead[CE]{\small\bfseries\@messagetitle}
        \ifnum\@runningheadingerrortitle>1
           \typeout{}%
           \typeout{}%
           \typeout{*******************************************************}%
           \typeout{Running heading title exceeds size limitations for running head.}%
           \typeout{Please supply a shorter form for the running head}
           \typeout{with \string\runningtitle{...}\space just after \string\begin{document}}%
           \typeout{*******************************************************}%
           \typeout{}%
           \typeout{}%
        \fi
     }
  \else
     {
          \fancyhead[CE]{\small\bfseries\@runningtitle}
     }
  \fi

  \hsize\textwidth
  \linewidth\hsize \toptitlebar {\centering
  {\Large\bfseries #1 \par}}
 \bottomtitlebar
}
\begin{document}

 \maketitle

\begin{abstract}
To better understand catastrophic forgetting, we study fitting an overparameterized linear model to a sequence of tasks with different input distributions.
We analyze how much the model forgets the true labels of earlier tasks after training on subsequent tasks, obtaining exact expressions and bounds.
We establish connections between continual learning in the linear setting and two other research areas -- 
alternating projections and the Kaczmarz method.
In specific settings, we highlight differences between forgetting and convergence to the offline solution as studied in those areas. %
In particular, when $T$ tasks in $d$ dimensions are presented cyclically for $k$ iterations, we prove an upper bound of $T^2\min\{1/\sqrt{k},d/k\}$ on the forgetting.
This stands in contrast to the convergence to the offline solution, which can be arbitrarily slow according to existing alternating projection results.
We further show that the $T^2$ factor can be lifted when tasks are presented in a random ordering.

\end{abstract}

\section{Introduction}

Continual learning or lifelong learning is a machine learning setting where data from different tasks are presented sequentially to the learner.
The goal is to adapt the model to new tasks while preserving its performance on previously-learned tasks \citep{schlimmer1986case,thrun1995lifelong,parisi2019continual}. 
A key challenge in continual learning is the \textit{catastrophic forgetting} phenomenon
\citep{mccloskey1989catastrophic,ratcliff1990rehearsal,goodfellow2013empirical,ramasesh2020anatomy}, wherein adaptation of models to fit to new tasks often (unsurprisingly) leads to degradation in performance on previous tasks.

Despite recent advances in theoretical understanding of continual learning \citep{doan2021NTKoverlap,bennani2020generalisationWithOGD,KnoblauchHD20,lee2021taskSimilarity,Asanuma_2021}, 
catastrophic forgetting is not \emph{fully} understood even in simple models. 
Consider sequentially learning from a stream of tasks. 
One should ask: 
what are the best and worst-case sequences of tasks? 
\linebreak
How does the similarity between tasks effect catastrophic forgetting?
What can be said analytically about the benefits to revisiting (\ie replaying) tasks? 
%\linebreak
When does forgetting truly becomes ``catastrophic", so that it is impossible to learn \emph{all} tasks sequentially?

In this work, we aim to theoretically characterize the \emph{worst-case} catastrophic forgetting
in overparameterized linear regression models.\footnote{ We believe it is necessary to do so before moving to more complex models. 
Moreover, any linear regression result can be applied to complex models (e.g., deep networks) in the neural kernel regime (NTK), as in \cite{doan2021NTKoverlap,bennani2020generalisationWithOGD}.}
To this end, we sequentially fit a linear model to tasks observed in some ordering.
We analyze the forgetting convergence of linear regressors obtained by GD/SGD which is trained \emph{to convergence} on data from the current task.

\pagebreak

We explain how in this linear setting, 
continual learning repeatedly \emph{projects} previous solutions onto the solution spaces of newer tasks.
Interestingly, we show that, under a realizability assumption, these projections contract the distance to the minimum norm offline solution that solves \emph{all} tasks.
We analyze this contraction and the resulting convergence.

A setting similar to ours has been extensively studied in the \emph{alternating projections} literature. 
There, a vector is iteratively projected onto closed subspaces (or convex sets in general),
in order to find a solution in their intersection. 
For instance, \citet{kayalar1988error} studied Halperin's cyclic setting \cite{halperin1962product}
and analyzed the convergence of 
$\bignorm{
\tprn{\mP_2 \mP_1}^{n}\!-\!\mP_{1\cap\,2}}^{2}$,
where $\mP_1$, $\mP_2$, and $\mP_{1\cap\,2}$ are orthogonal projections onto subspaces $H_1$, $H_2$, and $H_1\!\cap\!H_2$ (respectively).
In contrast, our goal is to analyze the projection \emph{residuals} (\ie the forgetting). 
That is, we mainly focus on
$\bignorm{
\tprn{\I-\mP_1}
\tprn{\mP_2 \mP_1}^{n}}^{2}$,
rather than on the convergence to 
$H_1\!\cap\!H_2$
(\ie an offline solution).
Due to this difference,
we are able to derive \emph{uniform} data-independent upper bounds on forgetting, even when convergence to the offline solution is arbitrarily slow and ``traditional'' bounds become trivial.

Moreover, our fitting procedure
can also be seen as a Kaczmarz method \citep{karczmarz1937angenaherte},
where one solves a linear equation system $\A\x\!=\!\vect{b}$
by iteratively solving subsets (\ie tasks) of it.
Here also, typical bounds involve data-dependent properties like the spectrum of $\A$,
which are trivial in the worst case.

\paragraph{Our Contributions.} 
We thoroughly analyze catastrophic forgetting in linear regression
\unnotice{optimized by the plain memoryless (S)GD algorithm (\ie without actively trying to mitigate forgetting), and:}
\vspace{-.5mm}
\begin{itemize}[leftmargin=5mm]\itemsep.8pt
    \item Identify cases where there is no forgetting.
    \item Show that without any restrictions, one can construct task sequences where forgetting is maximal and essentially \emph{catastrophic}.
    \item Connect catastrophic forgetting
    to a large body of research on alternating projections and the Kaczmarz method.
    Then, we use this perspective to investigate
    the \emph{worst-case} forgetting when $T$ tasks in $d$ dimensions are seen \emph{repeatedly} for $k$ iterations, 
    under two different orderings:
    \begin{itemize}[leftmargin=5mm,topsep=0pt]\itemsep.8pt
        \item \textbf{Cyclic task orderings.} 
        %
        %For $T=2$, we derive exact convergence results for the forgetting.
        %For $T\ge 3$, we provide lower and upper uniform data-independent bounds.
        For $T\!\ge\!3$, we uniformly upper bound the forgetting by $\tfrac{T^2}{\sqrt{k}}$ and $\tfrac{T^2d}{k}$, and lower bound it by $\tfrac{T^2}{k}$.
        To the best of our knowledge,
        our analysis uncovers novel bounds for residuals in the cyclic block Kaczmarz setting \citep{elfving1980block} as well.

        \item \textbf{Random task orderings.} 
        We upper bound the expected forgetting by $\tfrac{d}{k}$,
        independently of $T$.
    \end{itemize}
    \item Analyze the effect of similarity, or angles, between two consecutive tasks on the forgetting.
    %\linebreak
    We find that after seeing these tasks \emph{once} ---
    intermediate angles are most prone to forgetting,
    %\linebreak
    but after \emph{repeating} the tasks ---
    small angles (\ie nearly-aligned tasks) cause the highest forgetting.
\end{itemize}

\section{Problem setting}
\label{sec:setting}
\paragraph{Motivating example.} Before we formalize our problem, we give a motivating example to keep in mind. Suppose we are interested to learn a predictor for pedestrian detection in an autonomous car. 
This detector is required to operate well in $T$ geographically distant environments 
(\eg different countries), or ``tasks". 
To train the detector, we need to drive the car around, but can do this only in one environment at a time. We do this until the detector has good performance in that environment. 
Then we ship the car to another environment, drive it there to train the detector, and so on (potentially revisiting past environments). Notably, while the overall problem remains constant (pedestrian detection), each time we need to solve it for a potentially  different landscape (\eg city, forest, desert), which can radically change the input distribution. 
The question we aim to answer is: 
\linebreak when would the detector be able to work well in all the environments it visited during training, even though it only observed them sequentially?

%\pagebreak

\paragraph{General setting.}
\unnotice{
We consider fitting a linear model
$f(\x)\!=\!\w^{\top}\x$, 
parameterized by $\w\!\,\in\!\,\reals^d$, 
\linebreak
on a sequence of tasks
originating from an arbitrary set of $T$ regression problems
$\big\{\big(\X_{m},\y_{m}\big)\big\}_{m=1}^T$.
\linebreak
During~$k$ iterations, 
tasks are seen
according to a \emph{task ordering}
$\tau\!\!:\naturals\!\!\to\!\left[T\right]$.
That is, 
the learner is presented with a sequence 
$\big(\X_{\tau(1)},\y_{\tau(1)}\big),
\big(\X_{\tau(2)},\y_{\tau(2)}\big),
\dots,
\big(\X_{\tau(k)},\y_{\tau(k)}\big)$.
Starting from ${\w_{0}=\vect{0}_d}$,
the model is fitted sequentially, % on tasks $\tau(1),\ldots,\tau(k)$, 
yielding a sequence of iterates $\w_{1},...,\w_{k}$.  
At each iteration~$\itr$ we obtain $\w_{\itr}$ by fitting the \emph{current} task %, \ie
$\big(\X_{\tau(\itr)},\y_{\tau(\itr)}\big)$, \emph{without} access to other tasks.
}

\paragraph{Tasks.}
Each \emph{task} $m=1,2,\dots,T$ corresponds to a regression problem
$\big(\X_{m},\y_{m}\big)$,
\ie 
to a data matrix $\X_{m}\in\reals^{n_m\times d}$ with $n_m$ samples in $d$ dimensions
and a vector
$\y_{m}\!\in\!\reals^{n_m}$
of the corresponding labels.
We focus on the overparametrized regime, 
where every task $\big(\X_{m},\y_{m}\big)$ is rank deficient, 
\ie $r_m\!\,\triangleq\!\,\rank{\tprn{\X_m}}\!\,<\,\!d$, thus admitting infinitely-many linear models that perfectly fit the data.

\paragraph{Notation.} Denote by $\mat{A}^{+}$ the Moore–Penrose inverse of $\mat{A}$. 
Denote by $\mP_m$ the orthogonal projection onto the \emph{null space} of a matrix $\X_m$, 
\ie $\mP_m=\I-\X_m^{+}\X_m$. 
Denote the $\ell^2$ norm of a vector by $\norm{\vect{v}}$,
and the \emph{spectral} norm of a matrix
 by $\norm{\mat{A}}$.
Denote the $d$-dimensional closed Euclidean ball as
 $\ball{d} \triangleq \!
 \big\{\vv\in\reals^d \,\,\big| \norm{\vv} \le 1\big\}$.
To avoid ambiguity, we explicitly exclude zero from the set of natural numbers and use
$\naturals\!\!\triangleq\!\doubleN\setminus\!\left\{0\right\}$.
Finally, denote the natural numbers from $1$ to $T$ by 
$\cnt{T}\!\triangleq\!\left\{1,\ldots,T\right\}$.

%\newpage

\subsection{Task collections}

%\paragraph{Task collections.}
A \emph{task collection} is an (unordered) set of $T$ tasks,
\ie
$\coll=\big\{\big(\X_{m},\y_{m}\big)\big\}_{m=1}^T$.
Throughout the paper, 
we focus on task collections from the set:
\begin{align*}
\begin{split}
    \Coll
    \triangleq
    \Big\{
        %\coll
        %=
        \big\{
            \big(\X_{m},\y_{m}\big)
        \big\}_{m=1}^{T}
        ~\Big|~
        \underbrace{
            \forall m\!:~ %\in\cnt{T}:
            \Vert{\X_{m}}\Vert \le1
        }_{\text{Assumption \ref{assume:bounded_data}}},~
        \underbrace{
        \exists \w\in\ball{d}\!:~
        %\in\cnt{T}:
        \y_{m}\!=\!\X_{m}\w,~
        \forall m\in\cnt{T}
        }_{\text{Assumption \ref{assume:realizability}}}
        \Big\}.
\end{split}
\end{align*}
That is, we make the two following
assumptions on the task collections.
\begin{assumption}[Bounded data]
\label{assume:bounded_data}
Singular values of all data matrices are bounded (w.l.o.g., by $1$).
\end{assumption}
\begin{assumption}[Realizability]
\label{assume:realizability}
Tasks are jointly realizable by a linear predictor
with a bounded norm (w.l.o.g.,  bounded by $1$).
\end{assumption}

%\begin{paragraph}{}
Assumption~\ref{assume:bounded_data} is merely a technical assumption on the data scale, 
simplifying our presentation.
Assumption~\ref{assume:realizability} on joint realizability of tasks
is essential to our derivations, especially for \eqref{eq:affine-rule}. 
We note that this is a reasonable assumption in a highly overparameterized models like wide neural networks in the NTK regime, or in noiseless settings with an underlying linear model.
%
%\end{paragraph}

% \paragraph{Task orderings.}
% Consider a task collection $\coll=\big\{\big(\X_{m},\y_{m}\big)\big\}_{m=1}^T$ of $T$ tasks
% and a \textit{task~ordering} defined by a mapping 
% $\tau\!\!:\naturals\!\to\left[T\right]$.
% %
% The learner is presented with a sequence of tasks from $\coll$ according to the ordering $\tau$
% for $k$ iterations,\footnote{$k>T$ means tasks are presented multiple times.}
% \ie
% $\big(\X_{\tau(1)},\y_{\tau(1)}\big),
% \big(\X_{\tau(2)},\y_{\tau(2)}\big),
% \dots,
% \big(\X_{\tau(k)},\y_{\tau(k)}\big)$.

\subsection{Forgetting and catastrophic forgetting}
%
% We consider fitting a linear model
% $f(\x)\!=\!\w^{\top}\x$, 
% parameterized by $\w\!\in\!\reals^d$, 
% on a sequence of tasks
% induced by an arbitrary task collection
% $\coll$
% and a task ordering $\tau$.
%
% Starting from $\w_{0}=\vect{0}_d$,
%
% Consider a task collection $\coll=\big\{\big(\X_{m},\y_{m}\big)\big\}_{m=1}^T$ of $T$ tasks
% and a \textit{task~ordering} defined by a mapping 
% $\tau\!\!:\naturals\!\to\left[T\right]$.
%
We fit the linear model sequentially on tasks $\tau(1),\ldots,\tau(k)$.
Starting from the solution to its preceding task,
each task is fitted \emph{without} access to previous tasks.
%
% At each iteration~$\itr$ we obtain $\w_{\itr}$ by fitting the \emph{current} task %, \ie
% $\big(\X_{\tau(\itr)},\y_{\tau(\itr)}\big)$, \emph{without} access to other tasks.
% 
The goal in continual learning is to not \emph{forget} what we learned on previous tasks. 
For example, in the motivating example given in the beginning of the section, we want the pedestrian detector to still be able to work well in previous environments after we train it in a new environment. 

\pagebreak

Formally,
like in a recent work
\citep{doan2021NTKoverlap},
we measure the forgetting of $\w$ on the task presented at iteration $\itr$ by the squared loss 
$\mathcal{L}_{\itr}\left(\w\right)
\!\triangleq\!
\bignorm{\X_{\tau(\itr)}\w\!-\!\y_{\tau(\itr)}}^2$, 
and define the %(average) 
forgetting as follows.
\begin{definition}[Forgetting]
\label{def:forgetting}
Let $\coll\!=\!\big\{\!\big(\X_{m},\y_{m}\big)\!\big\}_{m=1}^T\!\in\!\Coll$ 
be a task collection fitted according to an ordering 
$\tau\!\!:\naturals\!\!\to\!\left[T\right]$.
The~forgetting at iteration $k$
is the average
loss
of already~seen~tasks,
\ie
\vspace{-1mm}
\begin{align*}
F_{\tau,\coll}\left(k\right)
=
\frac{1}{k}\tsum_{\itr=1}^{k}
\mathcal{L}_{\itr}\left(\w_k\right)
=
\frac{1}{k}\tsum_{\itr=1}^{k}
\Bignorm{\X_{\tau(\itr)}\w_{k}-\y_{\tau(\itr)}}^2 ~.
\end{align*}
%\vspace{1mm}
\end{definition}
In words, we say that the model ``forgets'' the labels of a task seen at iteration $\itr$ after fitting $k$ tasks, % from the ordering $\tau$, 
if $\w_{k}$ does not perfectly predict
$\y_{\tau(\itr)}$ from $\X_{\tau(\itr)}$, 
\ie $\mathcal{L}_\itr(\w_{k})>0$. 
\unnotice{We note in passing that
\emph{forgetting} should not be confused with \emph{regret} since they can exhibit different behaviors
(see \secref{sec:related}).}

In the following, we capture the worst possible convergence behavior under a given ordering.
\begin{definition}[Worst-case forgetting]
\label{def:worst-case}
The worst-case forgetting of a task ordering $\tau$ after $k$ iterations is the 
maximal forgetting at iteration $k$ on \emph{any} task collection in $\Coll$, \ie
$
%F_{\tau}\left(k\right)
%=
\sup_{\coll \in \Coll}F_{\tau,\coll}\left(k\right)
$.
\end{definition}
For specific task orderings, we are able to bound this worst-case forgetting by non-trivial uniform (data-independent) bounds that converge to 0 with the number of iterations $k$.
When the worst-case forgetting does not converge to 0, we say that forgetting is \emph{catastrophic}.
More formally,
\begin{definition}[Catastrophic forgetting of a task ordering]
Given an ordering $\tau$ over $T$ tasks, 
the forgetting is \emph{not} catastrophic if 
$\lim_{k\to\infty} 
\tprn{\sup_{\coll\in \Coll}
{F_{\tau,\coll}}(k)} = 0$.
\end{definition}
In this paper we focus on the quantities $F_{\tau,\coll}(k)$ and $\sup_{\coll \in \Coll}\!F_{\tau,\coll}(k)$, 
aiming to answer the following questions: 
under what conditions on $\tau$ and $\coll$ there is no forgetting, 
\ie $F_{\tau,\coll}\left(k\right)=0$? 
%\linebreak
What can we say about 
$\sup_{\coll \in \Coll}\!F_{\tau,\coll}(k)$
for general task orderings?
Can forgetting be catastrophic when fitting a finite number of tasks in a cyclic or random ordering?

\subsection{Fitting procedure}
\label{sec:fitting_procedure}

%Starting from $\w_{0}=\vect{0}_d$,
%we fit tasks sequentially and obtain a sequence of iterates $\w_{1},...,\w_{k}$. 
%
%At each iteration $k$ we obtain $\w_{k}$ by fitting the \emph{current} task, \ie $\big(\X_{\tau(k)},\y_{\tau(k)}\big)$.
%
Our ultimate goal is to minimize the forgetting (\defref{def:forgetting}). 
To this end, in this paper we analyze the following fitting procedure of tasks, corresponding to the simplest continual learning setting.
At each iteration $\itr=1,..,k$, 
we start from the previous iterate $\w_{\itr-1}$
and run (stochastic) gradient \linebreak
descent \emph{to convergence}
so as to minimize the squared loss
on the \emph{current} task,
\linebreak
\ie 
${\mathcal{L}_{\itr}\left(\w\right)
=
\bignorm{\X_{\tau(\itr)}\w-\y_{\tau(\itr)}}^2}$,
obtaining the new iterate $\w_{\itr}$.

Since we work in the overparameterized regime, where $r_{\itr}<d$
for all $\itr$, each $\w_{\itr}$ perfectly fits its corresponding task $\tau(\itr)$, \ie $ \X_{\tau(\itr)}\w_{\itr}=\y_{\tau(\itr)}$. 
In addition, for each task, (S)GD at convergence returns the unique solution which implicitly minimizes the $\ell^2$ distance to the initialization \citep{zhang2017understanding,gunasekar2018characterizing}, hence we can express $\w_{\itr}$ as the unique solution to the following optimization problem:
%\vspace{-1mm}
\begin{align}
\begin{split}
\label{min_l2}
    \w_{\itr}
    =
    \argmin_{\w}&\left\Vert\w-\w_{\itr-1}\right\Vert~, ~~~~~
    \text{s.t.}~~\X_{\tau(\itr)}\w=\y_{\tau(\itr)}~.
\end{split}
\end{align}

\paragraph{Our iterative update rule.}
The solution to the above optimization problem 
is given by
%
%\vspace{-2mm}
\begin{align}
\label{eq:update_rule}
    \w_{\itr}
    =
    \w_{\itr-1}+\X_{\tau(\itr)}^{+}(\y_{\tau(\itr)}-\X_{\tau(\itr)}\w_{\itr-1})~.
\end{align}
%
%\paragraph{Related methods.}
When $\X_{\tau(\itr)}$ contains one sample only (\ie its rank is 1), 
our update rule is equivalent to those of the Kaczmarz method \citep{karczmarz1937angenaherte} and the NLMS algorithm \citep{slock1993convergence,Haykin:2002}.
Moreover, it can be seen as an SGD update
on the forgetting from \defref{def:forgetting},
\ie
${\w_{\itr}
=
\w_{\itr-1} \!+\!
\tfrac{1}{\smallnorm{\x_{\tau(\itr)}}^2}
\bigprn{
y_{\tau(\itr)}\!-
\x_{\tau(\itr)}^{\top}\w_{\itr-1}}
\x_{\tau(\itr)}}$.
Given~tasks with \emph{many} samples (rank greater than $1$), our rule is equivalent to that of the block Kaczmarz method \cite{elfving1980block}.
%
%Note that \eqref{eq:update_rule} is not a mini-batch SGD update in general.
%
We discuss these connections in \secref{sec:related}.
%

%\newpage

\paragraph{Minimum norm offline solution.}
Under the realizability assumption~\ref{assume:realizability}
there might be infinitely-many \emph{offline solutions} that perfectly fit \emph{all} tasks.
To facilitate our discussion, 
we focus on the minimum~$\ell^2$-norm offline solution $\teacher$ (often referred to as the \emph{offline solution} for brevity), 
\ie
\vspace{-1mm}
\begin{align}
\begin{split}
\label{eq:minimum-norm-solution}
    \teacher 
    ~\triangleq~
    {\argmin}_{\w \in \reals^{d}}\,
    \tfrac{1}{2}\norm{\w}^2,
    ~~\text{s.t.}~~
    \X_{m}\w=\y_{m},~
    \forall m\in\cnt{T}
    ~.
\end{split}
\end{align}

\paragraph{Task solution spaces.}
Finally, from \eqref{min_l2} we see that at the end of the $\itr$-th iteration, the iterate $\w_\itr$ must lie in the \emph{solution space} of task $\tau(\itr)$, which is an affine subspace defined as follows
\vspace{-1mm}
\begin{align}
\label{eq:solution_spaces}
\sol{\tau(\itr)}
\triangleq
\big\{
\w~
\big|\,
\X_{\tau(\itr)}\w=\y_{\tau(\itr)}
\big\}
=
\teacher + \kernel(\X_{\tau(\itr)})~.
\end{align}

\section{Forgetting dynamics}
\label{sec:forgetting_in_linear}
To analyze forgetting, 
we emphasize the projective nature of learning.
We rewrite the update rule from
\eqref{eq:update_rule}
by employing the realizability Assumption~\ref{assume:realizability}
and plugging in $\X_m \teacher\!=\!\y_m$ into
that equation.
Then, we subtract $\teacher$ from both sides,
and reveal an equivalent affine update rule, \ie
%
%\vspace{-1mm}
\begin{align}
    \w_{\itr}-\teacher
    &=
    \big(
    \I -
    \X_{\tau(\itr)}^{+}\X_{\tau(\itr)}
    \big)
    \w_{\itr-1}
    +
    \X_{\tau(\itr)}^{+}\X_{\tau(\itr)}\w^\star
    -\w^\star
    % \\
    % &
    ~\triangleq~
    \mP_{\tau(\itr)}
    \big(\w_{\itr-1}-\teacher\big)~,
\label{eq:affine-rule}    
\end{align}
where we remind that
$\mP_{m}
\triangleq
\I-\X_{m}^+\X_{m}$ is the projection operator on the solution space $\sol{m}$.

\paragraph{Geometric interpretation.}
Using properties of pseudo-inverses and operator norms we get that
$\forall m\!\in\!\cnt{T}\!:\,\bignorm{\X_m\vu}^2
=\bignorm{\X_m\X_m^+\X_m\vu}^2
\le\bignorm{\X_m}^2\bignorm{\X_m^+\X_m\vu}^2
=
\bignorm{\X_m}^2\bignorm{\bigprn{\I-\mP_m}\vu}^2$. 
Then, we recall that $\big\Vert{\X_{m}}\big\Vert\!\le\!1$ 
(Assumption~\ref{assume:bounded_data}),
and reveal that the forgetting can be seen as the mean of the squared \emph{residuals} from projecting $\bigprn{\w_{k}\!-\!\teacher}$
onto previously-seen solution spaces. That is,
\vspace{-2mm}
\begin{align}
\label{eq:geometric}
    F_{\tau,S}\left(k\right)
    &= 
    \frac{1}{k}\,
    \tsum_{\itr=1}^{k}
    \bignorm{\X_{\tau(\itr)}  
    \bigprn{\w_{k}-\teacher}}^2 
    \le
    \frac{1}{k}\,
    \tsum_{\itr=1}^{k}
    \bignorm{
    \bigprn{\I-\mP_{\tau(\itr)}}
    \bigprn{\w_{k}-\teacher}}^2~.
\end{align}
\vspace{-2mm}
\begin{figure}[h!]
  \begin{minipage}[c]{0.62\textwidth}
    \caption{
    \small
    \textbf{Projection illustration.}
    \\
    According to \eqref{eq:affine-rule},
    ${(\w_{k-1}\!-\!\teacher)}$~is given by projecting 
    ${(\w_{k-2}\!-\!\teacher)}$
    onto the solution space of the $(k-1)$th task,
    \ie $\sol{\tau(k-1)}$,
    which in this figure is a rank-1 affine subspace in $\reals^2$.
    %\\
    In turn, ${(\w_{k-1}\!-\!\teacher)}$ is projected onto $\sol{\tau(k)}$ to obtain ${(\w_{k}\!-\!\teacher)}$,
    and so on.
    Overall, the solution is continually getting closer to~$\teacher$.
    Moreover, the forgetting
    (\textcolor{residuals}{magenta}) is the
    mean of the squared residuals
    from projecting $(\w_{k}-\teacher)$
    onto previously seen solution spaces,
    as can be seen from \eqref{eq:geometric}.
    }
    \label{fig:contraction}
  \end{minipage}
  \hfill
  \begin{minipage}[c]{0.34\textwidth}
    \includegraphics[width=.95\textwidth]{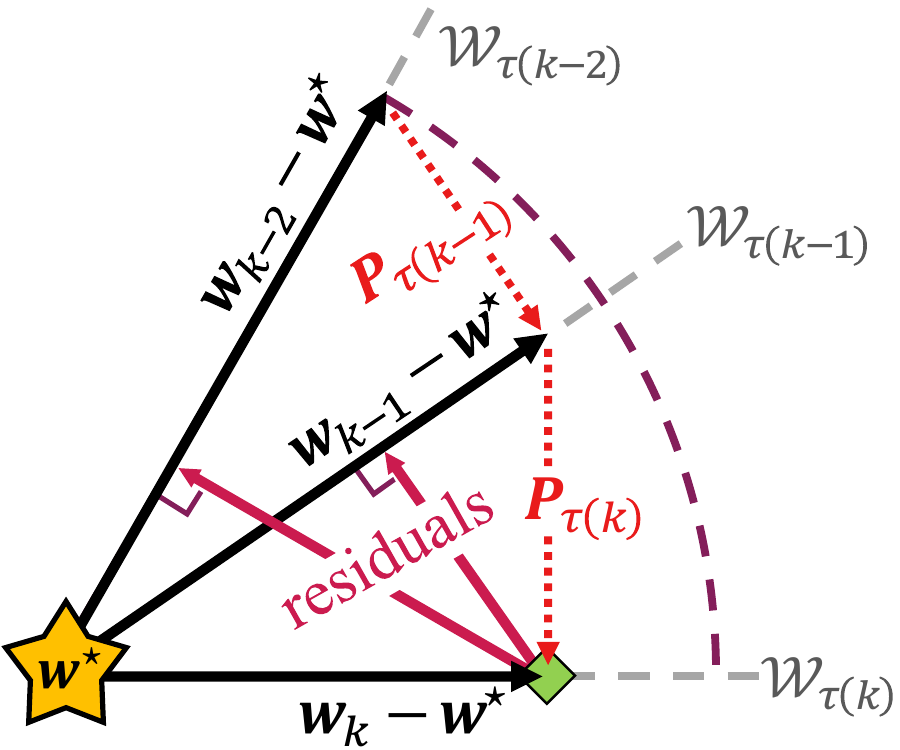}
  \end{minipage}
\end{figure}
\vspace{-3mm}
\paragraph{Learning as contracting.}
Recursively,
the affine update rule from \eqref{eq:affine-rule}
provides a closed-form expression for the 
distance between the iterate and the offline solution 
(recall $\w_{0}=\0$):
\vspace{-1mm}
\begin{align}
\begin{split}
    \w_{\itr}-\teacher
    &=
    \mP_{\tau(\itr)}\cdots\mP_{\tau(1)}
    \big(
    \,\cancel{\w_{0}}
    -
    \teacher~\big)~.
\label{total_contraction}
\end{split}
\end{align}
Since orthogonal projections are non-expansive operators, it also follows that
\vspace{-1mm}
\begin{align*}
    \forall{\itr\!\in\!\cnt{T}\!}:~
    \tnorm{\w_{\itr} - \teacher}
    \leq 
    \tnorm{\w_{\itr-1} - \teacher}
    \leq 
    \dots
    \leq 
    \tnorm{\,\cancel{\w_{0}} - \teacher}
    =
    \tnorm{\teacher}~,
\end{align*}
hinting at a possible convergence towards the offline solution,
as depicted in \figref{fig:contraction}.

\pagebreak

Combining \eqref{eq:geometric} and (\ref{total_contraction}), 
we express the average forgetting (\defref{def:forgetting}) 
of an ordering $\tau$ over a task collection $S=\{(\X_{m},\y_{m})\}_{m=1}^T$ as follows,
\begin{align}
    \label{eq:linear_forgetting}
    F_{\tau,S}\left(k\right)
    &= 
    \frac{1}{k}
    \sum_{\itr=1}^{k}
    \bignorm{\X_{\tau(\itr)}  
    {\mP_{\tau(k)}
    \cdots
    \mP_{\tau(1)}}
    \teacher}^2
    \le
    \frac{1}{k}
    \sum_{\itr=1}^{k}
    \bignorm{
    \bigprn{\I\!-\!\mP_{\tau(\itr)}}
    {\mP_{\tau(k)}
    \cdots
    \mP_{\tau(1)}}
    \teacher}^2~.
\end{align}

\paragraph{Worst-case formulation.}
So far, 
${\tnorm{\X_m\X_m^+\X_m\vu}^2
\le\tnorm{\X_m^+\X_m\vu}^2}$
is the only inequality we used
(at \eqref{eq:geometric}, relying on Assumption~\ref{assume:bounded_data}).
Importantly, this inequality saturates when all non-zero singular values of $\X_m, \forall m\!\in\!\cnt{T}$ are $1$.
Consequentially, 
the \emph{worst-case} forgetting in \defref{def:worst-case}
can be simply expressed in terms of $T$ projection matrices
$\mP_{1},\dots,\mP_{T}\in\reals^{d\times d}$, 
as
%%
%\vspace{-0mm}
\begin{align}
\label{eq:forgetting-projection}
    \sup_{
    \substack{
        \coll\in\mathcal{S}_{T}
    }
    }
    \!
    F_{\tau,\coll}(k)
    \,
    =
    \sup_{
    \substack{
        \mP_{1},\dots,\mP_{T}
    }}
    \frac{1}{k}\,
    \tsum_{\itr=1}^{k}
    \bignorm{
    \bigprn{\I-\mP_{\tau(\itr)}}
    \mP_{\tau(k)}
    {\cdots}\mP_{\tau(1)}
    }^2~,
\end{align}
%\vspace{-0mm}
where we also used Assumption~\ref{assume:realizability}
that $\norm{\teacher}\le 1$.

Throughout this paper,
we mainly analyze \eqref{eq:linear_forgetting}~and~(\ref{eq:forgetting-projection}) from different perspectives.
Multiplying from the left by $\left\{\I-\mP_{\tau(\itr)}\right\}_{\itr=1}^{k}$ 
is what distinguishes our quantity of interest --- the forgetting --- %in \eqref{eq:forgetting-projection}, 
from quantities studied in the area of alternating projections.

\paragraph{Principal angles between two tasks.}
Finally, we briefly present \emph{principal angles}, which affect forgetting dynamics, as discussed throughout this paper.
Much of the research on alternating projections has focused on establishing notions of angles between subspaces \citep{deutsch1995angle,oppenheim2018angle}.
%
%\linebreak
\emph{Principal~angles},
also known as canonical angles,
are a popular choice for angles between two linear subspaces
\citep{bjorck1973numerical,bargetz2020angles},
having many applications in numerical analysis
(e.g., in the generalized eigenvalue problem \cite{ge2016generalized}).
These angles geometrically describe a pair of subspaces,
by recursively taking the smallest angle between any two vectors in these subspaces that are orthogonal to previously chosen vectors.
We elaborate on the definition and the role of these angles in \appsref{app:principal_angles}.
There, we visualize these angles and explain that the non-zero principal angles between the row spaces of two tasks,
\ie $\range(\X_1^\top),\range(\X_2^\top)$,
are identical to those between the corresponding 
solution spaces $\sol{1},\sol{2}$.

\paragraph{The rest of our paper.}
We study forgetting 
under different task orderings.
In \Secref{sec:arbitrary} we consider arbitrary orderings
and show when there is provably \emph{no} forgetting,
and when forgetting is arbitrarily high,
\ie catastrophic. 
We analyze cyclic and random orderings in Sections~\ref{sec:cyclic}~and~\ref{sec:random}.
For both these orderings, 
we derive convergence guarantees 
and prove
forgetting \emph{cannot} be catastrophic.
%

%\newpage

\section{Arbitrary task orderings}\label{sec:arbitrary}
\paragraph{Identity ordering.}
In this section we consider arbitrary sequences of tasks, \ie we do not impose any specific ordering. 
To this end, we take $k\!=\!T$ and an \emph{identity ordering} $\tau$ s.t. 
$
%\label{eq:identity_ordering}
{\tau\!\left(t\right)\!=\!t,\,
\forall t\!\in\!\naturals}$.
To simplify notation, in this section only,
we suppress $\tau$ 
%when indexing tasks, 
%\ie we 
and use $\X_{\tau(m)}\!=\!\X_{m}$
interchangeably.

\subsection{No forgetting cases}
\label{sec:no_forgetting}
Consider learning two tasks sequentially:
$\big(\X_{1},\y_{1}\big)$
and then $\big(\X_{2},\y_{2}\big)$.
Right after learning the second task,
we have
$\mathcal{L}_{2}(\w_2)
\!=\!
\tnorm{\X_2 \w_2 \!-\! \y_2}^2 
\!=\! 0$.
Thus, the forgetting from \eqref{eq:linear_forgetting} becomes 
${
    F_{\tau,S}(2)
    =
    \tfrac{1}{2}
    \norm{
    \X_1
    \mP_{2}\mP_{1}
    \w^\star}^2
}
$.
We now derive sufficient and necessary conditions for no forgetting.

\pagebreak

\begin{theorem}[No forgetting in two-task collections]
\label{thm:noforgetting}
Let
$\coll\!=\!\left\{\big(\X_{1},\y_{1}\big),\big(\X_{2},\y_{2}\big)\right\}
\!\in\!\mathcal{S}_{T=2}$ 
be a task collection with $2$ tasks,
fitted under an identity ordering $\tau$,
\ie$\big(\X_{1},\y_{1}\big)$~and then $\big(\X_{2},\y_{2}\big)$.
Then the following conditions are equivalent:
\begin{enumerate}\itemsep.5pt
    \item 
    For \emph{any} labeling $\y_1,\y_2$
    (or equivalently, any minimum norm solution $\teacher$),
    after fitting the second task, the model does not ``forget'' the first one.
    That is,
    $F_{\tau,\coll}(2)=0$.
    \item
    It holds that
    $\X_1 \mP_2 \mP_1 = 
    \0_{n_1 \times d}$.
    \item Each principal angle between the tasks, \ie $\range({\X_1^\top})$ and
    $\range({\X_2^\top})$, is either $0$ or $\nicefrac{\pi}{2}$.
\end{enumerate}
\end{theorem}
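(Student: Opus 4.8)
The plan is to prove the chain of equivalences $(1)\Leftrightarrow(2)\Leftrightarrow(3)$, exploiting the fact that the forgetting expression $F_{\tau,\coll}(2)=\tfrac12\norm{\X_1\mP_2\mP_1\teacher}^2$ has already been worked out in the excerpt. I would argue $(2)\Rightarrow(1)$ trivially (if $\X_1\mP_2\mP_1=\0$ then the loss vanishes for every labeling), and for $(1)\Rightarrow(2)$ I would use that, under Assumption~\ref{assume:realizability}, as $\y_1,\y_2$ range over all realizable labelings the vector $\teacher$ ranges over all of $\reals^d$ (or at least a spanning set — e.g.\ for any $\vv\in\reals^d$ one can take $\teacher=\vv$, set $\y_m=\X_m\vv$, and rescale so $\norm{\teacher}\le1$); hence $\norm{\X_1\mP_2\mP_1\teacher}^2=0$ for all $\teacher$ forces the linear operator $\X_1\mP_2\mP_1$ to be identically zero.

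The substantive part is the equivalence $(2)\Leftrightarrow(3)$, which I would route through the principal-angle / CS-decomposition structure of the two projections. Since $\X_1$ and $\mP_1=\I-\X_1^{+}\X_1$ have the relation $\kernel(\X_1)=\range(\mP_1)$ and $\range(\X_1^{\top})=\range(\I-\mP_1)$, I would first note $\X_1\mP_2\mP_1=\0$ is equivalent to $(\I-\mP_1)\mP_2\mP_1=\0$ (because $\X_1 M=\0 \iff \range(M)\subseteq\kernel(\X_1)=\range(\mP_1) \iff \mP_1 M = M \iff (\I-\mP_1)M=\0$; and here $M=\mP_2\mP_1$), i.e.\ to a statement purely about the two orthogonal projections $\mP_1,\mP_2$ onto the solution spaces. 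Then I would invoke the standard simultaneous block decomposition of a pair of orthogonal projections: there is an orthonormal basis in which $\mP_1,\mP_2$ decompose into $1\times1$ and $2\times2$ blocks, the $2\times2$ blocks being projections onto lines at the principal angles $\theta_i\in(0,\pi/2)$, while the $0$ and $\pi/2$ angles correspond to the trivial $1\times1$ blocks ($H_1\cap H_2$, $H_1\cap H_2^\perp$, $H_1^\perp\cap H_2$, $H_1^\perp\cap H_2^\perp$). A direct computation on a single $2\times2$ block shows $(\I-\mP_1)\mP_2\mP_1$ restricted to that block has norm $\cos\theta_i\sin\theta_i$, which is zero iff $\theta_i\in\{0,\pi/2\}$; on the $1\times1$ blocks the product is always zero. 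So $(\I-\mP_1)\mP_2\mP_1=\0$ iff there are no nontrivial $2\times2$ blocks iff every principal angle is $0$ or $\pi/2$. I would also cite \appref{app:principal_angles} for the fact that the nonzero principal angles between $\range(\X_1^\top),\range(\X_2^\top)$ coincide with those between $\sol{1},\sol{2}$, so the angle condition can be phrased on either pair.

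The step I expect to be the main obstacle is making the $2\times2$-block reduction rigorous and clean: one must handle the possibly-infinite-dimensional-looking bookkeeping of the four intersection subspaces plus the "generic" part where the two subspaces are in general position, confirm that the block decomposition of $\mP_1,\mP_2$ is genuinely simultaneous, and verify that $(\I-\mP_1)\mP_2\mP_1$ respects this block structure (it does, since it is a polynomial in $\mP_1,\mP_2$). An alternative that sidesteps some of this is to work directly with singular values: $(\I-\mP_1)\mP_2\mP_1 = \0 \iff \mP_2\mP_1$ has all its singular values in $\{0,1\}$ restricted appropriately, and the singular values of $\mP_2\mP_1$ are exactly the cosines of the principal angles — so $\norm{(\I-\mP_1)\mP_2\mP_1}$ (and each singular value) vanishes iff every $\cos\theta_i\in\{0,1\}$, i.e.\ $\theta_i\in\{0,\pi/2\}$. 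I would likely present this singular-value version as the cleanest route and relegate the block-decomposition picture to a remark, being careful to justify the identity $\range((\I-\mP_1)\mP_2\mP_1)=\{0\} \iff$ all singular values of $\mP_2\mP_1$ lying outside $\ker(\I-\mP_1)$ vanish, which again comes down to $\mP_1 \mP_2 \mP_1$ having spectrum in $\{0,1\}$.
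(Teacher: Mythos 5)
Your proof is correct, and its core follows the same route as the paper's: the paper also reduces condition (2) to $(\I-\mP_1)\mP_2\mP_1=\0$ and then identifies $\norm{(\I-\mP_1)\mP_2\mP_1}^2=\max_i\cos^2\theta_i\,(1-\cos^2\theta_i)$ through the eigenvalues $\lambda_i=\cos^2\theta_i$ of the Gram matrix $\mP_1\mP_2\mP_1$ (this is \lemref{lem:principal_form} specialized to $n=1$, together with \clmref{clm:same_angles} to pass from null spaces to row spaces), which is exactly your singular-value argument; your $2\times 2$-block (CS-decomposition) picture is a valid alternative but not needed. Where you genuinely differ is in the two reductions. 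First, you pass from $\X_1\mP_2\mP_1=\0$ to $(\I-\mP_1)\mP_2\mP_1=\0$ via $\X_1\M=\0\iff\range(\M)\subseteq\kernel(\X_1)=\range(\mP_1)$, which is cleaner than the paper's two-sided spectral-norm manipulation with the SVD of $\X_1$. Second, for $(1)\Rightarrow(2)$ you let $\vv$ range over all of $\reals^d$, set $\y_m=\X_m\vv$ (rescaled), and conclude $\X_1\mP_2\mP_1\vv=\vect{0}$ for every $\vv$; strictly speaking the minimum-norm solution only ranges over $\range(\X_1^\top)+\range(\X_2^\top)$, but your argument is sound because the identity $F_{\tau,\coll}(2)=\tfrac12\norm{\X_1\mP_2\mP_1\w^\star}^2$ holds for \emph{any} predictor realizing the labels (the derivation of \eqref{eq:affine-rule} uses only $\X_m\w^\star=\y_m$), and $\X_1\mP_2\mP_1$ vanishes automatically on $\kernel(\X_1)\cap\kernel(\X_2)$. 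The paper instead argues through minimum-norm solutions confined to a single row space; your version of this step is, if anything, the more robust of the two.
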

The proof is given in App.~\ref{app:no_forgetting}.
The exact definition of angles between tasks is given in App.~\ref{app:principal_angles}.

For instance, the above conditions hold when
$\range(\X_{1}^\top)\subseteq \range(\X_{2}^\top)$
(or vice-versa),
\ie there is no forgetting when tasks have maximum overlap in the row span of inputs.
The third condition aligns with the empirical observation in \citet{ramasesh2020anatomy}, wherein catastrophic forgetting in overparameterized neural networks is small when the tasks are either very similar or very distinct. %
On the other hand, this seemingly contradicts the conclusions in \citet{doan2021NTKoverlap} that similar tasks are \textit{potentially} bad for forgetting.
However, their conclusions are based on a loose upper bound on the forgetting.
In \secref{sec:two_tasks},
we carefully analyze the forgetting dynamics of two tasks seen repeatedly in cycles
and show that increasingly-similar tasks \emph{can} be worse for forgetting, 
but only after multiple cycles.
We elaborate on these connections in \appsref{app:compare_to_doan}.

\subsection{Maximal forgetting cases: can $F_{\tau,\coll}(k) \rightarrow 1$~?}
\label{sec:maximal_forgetting}

Now, we present an adversarial task collection that yields arbitrarily high forgetting:
at the end of learning, the learner almost completely ``forgets'' previously-seen tasks. 
We use this opportunity to build further intuition on two factors causing high forgetting.

Our construction is intuitively based on 
the geometric interpretation from \eqref{eq:geometric} and \figref{fig:contraction},
that the forgetting is the \emph{mean} of the squared residuals from projecting $(\w_k\!-\!\teacher)$ onto the solution spaces of previously-seen tasks,
\ie
$F_{\tau,S}\left(k\right) 
    \!\le\!
    \tfrac{1}{k}
    \tsum_{\itr=1}^{k}
    \bignorm{
    \bigprn{\I\!-\!\mP_{\tau(\itr)}}
    \bigprn{\w_{k}\!-\!\teacher}}^2$.

\vspace{1mm}

\begin{figure}[h!]
  \begin{minipage}[c]{0.58\textwidth}
    \small
    \caption{
    \small
    \textbf{Illustrating the adversarial construction.}
    For the discussed residuals to be large,
    our construction ensures that:
    }
    \vspace{-2mm}
    
    \begin{enumerate}
    \item \textbf{The iterates are kept afar from the $\teacher$.}
    \linebreak
    Since
    ${F_{\tau,S}\left(k\right)
    \le
    \tnorm{\w_{k}\!-\!\w^{\star}}^2
    \le
    \tnorm{\w_{\itr}\,-\,\w^{\star}}^2},
    \forall \itr\!\le\!k$,
    it is important to maintain a large $\tnorm{\w_{t}\!-\!\w^{\star}}$ 
    in all iterations.
    We achieve this by using similar \emph{consecutive} tasks.
    \item 
    \textbf{\emph{Most} solution spaces are orthogonal to the last one.}
    For the averaged residuals to be large, 
    the last 
    $\bigprn{\w_{k}\!-\!\teacher}$
    should be orthogonal to as many previous solution spaces as possible.
    For this, we ``huddle'' most of the tasks near the first one, almost orthogonally to the last.
    \end{enumerate}
    \label{fig:worst_case}
  \end{minipage}
  \hfill
  \begin{minipage}[c]{0.4\textwidth}
    \includegraphics[width=0.95\textwidth]{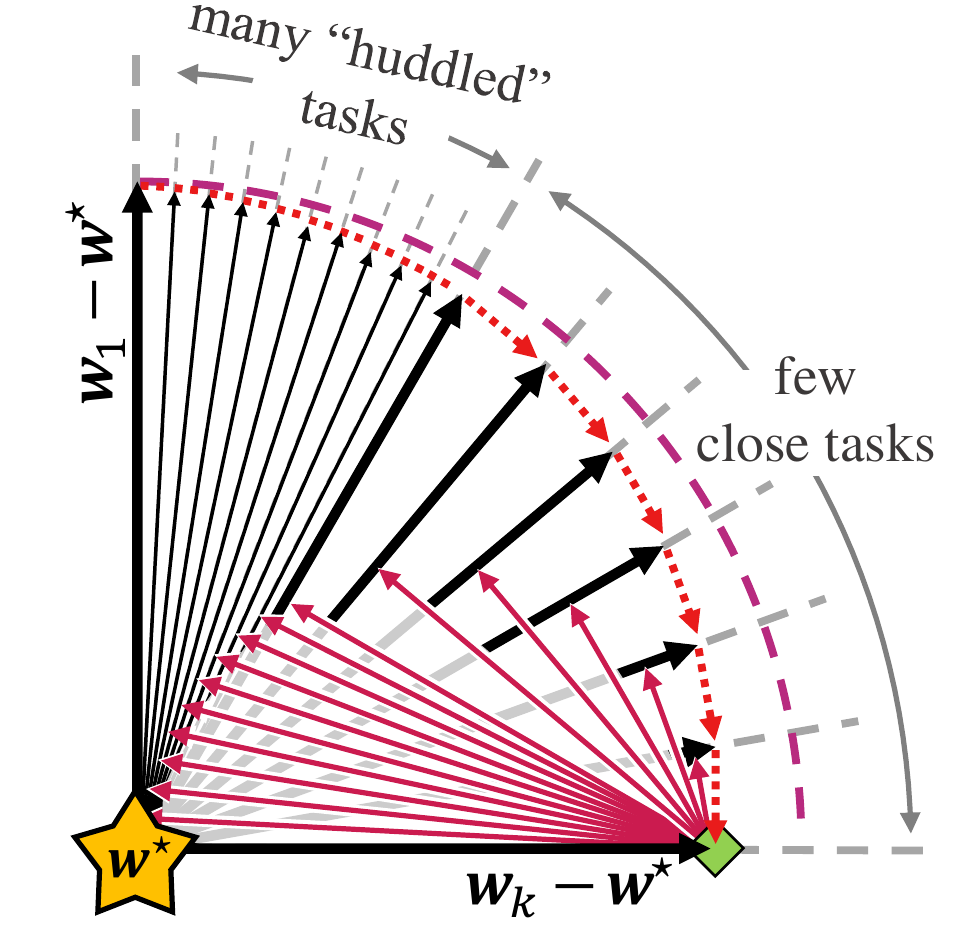}
  \end{minipage}
\end{figure}

\vspace{-3mm}

\begin{theorem}[Forgetting can be arbitrarily bad]
\label{thm:worst_case}
When using the identity ordering 
(\ie $\tau\!\left(\itr\right)\!=\itr$),
thus seeing each task once,
the worst-case forgetting after $k$ iterations is arbitrarily bad,
\ie
%\vspace*{-1mm}
\begin{align*}
    1-\!
    \sup_{
    \substack{
        \coll\in\mathcal{S}_{T=k}
    }
    }
    \!\!
    F_{\tau,\coll}(k)
    \le
    \bigO\left(\nicefrac{1}{\sqrt{k}}\right)~.
\end{align*}
\vspace*{-4mm}
\end{theorem}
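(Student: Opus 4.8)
The plan is to construct an explicit task collection in $\mathcal{S}_{T=k}$ realizing forgetting close to $1$, guided exactly by the two design principles laid out in \figref{fig:worst_case}. I will work directly with the worst-case projection formulation \eqref{eq:forgetting-projection}, choosing each $\mP_m$ to be a rank-$(d-1)$ orthogonal projection (i.e., each task is a single hyperplane in $\reals^d$, the maximal-rank deficient case), and choosing $\teacher$ with $\norm{\teacher}=1$. Concretely, I would place the first $k-1$ solution-space normals at a tiny angle $\epsilon$ from each other, all nearly aligned with some fixed direction, and place the $k$-th normal nearly orthogonal to that direction. Then the composition $\mP_{k}\mP_{k-1}\cdots\mP_{1}$ applied to $\teacher$ stays close to $\teacher$ in norm: by the non-expansiveness chain $\tnorm{\w_k-\teacher}\le\tnorm{\teacher}=1$, and by a quantitative argument that each near-aligned projection loses only $O(\epsilon^2)$ of the norm, so after $k-1$ of them we still retain $1-O(k\epsilon^2)$; the final near-orthogonal projection, applied to a vector nearly orthogonal to its fixed part, also loses little. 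This secures principle (1).

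For principle (2), I need the squared residuals $\tnorm{(\I-\mP_{\tau(\itr)})(\w_k-\teacher)}^2$ to be large — ideally close to $1$ — for most $\itr\in[k]$. Since $\w_k-\teacher$ is, by the above, a vector of norm $\approx 1$ that is nearly orthogonal to the first $k-1$ hyperplane normals (because those normals are all close to the fixed direction, and the near-orthogonal last projection pushed $\w_k-\teacher$ into the orthogonal complement of that direction), the residual $(\I-\mP_m)(\w_k-\teacher)$ is the projection of $\w_k-\teacher$ onto the $m$-th normal, which has magnitude $\approx 1$ for each of those $k-1$ tasks. Only the $k$-th term contributes a small residual (indeed zero, since $\w_k\in\sol{\tau(k)}$). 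Averaging over $k$ tasks gives $F_{\tau,\coll}(k)\ge \tfrac{k-1}{k}\bigl(1-O(k\epsilon^2)\bigr)$, and optimizing the trade-off by taking $\epsilon$ of order $k^{-1}$ (so that $k\epsilon^2 = O(1/k)$, dominating the $1/k$ from the $(k-1)/k$ factor as well) yields $1 - F_{\tau,\coll}(k) \le O(1/k)$, in fact even stronger than the claimed $O(1/\sqrt{k})$; I will state it at the level the theorem asks.

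The main obstacle I anticipate is the bookkeeping for principle (1): showing rigorously that the product of $k-1$ projections onto hyperplanes pairwise at angle $\epsilon$, applied to a well-chosen $\teacher$, retains norm $1-O(k\epsilon^2)$ rather than, say, degrading geometrically. The naive bound $\prod(\cos^2\theta_i)$ would give $(1-\epsilon^2)^{k}$, which is fine for $\epsilon = o(k^{-1/2})$, so actually the crude estimate already suffices for the stated rate — the delicacy is only in making sure the \emph{direction} of $\w_k - \teacher$ ends up (nearly) orthogonal to all the early normals simultaneously, which I would arrange by taking $\teacher$ itself to lie in the common near-intersection of the first $k-1$ hyperplanes and orthogonal to the last normal, so that $\mP_1,\dots,\mP_{k-1}$ act almost trivially and $\mP_k$ does essentially nothing to the relevant component. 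A clean way to finish is to pick a one-parameter family (e.g. normals $\vect{a}_m = \cos((m-1)\epsilon)\,\vect{e}_1 + \sin((m-1)\epsilon)\,\vect{e}_2$ for $m<k$ and $\vect{a}_k = \vect{e}_3$, with $\teacher$ in $\operatorname{span}\{\vect{e}_2,\vect{e}_3,\dots\}$ appropriately), compute the residuals in closed form, and send $\epsilon\to 0$ with $k$ fixed, then let $k\to\infty$; I would defer the full computation to the appendix.
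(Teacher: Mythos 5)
There is a genuine gap, and it sits exactly at your principle (2). With rank-one data matrices (your choice of hyperplane solution spaces), the end-of-training loss on task $m$ is $\tnorm{\X_m(\w_k-\teacher)}^2=\abs{\vect{a}_m^{\top}(\w_k-\teacher)}^2$, the squared component of the final error \emph{along} the normal $\vect{a}_m$; to make it large you need $\w_k-\teacher$ nearly \emph{parallel} to most normals. Your argument first arranges $\w_k-\teacher$ to be nearly orthogonal to $\vect{a}_1,\dots,\vect{a}_{k-1}$ (that is how you preserve its norm) and then asserts its projections onto those same normals have magnitude $\approx 1$ --- this is backwards. In fact your two-cluster design ($\teacher$ in the near-intersection of the first $k-1$ hyperplanes, last normal orthogonal to them) is a small perturbation of a configuration in which every principal angle is $0$ or $\nicefrac{\pi}{2}$, which has \emph{no} forgetting (cf. \thmref{thm:noforgetting} and the sufficient conditions in Appendix~\ref{sec:example_conditions}); its forgetting vanishes, it does not approach $1$. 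The concrete family you sketch fails too: there the last projection $\I-\vect{e}_3\vect{e}_3^{\top}$ acts trivially and the loss on task $m$ comes out as $\alpha^2\cos^{2(k-2)}(\epsilon)\,\sin^2\bigprn{(k-1-m)\epsilon}$, so sending $\epsilon\to0$ at fixed $k$ (your proposed order of limits) drives the forgetting to $0$, and any coupling of $\epsilon$ to $k$ caps the average of $\sin^2$ along a uniform sweep near $\nicefrac{1}{2}$. What is missing is the paper's asymmetric two-block structure: a large ``huddle'' of $k_1\sim 1/\epsilon^2$ tasks that end up at right angles to the final error, plus a short ``bridge'' of $k_2\sim 1/\epsilon$ tasks that rotate the error by a total of $\nicefrac{\pi}{2}$ in small increments, so the cosine product keeps the norm at $1-O(\epsilon)$ while the bridge is only an $O(\epsilon)$ fraction of all tasks. (Working with rank-one tasks instead of the paper's rank-$(d-1)$ tasks is fine in principle, but only with this same huddle-plus-bridge geometry.)

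Your claimed strengthening to $1-F=\bigO(1/k)$ is moreover provably false, which is a useful diagnostic for why the plan cannot be patched cheaply. Since $\X_m\w_m=\y_m$ under the identity ordering, $\tnorm{\X_m(\w_k-\teacher)}\le\tnorm{\w_k-\w_m}\le\sqrt{(k-m)\,\delta}$ with $\delta\triangleq 1-\tnorm{\w_k-\teacher}^2$, by the Pythagorean telescoping identity $\tnorm{\w_m-\teacher}^2-\tnorm{\w_k-\teacher}^2=\sum_{j=m}^{k-1}\tnorm{\w_{j+1}-\w_j}^2$ and Cauchy--Schwarz; combining the per-task bounds $\min\{1-\delta,\ (k-m)\delta\}$ and optimizing over $\delta$ shows $1-F_{\tau,\coll}(k)\ge \Omega(1/\sqrt{k})$ for \emph{every} collection in $\Coll$. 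So the $1/\sqrt{k}$ rate in the theorem is the true worst-case order: the recently fitted tasks necessarily have small loss, and rotating the error away from them costs exactly the bridging tasks your construction omits.
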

The exact construction details and the proof are given in \appsref{app:maximal_forgetting}.

\pagebreak

By now, we understand that under arbitrary task orderings, 
there exist task sequences where
the learner almost \emph{completely} forgets previously-learned expertise
and forgetting is indeed \emph{catastrophic}.
In the sections to follow, we show that cyclic and random orderings \emph{do not} suffer from this flaw.

\section{Cyclic task orderings}
\label{sec:cyclic}
We again consider collections of $T$ tasks $\left\{\big(\X_{m},\y_{m}\big)\right\}_{m=1}^T$,
but now we study the forgetting when tasks are presented in a cyclic ordering $\tau$,
\ie
$
\tau\left(\itr\right)=
\big(\left(\itr-1\right)\mathrm{mod}~ T~\big)+1,~
\forall \itr\!\in\!\naturals
$.
\linebreak
For example, suppose we want to train a pedestrian detector continuously during different times of the day (morning, noon, evening, and night), so that the task order forms a fixed cycle.
\linebreak 
Such cyclic settings also arise in search engines, e-commerce, and social networks,
where tasks (\ie distributions)
are largely influenced by events that recur
either weekly (\eg weekdays vs. weekends),
monthly (\eg paydays),
annually (\eg holidays),
and so on.

\bigskip

Under cyclic orderings,
the forgetting from \eqref{eq:linear_forgetting}
after $n$ cycles, becomes
\vspace{-2mm}
\begin{align}
\label{eq:cyclic-forgetting}
F_{\tau,\coll}(k=nT)
=
\frac{1}{T}
\sum_{m=1}^{T}
\bignorm{
\X_m
\bigprn{\mP_T \,\smallcdots \mP_1}^{n}
\teacher
}^2
\le 
\frac{1}{T}
\sum_{m=1}^{T}
\bignorm{
(\I\!-\!\mP_{m})
\bigprn{\mP_T \,\smallcdots \mP_1}^{n}
}^2.
\end{align}

\subsection{Warm up: Exact forgetting analysis with $T=2$ tasks}
\label{sec:two_tasks}

Exploiting the connection we made to the field of alternating projections,
we first analyze the convergence to the minimum norm offline solution
%from \eqref{eq:minimum-norm-solution}
in terms of the \emph{Friedrichs angle} 
\citep{friedrichs1937angles}
between the tasks,
\ie their minimal \emph{non-zero} principal angle 
$\theta_F\!\triangleq\!
\min_{i: \theta_i\neq0} \theta_i > 0$ 
(as explained in \appsref{app:principal_angles}).

\begin{theorem}[Convergence to the minimum norm offline solution]
\label{thm:2_tasks_distance}
For any task collection %${\coll\!\in\!\mathcal{S}_{T=2}}$ 
of two distinct tasks
fitted in a~cyclic ordering $\tau$, 
the distance from the offline solution
%(defined in \eqref{eq:minimum-norm-solution})
after $k\!=\!2n$ iterations ($n$ cycles) is
\emph{tightly} upper bounded by
$\big\Vert
{\w_{k} - \teacher}
\big\Vert^2
\le
\left(
\cos^2 \theta_F\right)^{k-1}\norm{\teacher}^2$,
where $\theta_F$
is the Friedrichs angle
between the given tasks, as defined above.
\end{theorem}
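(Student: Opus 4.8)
The plan is to reduce the two-task cyclic iteration to a product of two orthogonal projections and invoke the classical Kayalar--Weinert-type identity for the norm of $(\mP_2\mP_1)^n$ minus the projection onto the intersection. Recall from \eqref{total_contraction} that $\w_k-\teacher=(\mP_2\mP_1)^n\teacher$ when $k=2n$. Since $\teacher$ lies in $\sol{1}\cap\sol{2}=\teacher+\bigprn{\kernel(\X_1)\cap\kernel(\X_2)}$, and in fact $\teacher-\mP_{1\cap 2}\teacher$ is exactly the component of $\teacher$ orthogonal to $\kernel(\X_1)\cap\kernel(\X_2)$ (here $\mP_{1\cap 2}$ is the projection onto that intersection subspace), I would first observe that $(\mP_2\mP_1)^n\teacher=(\mP_2\mP_1)^n(\teacher-\mP_{1\cap2}\teacher)+\mP_{1\cap2}\teacher$. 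The second term is fixed by both projections; the interesting dynamics live entirely on the orthogonal complement of the intersection.

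Next I would restrict attention to the subspace $V\triangleq\bigprn{\kernel(\X_1)\cap\kernel(\X_2)}^{\perp}$, on which the restricted projections $\bar\mP_1,\bar\mP_2$ have trivial common fixed space. The key quantitative input is the sharp operator-norm bound $\bignorm{(\mP_2\mP_1)^n-\mP_{1\cap2}}^2\le(\cos^2\theta_F)^{2n-1}$, which is exactly the Kayalar--Weinert refinement of von Neumann's alternating-projection theorem: on $V$ the product of the two projections has spectral radius $\cos\theta_F$, and the precise power $2n-1$ (rather than $2n$) comes from the fact that one of the $n$ intermediate factors can be ``split'' — concretely, $(\mP_2\mP_1)^n=(\mP_2\mP_1)^{n-1}\mP_2\cdot(\mP_2\mP_1)\mP_1$-type manipulations let one bound by $\cos^{2n-1}\theta_F$. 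I would then apply this to $\teacher-\mP_{1\cap2}\teacher$:
\begin{align*}
\bignorm{\w_k-\teacher}^2
&=\bignorm{(\mP_2\mP_1)^n(\teacher-\mP_{1\cap2}\teacher)}^2
=\bignorm{\bigprn{(\mP_2\mP_1)^n-\mP_{1\cap2}}(\teacher-\mP_{1\cap2}\teacher)}^2\\
&\le(\cos^2\theta_F)^{2n-1}\bignorm{\teacher-\mP_{1\cap2}\teacher}^2
\le(\cos^2\theta_F)^{k-1}\norm{\teacher}^2,
\end{align*}
where the last step uses $\bignorm{\teacher-\mP_{1\cap2}\teacher}\le\norm{\teacher}$ since $\mP_{1\cap2}$ is an orthogonal projection.

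For tightness, I would exhibit a collection saturating every inequality: take $d=2$, rank-one tasks whose null spaces are two lines through the origin meeting at angle $\theta_F$ (so $\kernel(\X_1)\cap\kernel(\X_2)=\{0\}$, making $\mP_{1\cap2}=0$), all singular values equal to $1$ (to saturate \eqref{eq:geometric}), and $\teacher$ chosen as the unit vector that is the worst-case input for the power $(\mP_2\mP_1)^n$ — i.e.\ aligned with the appropriate singular direction. A direct $2\times2$ computation then shows $\bignorm{\w_k-\teacher}^2=(\cos^2\theta_F)^{2n-1}$ exactly. The main obstacle I anticipate is getting the exponent exactly right — distinguishing $(\cos^2\theta_F)^{k-1}$ from the naive $(\cos^2\theta_F)^{k}$ or $(\cos^2\theta_F)^{k/2}$ — which requires care in how one splits the product of $2n$ projection factors and in identifying which extremal $\teacher$ attains the bound; this is where I would lean most heavily on the precise statement of the Kayalar--Weinert identity rather than the cruder von Neumann convergence rate.
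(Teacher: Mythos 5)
Your overall route is the paper's route: reduce via \eqref{total_contraction} to $\w_k-\teacher=(\mP_2\mP_1)^n\teacher$, invoke the Kayalar--Weinert result $\bignorm{(\mP_2\mP_1)^n-\mP_{1\cap 2}}^2=(\cos^2\theta_F)^{2n-1}$, and saturate with a suitable $\teacher$ (your explicit $2$-d rank-one construction for tightness is in fact more detailed than the paper's one-line remark, and it is fine). However, there is one genuine gap, and it sits at exactly the step the paper is careful about. Your first displayed equality, $\bignorm{\w_k-\teacher}^2=\bignorm{(\mP_2\mP_1)^n(\teacher-\mP_{1\cap2}\teacher)}^2$, silently discards the component $\mP_{1\cap2}\teacher$. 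By your own decomposition, the correct identity is $\bignorm{\w_k-\teacher}^2=\bignorm{(\mP_2\mP_1)^n(\teacher-\mP_{1\cap2}\teacher)}^2+\bignorm{\mP_{1\cap2}\teacher}^2$ (the two summands are orthogonal because $\bigl(\kernel(\X_1)\cap\kernel(\X_2)\bigr)^{\perp}$ is invariant under both projections), and the extra term does not decay with $k$. So the bound you claim fails for large $k$ whenever the null spaces intersect nontrivially and the offline solution has a component in that intersection; nothing in your argument rules this out, since membership of $\teacher$ in $\sol{1}\cap\sol{2}$ says nothing about $\mP_{1\cap2}\teacher$.

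What closes the gap is the minimum-norm property of $\teacher$, which you never invoke: being the minimum $\ell^2$-norm offline solution, $\teacher$ lies in $\range(\X_1^\top)+\range(\X_2^\top)$, the orthogonal complement of $\kernel(\X_1)\cap\kernel(\X_2)$, hence $\mP_{1\cap2}\teacher=\0$. This is precisely the paper's key observation; with it, your chain becomes $\bignorm{\w_k-\teacher}^2=\bignorm{\bigl((\mP_2\mP_1)^n-\mP_{1\cap2}\bigr)\teacher}^2\le(\cos^2\theta_F)^{k-1}\norm{\teacher}^2$ and the rest of your argument (including the exponent $2n-1=k-1$, which you correctly attribute to the Kayalar--Weinert statement rather than to the crude von Neumann rate) goes through. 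Add that one line and your proof coincides with the paper's.
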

\begin{proof}
Plugging in
the cyclic ordering definition into
the recursive form of \eqref{total_contraction}, 
we obtain
${\bignorm{
{\w_{k}\!-\!\teacher}}
=
\bignorm{\mP_{\tau(k)}\cdots\mP_{\tau(1)}
\teacher}
=\bignorm{
\tprn{\mP_2 \mP_1}^{n}
\teacher}}
$.

A known alternating projection result by
\citet{kayalar1988error} (Theorem~2 therein) states that
${\bignorm{
\tprn{\mP_2 \mP_1}^{n}-\mP_{1\cap\,2}}^{2}
=
\left(\cos^2 \theta_F\right)^{k-1}}$,
where in our context,
$\mP_{1\cap\,2}$ projects onto 
the null spaces' intersection, \ie 
$\kernel({\X_1})\cap\kernel({\X_2})$.
Since the \emph{minimum norm} solution $\teacher$ must lie in 
$\range\!\prn{\X_1^\top}\cup\range\!\prn{\X_2^\top}$,
then by properties of orthogonal complements we have
${\mP_{1\cap\,2}\teacher=\0}$.
Then, 
we see that
$
\tnorm{{\w_{k} \!-\! \teacher}}^2
=
\bignorm{\bigprn{\mP_2 \mP_1}^{n}\teacher}^{2}
\!\!=\!
\bignorm{\bigprn{\tprn{\mP_2\mP_1}^{n}
\!-\!
\mP_{1\cap\,2}}\teacher}^{2}$,
and conclude:
$${
\tnorm{{\w_{k} \!-\! \teacher}}^2
\!=\!
\bignorm{\bigprn{\tprn{\mP_2\mP_1}^{n}
\!-\!
\mP_{1\cap\,2}}\teacher}^{2}
\!\le\!
\bignorm{{\tprn{\mP_2\mP_1}^{n}
\!-\!
\mP_{1\cap\,2}}}^{2}
\bignorm{\teacher}^{2}
=\!
\left(\cos^2 \theta_F\right)^{k-1}
\!
\bignorm{\teacher}^{2}
}\,.$$
Clearly, a carefully chosen $\teacher$
(induced by $\y_1,\y_2$)
can saturate the inequality, making it tight.
\end{proof}
Note that the rate of convergence to $\w^{\star}$ can be arbitrarily slow when
the Friedrichs angle ${\theta_F\to 0}$. 
Importantly, this means that there can be no data-independent convergence guarantees to $\teacher$.
\linebreak
One might think that this implies that the forgetting
(\ie the residuals)
is also only trivially bounded, however a careful analysis shows that this is not the case.

\pagebreak

In contrast to the above \Thmref{thm:2_tasks_distance}, we now show that the forgetting 
\emph{is} non-trivially bounded.
\begin{lemma}[Angles' effect on forgetting $T=2$]
\label{lem:2_tasks_forgetting}
For any task collection $\coll\in\mathcal{S}_{T=2}$ of two tasks,
the forgetting after $k=2n$ iterations (\ie $n$ cycles) is tightly upper bounded by
\begin{align*}
\begin{split}
F_{\tau, \coll}\left(k\right)
\le~
&
\frac{1}{2}
\max_{i}
\left\{
\left(\cos^2 \theta_i\right)^{k-1}
\left(1-\cos^2 \theta_i\right)
\right\}~,
\end{split}
\end{align*}
where 
$\{\theta_i\}_i \subseteq
\left(0, \tfrac{\pi}{2}\right]$
are the non-zero principal angles between the two tasks in $\coll$.
Moreover, the above inequality saturates when all non-zero singular values of the first task
(\ie of $\X_{1}$)
are $1$s.
\end{lemma}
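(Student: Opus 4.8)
The plan is to collapse the two‑term cyclic forgetting expression into a single operator norm and then compute that norm exactly by putting the pair of projections $\mP_1,\mP_2$ into the classical two‑subspace canonical form. Specializing the equality in \eqref{eq:cyclic-forgetting} to $T=2$ and $k=2n$ gives $F_{\tau,\coll}(k)=\tfrac12\bignorm{\X_1(\mP_2\mP_1)^n\teacher}^2+\tfrac12\bignorm{\X_2(\mP_2\mP_1)^n\teacher}^2$. The second term vanishes because $\X_2\mP_2=\X_2-\X_2\X_2^{+}\X_2=\0$ and $\mP_2$ is the leftmost factor of $(\mP_2\mP_1)^n$, so $F_{\tau,\coll}(k)=\tfrac12\bignorm{\X_1(\mP_2\mP_1)^n\teacher}^2$. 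Applying the inequality underlying \eqref{eq:geometric}, namely $\bignorm{\X_1\vu}^2\le\bignorm{(\I-\mP_1)\vu}^2$ (Assumption~\ref{assume:bounded_data}, with equality precisely when every non-zero singular value of $\X_1$ equals $1$), together with $\bignorm{\teacher}\le1$ (Assumption~\ref{assume:realizability}), reduces the claim to showing $\bignorm{(\I-\mP_1)(\mP_2\mP_1)^n}^2=\max_i(\cos^2\theta_i)^{k-1}(1-\cos^2\theta_i)$.

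\textbf{Canonical-form computation.} Decompose $\reals^d$ orthogonally into the four ``trivial'' subspaces $\kernel(\X_1)\cap\kernel(\X_2)$, $\kernel(\X_1)\cap\kernel(\X_2)^\perp$, $\kernel(\X_1)^\perp\cap\kernel(\X_2)$, $\kernel(\X_1)^\perp\cap\kernel(\X_2)^\perp$, plus a ``generic'' part which is an orthogonal direct sum of $2$-dimensional blocks, one for each principal angle $\theta_i\in(0,\tfrac\pi2)$, on which $\mP_1$ and $\mP_2$ act as rank-one projections onto lines meeting at angle $\theta_i$ (the classical two-projection normal form; the non-zero principal angles of the pair $\kernel(\X_1),\kernel(\X_2)$ agree with those of $\range(\X_1^\top),\range(\X_2^\top)$, see \appsref{app:principal_angles}). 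Each summand is invariant under $\mP_1$ and $\mP_2$, hence under $(\I-\mP_1)(\mP_2\mP_1)^n$, so that operator is block-diagonal and its squared norm equals the largest squared block norm. It annihilates every trivial summand: $\I-\mP_1=\0$ on $\kernel(\X_1)\cap\kernel(\X_2)$, while $\mP_2\mP_1=\0$ (hence $(\mP_2\mP_1)^n=\0$ for $n\ge1$) on the other three. On the $i$-th generic block, working in coordinates where $\mP_1$ projects onto the first axis and $\mP_2$ onto the line at angle $\theta_i$, a direct computation gives $(\mP_2\mP_1)^n=(\cos^2\theta_i)^{n-1}\mP_2\mP_1$ and then $\bignorm{(\I-\mP_1)(\mP_2\mP_1)^n}^2=(\cos^2\theta_i)^{2n-2}\cos^2\theta_i\sin^2\theta_i=(\cos^2\theta_i)^{k-1}(1-\cos^2\theta_i)$, using $k=2n$. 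Maximizing over the blocks (angles equal to $\tfrac\pi2$, if any, contribute $0$) yields the required identity, and combining with the reduction above gives the stated bound.

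\textbf{Tightness and the main obstacle.} Saturation follows by tracking equalities: take $\X_1$ with all non-zero singular values equal to $1$ (so the inequality from \eqref{eq:geometric} is an equality) and choose labels $\y_m=\X_m\teacher$ with $\teacher$ a unit top right singular vector of $(\I-\mP_1)(\mP_2\mP_1)^n$; from the block picture $\teacher$ can be taken on the line $\kernel(\X_1)\cap(\text{maximizing block})$, which is orthogonal to $\kernel(\X_1)\cap\kernel(\X_2)$, so $\teacher\in\range(\X_1^\top)+\range(\X_2^\top)$ is indeed the minimum-norm offline solution, $\bignorm{\teacher}=1$, and $F_{\tau,\coll}(k)$ attains the bound. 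I expect the real work to be in the canonical-form step: establishing the normal form for a pair of orthogonal projections, checking that $(\I-\mP_1)(\mP_2\mP_1)^n$ kills every trivial summand, and carrying out the $2\times2$ computation with the exponent bookkeeping ($2n-1=k-1$). The extra $(1-\cos^2\theta_i)$ factor that emerges there is precisely what distinguishes the forgetting from the Kayalar--Weinert convergence rate of \Thmref{thm:2_tasks_distance}, and it is why a non-trivial bound survives even though the distance to $\teacher$ may contract arbitrarily slowly.
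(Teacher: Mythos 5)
Your proposal is correct, and it reaches the same intermediate target as the paper (reduce to the operator norm $\bignorm{(\I-\mP_1)(\mP_2\mP_1)^n}^2$ after discarding the second task's term and using Assumptions~\ref{assume:bounded_data}--\ref{assume:realizability}), but it computes that norm by a genuinely different route. The paper's \lemref{lem:principal_form} proceeds algebraically: writing $\mP_j=\V_j^{\perp}\V_j^{\perp\top}$, using $\norm{\A}^2=\norm{\A^\top\A}$ to reduce everything to powers of the Gram matrix $\V_1^{\perp\top}\V_2^{\perp}\V_2^{\perp\top}\V_1^{\perp}$, and then invoking the cited result that its eigenvalues are $\cos^2\theta_i$, which yields $\norm{\mLambda^{2n-1}-\mLambda^{2n}}=\max_i(\cos^2\theta_i)^{2n-1}(1-\cos^2\theta_i)$. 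You instead invoke the classical two-subspace (Halmos) canonical form, check that $(\I-\mP_1)(\mP_2\mP_1)^n$ annihilates the four trivial summands, and do the explicit $2\times2$ block computation $(\mP_2\mP_1)^n=(\cos^2\theta_i)^{n-1}\mP_2\mP_1$; your exponent bookkeeping and the identification of block angles with the non-zero principal angles (via \clmref{clm:same_angles}) are correct, and the $\theta_i=\pi/2$ directions indeed sit in the trivial part and contribute $0$, consistently with the formula. Each approach leans on one external structural fact (the eigenvalue--principal-angle relation in the paper, the two-projection normal form for you), so they are comparable in rigor; your block picture is more geometric and makes the contraction-versus-residual mechanism visible, while the paper's Gram-matrix argument avoids discussing invariant subspaces. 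On tightness you are in fact slightly more complete than the paper: the paper only verifies that unit singular values of $\X_1$ turn the $\norm{\X_1\vu}\le\norm{(\I-\mP_1)\vu}$ step into an equality, leaving the choice of $\teacher$ implicit, whereas you exhibit $\teacher$ as a unit top right singular vector lying in $\kernel(\X_1)$ inside the maximizing block, check it is orthogonal to $\kernel(\X_1)\cap\kernel(\X_2)$ (hence genuinely the minimum-norm offline solution for the induced labels), and thereby attain the bound exactly.
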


In contrast to $\cos^2 \theta_F$ that bounds the distance to the offline solution (see \thmref{thm:2_tasks_distance}) and can be arbitrarily close to $1$, the quantities $\left(\cos^2 \theta_i\right)^{k-1}
\left(1-\cos^2 \theta_i\right)$ in \lemref{lem:2_tasks_forgetting} 
are upper bounded \linebreak
\emph{uniformly} for any $\theta_i$, which allows deriving a data-independent expression for the worst-case forgetting in the next theorem.
\begin{theorem}[Worst-case forgetting when $T=2$]
\label{thm:2_tasks_worst_forgetting}
For~a~cyclic ordering $\tau$ of two tasks, 
the worst-case forgetting after $k=2n$ iterations
(\ie $n$ cycles),
is
\vspace{-2mm}
\begin{align*}
    \sup_{
    \substack{
        \coll\in\mathcal{S}_{T=2}
    }
    }
    \!\!
    F_{\tau,\coll}
    \left(k\right)
    =
    \frac{1}{2e\left(k-1\right)} -
    \frac{1}{4e\left(k-1\right)^2} +
    \bigO\left(\frac{1}{k^3}\right)~.
\end{align*}
\end{theorem}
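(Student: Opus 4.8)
The plan is to reduce the worst-case forgetting to a one-dimensional optimization problem and then solve it asymptotically. By \lemref{lem:2_tasks_forgetting}, for any two-task collection in $\mathcal{S}_{T=2}$ we have
\[
F_{\tau,\coll}(k) \le \frac{1}{2}\max_i \left\{ (\cos^2\theta_i)^{k-1}(1-\cos^2\theta_i)\right\},
\]
and this bound is tight (saturated when all non-zero singular values of $\X_1$ are $1$, and when the collection has a single non-zero principal angle chosen adversarially). Writing $c = \cos^2\theta \in [0,1)$, the supremum over task collections therefore equals
\[
\frac{1}{2}\sup_{c\in[0,1)} g(c), \qquad g(c) \triangleq c^{k-1}(1-c).
\]
So the first step is simply to justify carefully that $\sup_{\coll\in\mathcal{S}_{T=2}} F_{\tau,\coll}(k) = \tfrac12 \max_{c\in[0,1)} c^{k-1}(1-c)$, citing \lemref{lem:2_tasks_forgetting} for the upper bound and constructing an explicit saturating collection (one task with a single principal angle $\theta$ with $\cos^2\theta = c^{\star}$, the other a coordinate subspace, all singular values $1$) for the matching lower bound.

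The second step is the elementary calculus: maximize $g(c) = c^{k-1}(1-c)$ over $[0,1)$. Differentiating gives $g'(c) = (k-1)c^{k-2} - k c^{k-1} = c^{k-2}\big((k-1) - kc\big)$, so the maximizer is $c^{\star} = \tfrac{k-1}{k}$, with optimal value
\[
g(c^{\star}) = \left(\frac{k-1}{k}\right)^{k-1}\cdot \frac{1}{k}.
\]
Hence $\sup_{\coll} F_{\tau,\coll}(k) = \tfrac{1}{2k}\left(1-\tfrac1k\right)^{k-1}$.

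The third and final step is the asymptotic expansion of $\tfrac{1}{2k}\left(1-\tfrac1k\right)^{k-1}$ in terms of $(k-1)$, to match the stated form $\tfrac{1}{2e(k-1)} - \tfrac{1}{4e(k-1)^2} + \bigO(k^{-3})$. I would take the logarithm, $\log\!\big((1-\tfrac1k)^{k-1}\big) = (k-1)\log(1-\tfrac1k)$, use $\log(1-x) = -x - \tfrac{x^2}{2} - \tfrac{x^3}{3} - \cdots$ with $x = 1/k$, and collect terms to get $(k-1)\log(1-\tfrac1k) = -1 + \tfrac{1}{2k} + \bigO(k^{-2})$; exponentiating yields $(1-\tfrac1k)^{k-1} = e^{-1}\big(1 + \tfrac{1}{2k} + \bigO(k^{-2})\big)$. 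Then multiply by $\tfrac{1}{2k}$ and re-expand $\tfrac1k$ in powers of $\tfrac{1}{k-1}$ (using $\tfrac1k = \tfrac{1}{k-1} - \tfrac{1}{(k-1)^2} + \bigO(k^{-3})$) to land exactly on $\tfrac{1}{2e(k-1)} - \tfrac{1}{4e(k-1)^2} + \bigO(k^{-3})$.

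The only mildly delicate point — the ``main obstacle'' — is the bookkeeping in this last asymptotic step: one must be careful to carry enough terms in both the $\log(1-1/k)$ expansion and the change of variable between $1/k$ and $1/(k-1)$ so that the $1/(k-1)^2$ coefficient comes out correctly as $-\tfrac{1}{4e}$ rather than being contaminated by truncation error. Everything else (the reduction via \lemref{lem:2_tasks_forgetting}, the single-variable maximization) is routine.
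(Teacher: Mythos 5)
Your proposal is correct and follows essentially the same route as the paper: reduce to maximizing $\tfrac12 c^{k-1}(1-c)$ over $c=\cos^2\theta$ via the tight bound of \lemref{lem:2_tasks_forgetting}, find the maximizer $c^\star=\tfrac{k-1}{k}$ (the paper phrases this as $\sin^2\theta = \tfrac{1}{k}$), and expand $\tfrac{1}{2k}(1-\tfrac1k)^{k-1}$ asymptotically. Your expansion bookkeeping (expanding in $1/k$ and then converting to powers of $1/(k-1)$, versus the paper expanding directly in $x=k-1$) checks out and yields the stated coefficients $\tfrac{1}{2e(k-1)}$ and $-\tfrac{1}{4e(k-1)^2}$.
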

The proofs for both 
\lemref{lem:2_tasks_forgetting} and
\thmref{thm:2_tasks_worst_forgetting}
are given in \appsref{app:two_tasks}.

\paragraph{Demonstration.}
\figref{fig:two_tasks_super} demonstrates
our analysis for the worst-case forgetting 
on $T\!=\!2$ tasks.
We consider a simplistic case where both tasks are of rank $d\!-\!1$,
\ie 
$\rank(\X_1)\!=\!\rank(\X_2)\!=\!d\!-\!1$,
thus having solution spaces of rank $1$ with
a straightforward single angle $\theta$ between them.

\begin{figure}[h!]
    \subfigure[\small 
    Effect of task similarity on forgetting.] {
        \includegraphics[width=.48\columnwidth]{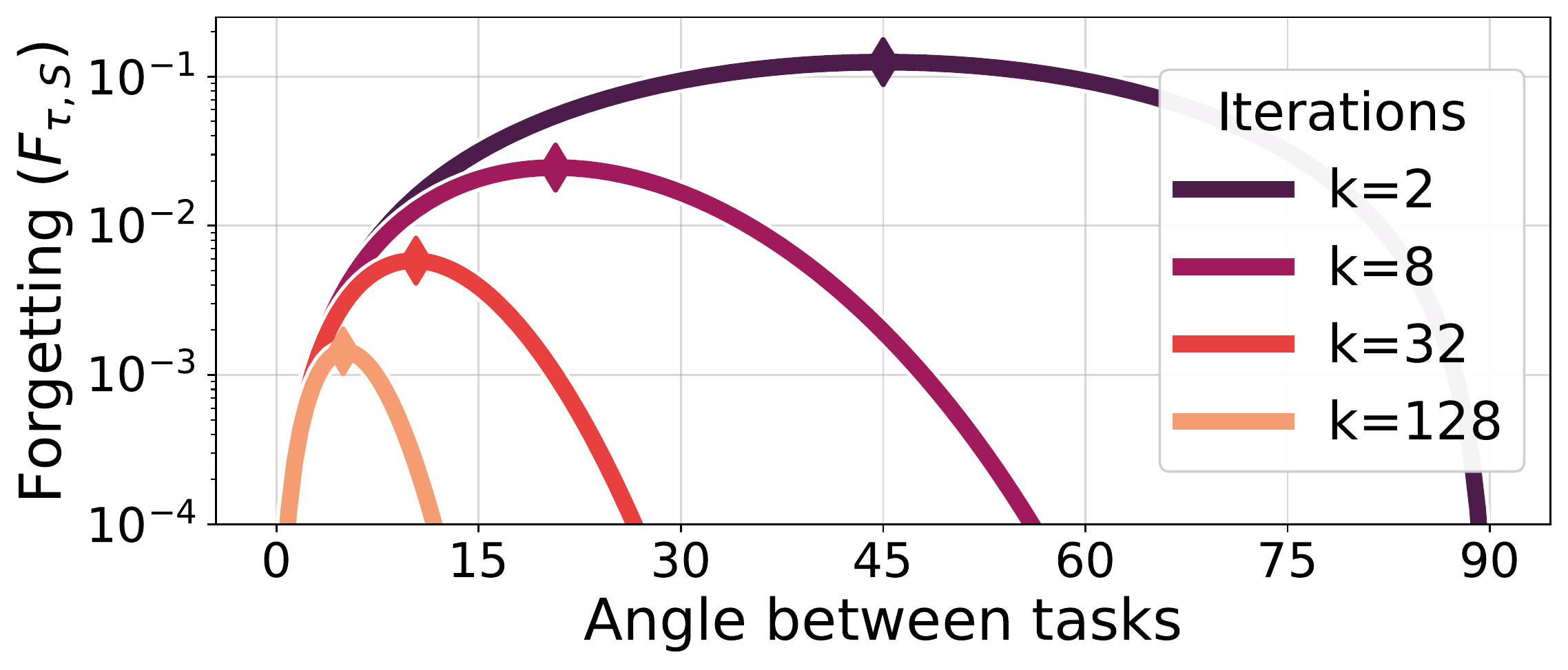}
        \label{fig:exact_2_tasks}
    }
    \hfill
    \subfigure[\small A sharp uniform bound for the forgetting.]
    {
        \includegraphics[width=.48\columnwidth]{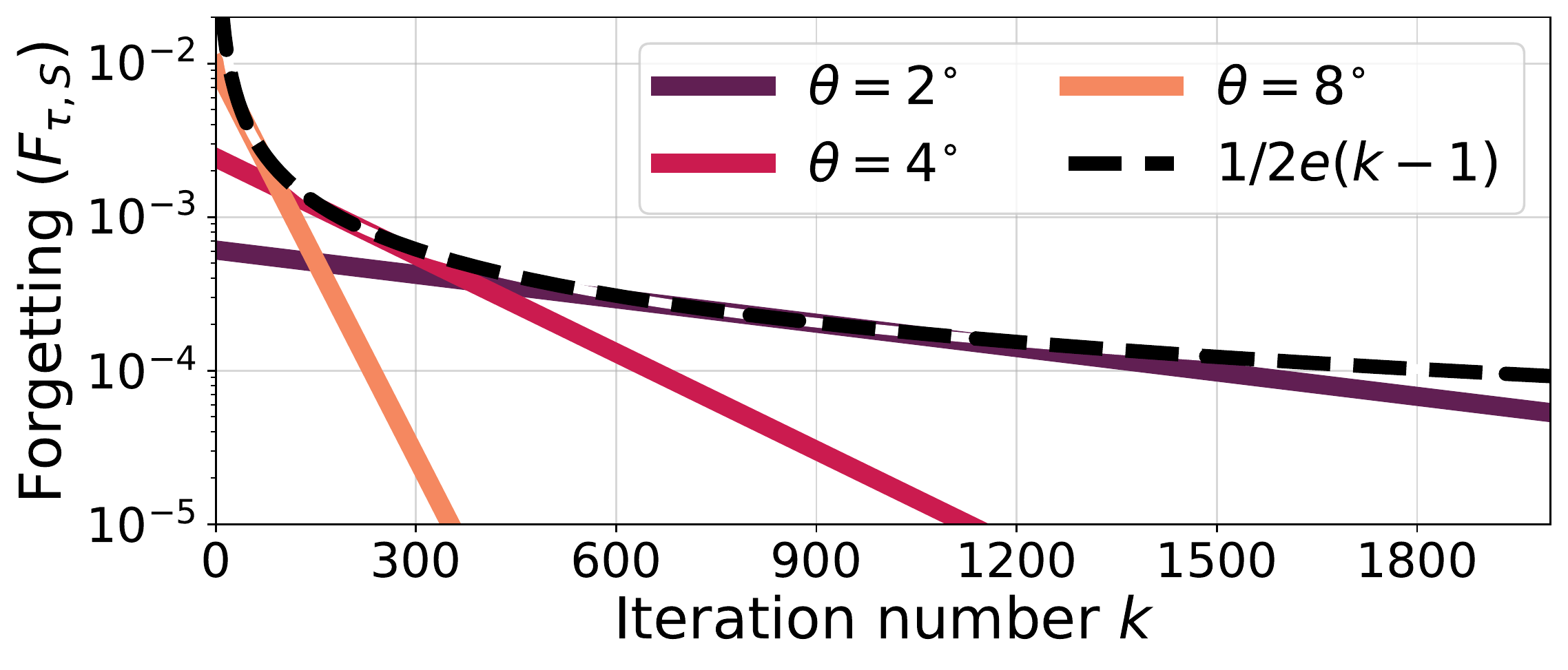}
        \label{fig:2tasks_bound}
    }
    \vspace{-1mm}
    \caption{
    \small
    {Demonstration of forgetting (\lemref{lem:2_tasks_forgetting}) and worst-case forgetting (\Thmref{thm:2_tasks_worst_forgetting}) for $T=2$.}
    \label{fig:two_tasks_super}
    }
    %\vspace{-2mm}
\end{figure}

\figref{fig:exact_2_tasks}
demonstrates the
analytical effect of task angles,
\ie 
$(\cos^2 \theta)^{k-1}
(1\!-\!\cos^2 \theta)$
from 
\lemref{lem:2_tasks_forgetting}.
After one cycle ($k\!=\!2$), 
%we observe that 
small and nearly-orthogonal angles induce low forgetting, 
while intermediate angles are troublesome.
However, as $k$ increases, 
the angle that maximizes forgetting goes to zero.
Importantly,
we find that the effect of task similarity depends on the number of cycles!

\figref{fig:2tasks_bound} 
demonstrates the worst-case analysis of \thmref{thm:2_tasks_worst_forgetting}. 
As tasks become \emph{more} similar,
\ie $\theta$ decreases, the initial forgetting is smaller, yet convergence is slower since the contraction is small at each iteration. 
Conversely, larger angles lead to larger initial forgetting but also to faster convergence due to a more significant contraction.

Our findings after seeing each task \emph{once} (\ie $k\!=\!2$) resemble findings from \citet{lee2021taskSimilarity} 
that intermediate task similarity causes the most forgetting
(however, their setup and notion of similarity are different).
Like we mentioned in \secref{sec:no_forgetting},
our analysis contradicts a corollary from \citet{doan2021NTKoverlap}
implying a higher risk of forgetting when two tasks are \emph{more} aligned.
This discrepancy stems from an upper bound in \citep{doan2021NTKoverlap} 
being looser than the tight bounds we derive
(see \appsref{app:compare_to_doan}).

\pagebreak

\subsection{Main Result: Worst-case forgetting with $T\ge 3$ tasks}
\label{sec:forgetting_many_tasks}

For the general cyclic case, 
we provide two upper bounds --
a dimension-dependent bound, and more importantly, a dimension-independent one.
Both follow a power-law \wrt  the iteration number $k$.
\begin{theorem}[Worst-case forgetting when $T\ge3$]
\label{thm:worst_case_T3}
For~any number of tasks $T\ge 3$ under a cyclic ordering $\tau$, the worst-case forgetting
after $k=nT\ge T^2$ iterations (\ie $n\ge T$ cycles), is
\vspace{-.1mm}
\begin{align*}
    \frac{T^2}{24ek}
    ~~\le
    \!\!\!\!\!\!\!\!\!\!
    \sup_{
    \substack{
        \coll\in\mathcal{\coll}_{T\ge 3}: \\
        \forall m:\,\rank(\X_m) \le \maxrank
    }
    }
    \!\!\!\!\!\!\!\!\!\!\!\!\!\!\!\!
    F_{\tau,\coll}(k)
    ~~\le~~
    {
        \min\left\{
        \frac{T^2}{\sqrt{k}}
        ,~~
        \frac{T^2\prn{d-\maxrank}}{2k}
        \right\} 
    }~.
\end{align*}
%
% where $\maxrank$ is the highest rank of any data matrix $\X_{m}$.

Moreover, 
if the cyclic operator $\bigprn{\mP_T \cdots \mP_1}$ from \eqref{eq:cyclic-forgetting} is symmetric (\eg in a \emph{back-and-forth} setting  where tasks $m$ and $(T\!-\!m\!+\!1)$ are identical $\forall m\!\in\!\cnt{T}$),
then the worst-case forgetting is \emph{sharply}
${
\sup_{
\substack{
    \coll\in\mathcal{\coll}_{T\ge 3}
}
}
F_{\tau,\coll}(k)
=
\Theta\left(\nicefrac{T^2}{k}\right)
}
$.
\end{theorem}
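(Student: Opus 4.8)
The plan is to analyze the worst-case forgetting $\sup_\coll F_{\tau,\coll}(k) = \sup_{\mP_1,\dots,\mP_T} \tfrac{1}{k}\sum_{m=1}^T \bignorm{(\I-\mP_m)(\mP_T\cdots\mP_1)^n}^2$ via the spectral theory of the product of projections. Write $\Q \triangleq \mP_T\cdots\mP_1$ for the cyclic operator. The starting point is a classical fact (going back to Halperin / Kayalar--Weinert and refined in the alternating-projections literature) that $\|\Q^n\|^2$ decays like $c^{2n}$ where $c = \|\Q\|$ is governed by the Friedrichs-type angle of the configuration, but crucially $\|\Q^n\|$ can be bounded \emph{uniformly} by $\cos$-type quantities that, when multiplied by the ``residual gap'' $(\I-\mP_m)$, produce terms of the form $(\cos^2\theta)^{cn}(1-\cos^2\theta)$ — exactly the shape that was handled in \lemref{lem:2_tasks_forgetting}. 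So the first step is to reduce the $T$-task residual to such a scalar expression: using $\bignorm{(\I-\mP_m)\Q^n} \le \bignorm{(\I-\mP_m)\Q}\cdot\bignorm{\Q^{n-1}}$ (or a more careful telescoping that distributes the $n$ factors), and then optimizing the scalar bound $x^{a}(1-x)$ over $x\in[0,1]$ where the exponent $a$ is proportional to $n/T$ (because one ``loses'' roughly a factor $T$ in converting a bound on $\|\Q^n\|$ into the per-iteration contraction, since $\Q$ itself is a product of $T$ projections). The maximum of $x^a(1-x)$ is $\Theta(1/a)$, and with $a \sim n/T = k/T^2$ this gives the $\Theta(T^2/k)$ scaling; summing the $T$ residual terms and dividing by $k$ keeps the $T^2/k$ order (the $T$ from the sum and the $1/k$ cancel against an internal $1/T$, leaving $T^2/k$ after tracking the exponent loss carefully).

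For the two upper bounds in the displayed inequality I would argue as follows. The dimension-dependent bound $T^2(d-\maxrank)/(2k)$ comes from a rank/trace argument: the operator $\I - \Q^{\top}\Q$ (or the sum $\sum_m (\I-\mP_m)$ appropriately) has rank at most $\sum_m (d - r_m) \le T(d-\maxrank)$, or more sharply the relevant space on which contraction is \emph{not} immediate has dimension controlled by $d-\maxrank$; combining the per-mode decay $(\cos^2\theta_j)^{cn/T}$ with the elementary inequality $x^{cn/T}(1-x)\le \tfrac{1}{e}\cdot\tfrac{T}{cn}$ per mode and there being at most $O(d-\maxrank)$ relevant modes yields the $\tfrac{T^2(d-\maxrank)}{k}$ bound up to the constant. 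The dimension-free bound $T^2/\sqrt{k}$ is the more delicate one: here one cannot count modes, so instead I would split the spectrum of $\Q^{\top}\Q$ at a threshold $1-\delta$. Modes with $\cos^2\theta \le 1-\delta$ contract to size $(1-\delta)^{cn/T}$, which is exponentially small once $\delta \gtrsim T\log k/n$; modes with $\cos^2\theta > 1-\delta$ contribute a residual of size at most $\delta$ each but there can be many of them, so one bounds their total contribution by $\delta$ times the operator-norm-$1$ factor (not by counting) — here one uses that $(\I-\mP_m)$ applied to a vector that lies nearly in the joint null space produces something of norm $O(\sqrt{\delta})$, and the $\Q^n$ prefactor is still a contraction of norm $\le 1$. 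Optimizing $\delta$ against the exponentially-decaying part (balancing $\delta$ against $(1-\delta)^{n/T}$, roughly $\delta \sim \sqrt{T^2/k}$ after the right bookkeeping) produces the $\sqrt{T^2/k} = T^2/\sqrt k$ scaling — wait, more precisely $\delta = \Theta(\sqrt{T^2/k})$ gives forgetting $O(\delta) = O(T^2/\sqrt k)$ once one absorbs the log factors into the constant by using the polynomial (Lemma-\ref{lem:2_tasks_forgetting}-style) bound rather than a crude exponential one.

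For the matching lower bound $\tfrac{T^2}{24ek}$, the plan is an explicit construction generalizing the $T=2$ adversarial configuration: take all solution spaces to share a common ``rotation plane'' so that $\Q$ restricted to a $2$-dimensional (or small-dimensional) invariant subspace acts as a rotation by a tiny angle $\phi = \Theta(1/T)$ composed $T$ times, i.e. $\Q$ acts like contraction by $\cos^T(\cdot)$-type factor on that subspace, choose $\teacher$ in that subspace with $\|\teacher\|=1$, and pick the angle $\phi$ so that $T\phi$ is of order $\sqrt{T^2/k}=T/\sqrt k$... then evaluate $\tfrac1k\sum_m \|(\I-\mP_m)\Q^n\teacher\|^2$ directly — each term is a concrete trigonometric expression $(\cos^2(\cdot))^{\Theta(n)}\sin^2(\cdot)$ and one tunes the angle to extract the $\Theta(T^2/k)$ value with explicit constant $1/(24e)$. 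The main obstacle I anticipate is the dimension-free $T^2/\sqrt{k}$ bound: controlling the ``near-degenerate'' part of the spectrum without a dimension count requires an operator-norm argument about how $(\I-\mP_m)$ acts on the range of $\Q^n$, and getting the clean $\sqrt{k}$ (rather than $k^{1/2}\log k$ or similar) depends on plugging the polynomial bound from \lemref{lem:2_tasks_forgetting} into each spectral mode and then summing/integrating against the spectral measure rather than bounding termwise — that interchange, and verifying the $T$-dependence survives it correctly, is where the real work lies; the symmetric ($\Q=\Q^\top$) case is easier because then $\Q$ is a genuine positive contraction with $\|\Q^n\| = \|\Q\|^n$ and the scalar reduction is exact, immediately giving the sharp $\Theta(T^2/k)$.
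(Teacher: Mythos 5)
There is a genuine gap in your plan for the central, dimension-free upper bound $T^2/\sqrt{k}$: your argument is built on a spectral-mode analysis of the cyclic operator $\Q=\mP_T\cdots\mP_1$ (splitting the spectrum of $\Q^{\top}\Q$, applying the per-mode bound $(\cos^2\theta)^{a}(1-\cos^2\theta)$ from the $T=2$ lemma, and integrating against a spectral measure). For $T\ge 3$ the product of projections is \emph{not} normal, so this machinery does not apply: the eigenvectors of $\Q^{\top}\Q$ are not invariant under $\Q$, one does not have $\norm{\Q^n}=\norm{\Q}^n$ in any useful uniform sense, and in fact the decay of $\Q^n$ toward the projection onto the joint null space can be arbitrarily slow (this is exactly why the paper insists that bounds on $\tnorm{\w_k-\teacher}$ are trivial in the worst case). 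For the same reason your proposed factorization $\norm{(\I-\mP_m)\Q^n}\le\norm{(\I-\mP_m)\Q}\,\norm{\Q^{n-1}}$ yields nothing, since $\norm{\Q^{n-1}}$ can equal $1$, and the heuristic ``exponent $a\sim n/T$'' converting $\norm{\Q^n}$ into a per-iteration contraction has no justification for non-normal products. (There is also an algebraic slip in your balancing step: $\sqrt{T^2/k}=T/\sqrt{k}$, not $T^2/\sqrt{k}$, so even granting the mode picture the bookkeeping does not close.) The paper avoids spectral decompositions entirely and instead uses elementary telescoping: first $\norm{(\I-\mP_m)\vv}^2\le m\bigprn{\norm{\vv}^2-\norm{\M\vv}^2}$ (via an identity decomposition of $\I-\mP_m$ through the partial products, plus the inequality $\norm{\sum_i \x_i}^2\le m\sum_i\norm{\x_i}^2$), which is what produces the $\sum_m m\approx T^2/2$ factor you only gesture at; then the per-cycle decrease is bounded by $2\norm{\M^{n}-\M^{n+1}}$, and $\norm{\M^{n-1}-\M^{n}}\le\sqrt{T/n}$ follows from monotonicity of $\tnorm{(\M^t-\M^{t+1})\vv}^2$ together with a telescoping average — this is where the $\sqrt{k}$ (rather than $k$) rate genuinely comes from in the non-symmetric case, a mechanism absent from your plan. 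Note also that the paper does \emph{not} claim a $\Theta(T^2/k)$ upper bound in general; that sharper rate is proved only when $\M$ is symmetric, where a spectral argument of the kind you describe is indeed legitimate.

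Your other two components are closer in spirit but underspecified. The rank-dependent bound in the paper is a pure trace-telescoping argument: the traces of $\B_n=(\M^{n})^{\top}\M^{n}-(\M^{n+1})^{\top}\M^{n+1}$ are non-increasing (Von Neumann's trace inequality) and telescope to $\tr(\M^{\top}\M)\le\rank(\M)\le d-\maxrank$, giving $\lambda_1(\B_n)\le\tr(\B_n)\le(d-\maxrank)/n$; no per-mode decay is needed, and your bound ``$\sum_m(d-r_m)\le T(d-\maxrank)$'' would lose a factor of $T$ relative to the correct $\rank(\M)\le\min_m\rank(\mP_m)$. For the lower bound, your idea of rank-one solution spaces in a common plane with angle $\approx 1/\sqrt{k}$ matches the paper's construction in outline, but the paper specifically uses a \emph{back-and-forth} ordering (so the construction doubles as the symmetric-case matching example) and extracts the constant $1/(24e)$ through an explicit computation based on its auxiliary lower-bound lemma; none of that is present in your sketch, so as written the proposal does not establish the stated theorem.
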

%
%\begin{proof-sketch}
\paragraph{Proof sketch for the upper bound.}
We briefly portray our proof for the above result (given fully in \appsref{app:proofs_many_tasks}).
For brevity, denote the cyclic operator as
$\M\triangleq\mP_T \dots \mP_1$.
Our proof revolves around the 
maximal decrease at the $n$th cycle,
\ie $\decrease{n}{\vu}
\triangleq\bignorm{
\M^{n}
\vu}^2
-
\bignorm{
\M^{n+1}
\vu}^2$.
\linebreak
We start by showing that the worst-case forgetting on the \emph{first task} (see \eqref{eq:cyclic-forgetting}) is upper bounded by the maximal decrease.
That is,
\vspace{-1mm}
\begin{align*}
\forall \vu\!\in\!\complex^{d}\!:~
\bignorm{
(\I\!-\!\mP_{1})
\M^{n}
\vu}^2
\stackrel{\text{(*)}}{=}
\bignorm{
\M^{n}
\vu}^2
\!-
\bignorm{
\mP_{1}
\M^{n}
\vu}^2
\stackrel{\text{(**)}}{\le}
\!
\bignorm{
\M^{n}
\vu}^2
\!-
\bignorm{
\M^{n+1}
\vu}^2
\!\!=\!
\decrease{n}{\vu},
\end{align*}
where (*) stems from the idempotence of $\mP_1$, 
and (**) is true since projections are non-expansive operators, meaning
$
\bignorm{
\mP_{1}
\M^{n}
\vu}^2
\ge 
\bignorm{
\mP_{T}\cdots\mP_{1}
\M^{n}
\vu}^2
\triangleq
\bignorm{
\M^{n+1}
\vu}^2
$.

More generally, we show that
$
\tnorm{
(\I-\mP_{m})
\M^{n}
\vu}^2
\!\le\!
m \decrease{n}{\vu}$,
yielding the overall bound:
\vspace{-1mm}
\begin{align*}
\tfrac{1}{T}
\tsum_{m=1}^{T}
\bignorm{
(\I-\mP_{m})
\M^{n}
}^2
\!\le\!
\tfrac{T-1}{2}
\cdot\!\!\!\!
\max_{\vu:\left\Vert \vu\right\Vert _{2}=1}\!
    \!\!
    \decrease{n}{\vu}~.
\end{align*}

%\vspace{-3mm}
Then, we prove that
$\max_{\vu:\left\Vert \vu\right\Vert _{2}=1}
    \decrease{n}{\vu}
    \le
    2\tnorm{\M^{n}\!-\!\M^{n+1}}\le
    2\sqrt{\nicefrac{T}{n}}$,
using telescoping sums on elements of
$\{\tnorm{\left(\M^\itr\!-\!\M^{\itr+1}\right)\!\vv}^2\}_{\itr=1}^{n}$.
Finally, we prove
$\max_{\vu:\left\Vert \vu\right\Vert _{2}=1}
    \decrease{n}{\vu}
\le \tfrac{d-\maxrank}{n}
$
by using telescoping sums on the traces of matrices 
$\left\{
\tprn{\M^{\itr}}^{\top}
\M^{\itr}
\!-
\tprn{\M^{\itr+1}}^{\top}
\M^{\itr+1}
\right\}_{\itr=1}^{n}.~
\blacksquare$
%\end{proof-sketch}

\section{Random task orderings}
\label{sec:random}
So far, we saw in \secref{sec:arbitrary} that arbitrary task orderings provide no convergence guarantees and might forget  \emph{catastrophically}.
In \secref{sec:cyclic} we saw that cyclic orderings \emph{do not} suffer from catastrophic forgetting, since their forgetting converges to zero like a power law.
We now analyze \emph{random} task ordering,
and show that they also have uniform (data-independent) convergence guarantees.

We consider a random task ordering $\tau$ that matches a uniform probability to any task at any iteration, \ie
$
\forall m\!\in\!\cnt{T},
\forall \itr\!\in\!\naturals\!\!:~
    \Pr\left[~\tau\!\left(\itr\right)=m~\right]\!=\!
    \nicefrac{1}{T}
$.
Below, we adjust the forgetting definitions in~\ref{def:forgetting}~and~\ref{def:worst-case}
to the random setting by defining the expected forgetting.
\begin{definition}[Expected forgetting of a task collection]
\label{def:expected-forgetting}
After $k$ iterations, the \emph{expected forgetting} on a specific task collection $\coll\in\Coll$
is defined as
\vspace{-1mm}
\begin{align*}
\expfor_{\tau,\coll} \left(k\right)
&\triangleq
\expectation_{\tau}
\left[
F_{\tau,\coll}\left(k\right)
\right]
=
\expectation_{\tau} 
\Big[
\tfrac{1}{k}
\tsum_{\itr=1}^{k}
\norm{\X_{\tau(\itr)}\w_k-\y_{\tau(\itr)}}^2
\Big]~.
\end{align*}
\end{definition}

Our main result in this section is a uniform bound on the expected forgetting under the uniform random task ordering. 
The proof is given in \appsref{app:proofs6}.
\begin{theorem}[Worst-case expected forgetting]
\label{thm:expected_forgetting}
Under the uniform i.i.d. task ordering $\tau$,
the worst-case expected forgetting  %(\defref{def:expected-forgetting}) 
after $k$ iterations is
\vspace{-2mm}
\begin{align*}
    %\expfor_{\tdist}
    \sup_{
    \substack{
        \coll\in\mathcal{\coll}_{T}: \\
        \tfrac{1}{T}
        \tsum_{m=1}^{T} \rank(\X_m) = \avgrank
    }
    }
    \!\!\!\!\!\!\!\!\!\!\!\!\!\!\!\!\!\!\!
    \expfor_{\tau,\coll}
    \left(k\right)
    \le
    \frac{%\left(d-\minrank\right)
    9\left(d-\avgrank\right)}{k}~.
\end{align*}
\end{theorem}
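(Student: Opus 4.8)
The plan is to reduce the expected forgetting to a statement about the expected one-step decrease of $\tnorm{\w_k - \teacher}^2$, exactly as in the cyclic proof sketch but now with the randomness of $\tau$ doing the averaging work that the cyclic structure did there. First I would write, using \eqref{eq:linear_forgetting} and the idempotence/non-expansiveness argument from the Theorem \ref{thm:worst_case_T3} proof sketch, that for each fixed realization of $\tau$ and each $\itr \le k$,
\begin{align*}
\bignorm{\X_{\tau(\itr)}(\w_k - \teacher)}^2
\le
\bignorm{(\I - \mP_{\tau(\itr)})(\w_k-\teacher)}^2
=
\bignorm{\w_k - \teacher}^2 - \bignorm{\mP_{\tau(\itr)}(\w_k-\teacher)}^2 .
\end{align*}
The key observation is that $\mP_{\tau(\itr)}(\w_k - \teacher)$ is precisely the projection that would be applied if task $\tau(\itr)$ were drawn \emph{again} at step $k+1$; so taking expectation over an independent fresh draw $\tau(k+1)$, and using that $\tau(\itr)$ and $\tau(k+1)$ are i.i.d.\ uniform, the average over $\itr$ of $\expectation\bignorm{\mP_{\tau(\itr)}(\w_k-\teacher)}^2$ equals $\expectation\bignorm{\mP_{\tau(k+1)}(\w_k-\teacher)}^2 = \expectation\bignorm{\w_{k+1}-\teacher}^2$. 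This gives
\begin{align*}
\expfor_{\tau,\coll}(k) \le \expectation\bignorm{\w_k - \teacher}^2 - \expectation\bignorm{\w_{k+1}-\teacher}^2 ,
\end{align*}
i.e.\ the expected forgetting at step $k$ is bounded by the expected single-step contraction from $k$ to $k+1$.

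Next I would control this contraction on average over $k$. Since $\tnorm{\w_k - \teacher}^2$ is non-increasing (a product of projections applied to $\teacher$, by \eqref{total_contraction}) and bounded in $[0, \tnorm{\teacher}^2] \subseteq [0,1]$, a telescoping/summation argument shows that the per-step decreases must be small on average: more carefully, because the bound above must hold at \emph{every} $k$, I would argue that $\sup_{\coll} \expfor_{\tau,\coll}(k)$ is a non-increasing function of $k$ (the worst-case instance at step $k+1$ can be "padded" to look like step $k$, or equivalently the same reduction applies uniformly), so $k \cdot \sup_\coll \expfor_{\tau,\coll}(k) \le \sum_{j=1}^{k}\big(\expectation\tnorm{\w_j - \teacher}^2 - \expectation\tnorm{\w_{j+1}-\teacher}^2\big) \le \expectation\tnorm{\w_1 - \teacher}^2 \le 1$, already giving an $\bigO(1/k)$ rate. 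To get the $d - \avgrank$ dependence, I would instead track the trace of the second-moment operator $\M_j := \mP_{\tau(j)}\cdots\mP_{\tau(1)}$: bound $\expectation\tnorm{\M_j \teacher}^2 - \expectation\tnorm{\M_{j+1}\teacher}^2$ by a quantity tied to $\tr\big(\expectation[\M_j^\top \M_j] - \expectation[\M_{j+1}^\top\M_{j+1}]\big)$, then telescope the traces from $j=1$ to $k$; the total is controlled by $\tr(\I - \expectation[\mP_{\tau(1)}]) \cdot (\text{something})$, and $\expectation[\mP_{\tau(1)}] = \tfrac1T\sum_m \mP_m$ has trace $\tfrac1T\sum_m (d - r_m) = d - \avgrank$. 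This is the same telescoping-on-traces device used for the $(d-\maxrank)/n$ bound in the cyclic case, adapted to the i.i.d.\ setting.

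The main obstacle is making the trace-telescoping step rigorous with the right constant: unlike the cyclic operator $\M^n$, here $\M_j$ is a \emph{random} product whose factors at consecutive steps share a common prefix only in distribution, so one must be careful about what is conditioned on. Concretely, the step $\expectation\tnorm{\M_j\teacher}^2 - \expectation\tnorm{\M_{j+1}\teacher}^2 = \expectation\big[\tnorm{\M_j\teacher}^2 - \tnorm{\mP_{\tau(j+1)}\M_j\teacher}^2\big]$, and bounding this by the per-step forgetting requires relating $\tnorm{(\I - \mP_{\tau(j+1)})\M_j\teacher}^2$ back to the loss terms summed in the definition — which is exactly the first reduction above but "shifted," so care is needed to avoid a circular or off-by-a-factor argument. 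I expect the factor $9$ in the statement to come from combining a constant-order slack in the non-expansiveness bound with a Cauchy–Schwarz or $\sqrt{\cdot}$-type step when converting a trace bound into a spectral-norm bound, analogous to how the cyclic proof pays constants like $24e$; pinning down that this slack is at most $9$ rather than optimizing it is where the routine-but-delicate calculation lives.
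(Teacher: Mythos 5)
The central step of your reduction is not valid. You assert that, because $\tau(\itr)$ and $\tau(k+1)$ are i.i.d.\ uniform, $\tfrac{1}{k}\tsum_{\itr=1}^{k}\expectation\tnorm{\mP_{\tau(\itr)}(\w_k-\teacher)}^2$ equals $\expectation\tnorm{\mP_{\tau(k+1)}(\w_k-\teacher)}^2=\expectation\tnorm{\w_{k+1}-\teacher}^2$. But $\w_k$ is itself a function of $\tau(1),\dots,\tau(k)$, so for $\itr\le k$ the projection $\mP_{\tau(\itr)}$ is \emph{not} independent of $\w_k-\teacher$, whereas $\mP_{\tau(k+1)}$ is; exchangeability of the task indices does not give the claimed identity. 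Already for $\itr=k$ one has $\mP_{\tau(k)}(\w_k-\teacher)=\w_k-\teacher$ exactly, and for $T=2$, $k=1$ the left-hand side is $\expectation\tnorm{\mP_{\tau(1)}\teacher}^2$ while the right-hand side is $\expectation\tnorm{\mP_{\tau(2)}\mP_{\tau(1)}\teacher}^2$, which is strictly smaller for generic task pairs --- so the asserted equality is false, and the inequality you actually need would have to be argued through precisely the correlation you are ignoring. That correlation (the tested projection sits \emph{inside} the random product defining $\w_k$) is the crux of the theorem. A further warning sign: if your chain went through, telescoping would yield the dimension-free bound $\sup_{\coll}\expfor_{\tau,\coll}(k)\le 1/k$, strictly stronger than the theorem and than the $\bigO(\nicefrac{\ln k}{k})$ rate the authors cite as the comparable state of the art. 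The later steps compound the gap: monotonicity of $\sup_{\coll}\expfor_{\tau,\coll}(k)$ in $k$ is only asserted (the ``padding'' remark), and you cannot interchange the supremum over $\coll$ with a telescoping sum of per-collection decrements without it.

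For comparison, the paper uses your trace-telescoping idea only where independence genuinely holds: it first proves the ``detour'' bound $\expectation\tnorm{(\I-\mP)\,\mP_{\tau(k)}\cdots\mP_{\tau(1)}}^2\le (d-\avgrank)/k$ for a \emph{fresh} independent uniform $\mP$, by passing to the trace, showing this surrogate is non-increasing in $k$ (i.i.d.\ change of variables plus Von Neumann's trace inequality), averaging over the first $k$ values and telescoping; this is where $d-\avgrank=\expectation[\rank\mP]$ enters, as you anticipated. The dependent case is then handled by a decoupling device: insert $\mP_{\tau(\itr)}+\I-\I$ at position $\itr$, split into two terms at the cost of a factor $2$, separate each term by submultiplicativity into factors independent of $\mP_{\tau(\itr)}$, apply the fresh-projection bound to the longer factor (of length $k-1$ or $\max\{\itr-1,k-\itr\}$), and sum the resulting harmonic series; the constant $9$ arises as $2(3+2\ln 2)\le 9$, not from a trace-to-spectral-norm Cauchy--Schwarz step. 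Some argument of this decoupling type --- or a genuine proof that, in expectation, previously used projections preserve at least as much of $\w_k-\teacher$ as a fresh one --- is what your proposal is missing.
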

\vspace{-1mm}

\paragraph{Demonstration.}
The following \figref{fig:many_task_bound} demonstrates
the worst-case forgetting under cyclic task orderings (\thmref{thm:worst_case_T3})
and the worse-case expected forgetting under random orderings (\thmref{thm:expected_forgetting}).
%
%\linebreak
We consider a specific $2$-d task collection $\coll$ consisting of $T\!=\!128$ rank-one tasks. 
\linebreak
The collection is ``adversarial'' in the sense that its forgetting meets the \emph{cyclic} setting's lower bound
of \thmref{thm:worst_case_T3} 
(dashed \textcolor{orange}{orange}).
\vspace{-2mm}
\begin{figure}[h!]
  \begin{minipage}[c]{0.57\textwidth}
  \vspace{3mm}
    \caption{
    \small
    \textbf{Demonstrating the bounds from Thm~\ref{thm:worst_case_T3}~and~\ref{thm:expected_forgetting}.}
    \\
    The solid \textcolor{orange}{orange} curve shows the actual forgetting of
    the cyclic deterministic ordering 
    (the oscillations are formed, naturally, by the task cycles).
    The \textcolor{purple}{purple} solid curve shows the expected forgetting of
    the random ordering
    (averaged over {$5$} seeds).
    The \textcolor{purple}{purple} band indicates one standard deviation (over the $5$ seeds).
    Also plotted are the corresponding upper bounds of both settings (dotted).
    \\
    Notice how a random ordering behaves better than a cyclic one, both practically and analytically.
    }
    \label{fig:many_task_bound}
  \end{minipage}
  \hfill
  \begin{minipage}[c]{0.39\textwidth}
    \includegraphics[width=.97\textwidth]{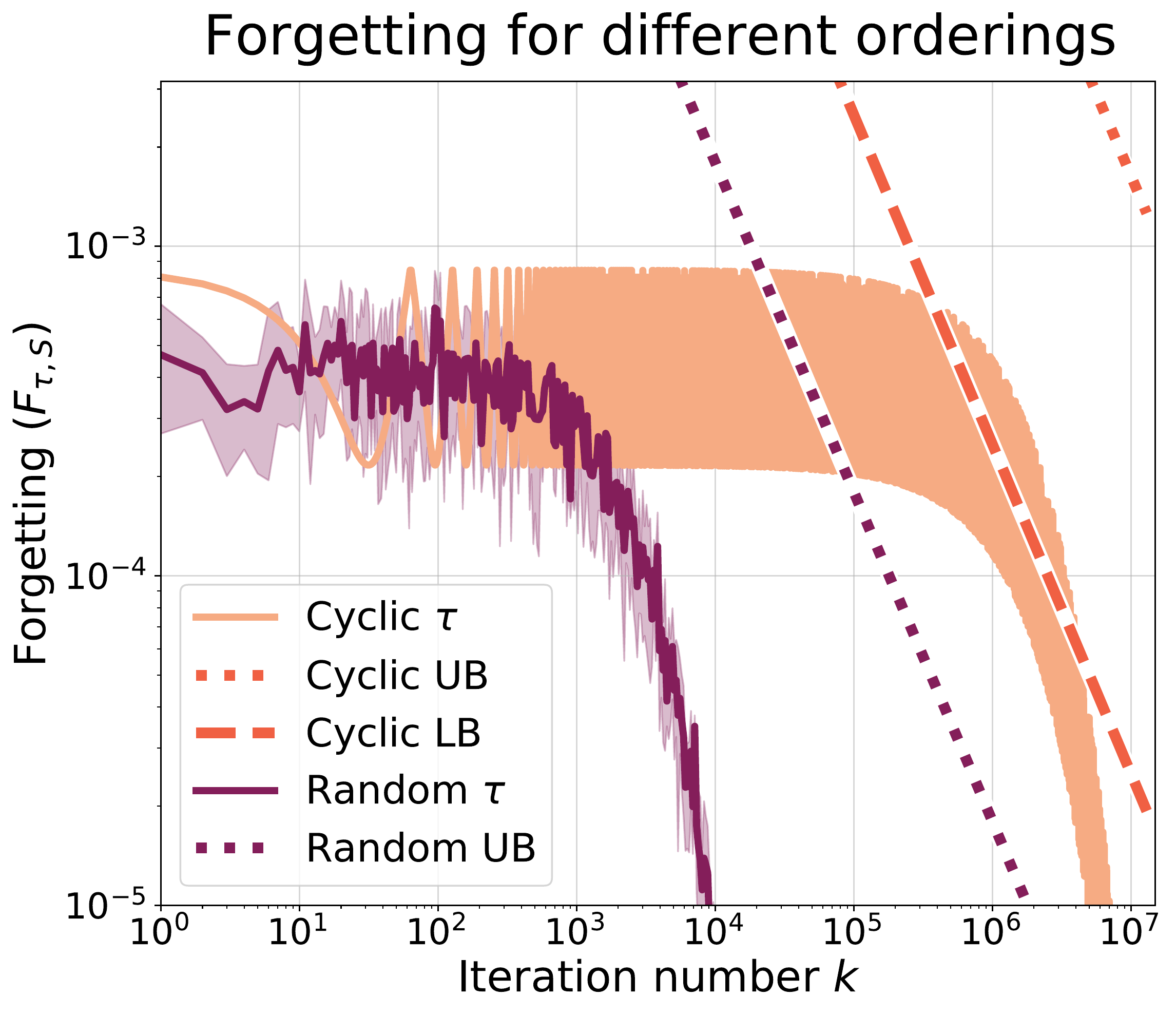}
  \end{minipage}
 \vspace{-4mm}
\end{figure}

\begin{remark}[Last iterate SGD results]
In the special case 
of rank-one tasks,
%where $r_m\!=\!1, \forall m\!\in\!\cnt{T}$,  
the iterative update rule in \eqref{eq:update_rule} reduces to a \emph{single} SGD step
with a $\tnorm{\x_{\tau(\itr)}}^{-2}$ step size, 
and the above bound becomes $\bigO(d/k)$.
A very recent work \citep{varre2021last} derived a
\emph{dimension-independent} bound 
of $\bigO(\nicefrac{\ln k}{k})$
for the \textit{last iterate} of SGD with a constant step size
in a similar rank-1 setting. 
Although our step size is different, 
we believe our bound's dependence on the dimension 
(\thmref{thm:expected_forgetting})
can probably be lifted.
\end{remark}

\begin{remark}[Average iterate analysis]
\label{rmrk:average_iterate}
We note that all our results become much tighter 
when working with the \emph{average iterate} 
$\overline{\w}_k \!\triangleq \!
\tfrac{1}{k}\tsum_{\itr=1}^{k}\w_{\itr}$
instead of the last one $\w_k$.
Then, it is easier to prove tighter bounds,
\ie
$\bigO(\nicefrac{T^2}{k})$ for cyclic orderings
and $\bigO(\nicefrac{1}{k})$ for random ones.
\appsref{app:average_iterate} provides a proof sketch.
\end{remark}

\section{Related Work}
\label{sec:related}
Many practical methods have been proposed over the last decade to address catastrophic forgetting in continual learning.
Clearly we cannot cover all methods here, but refer the reader to recent surveys 
(\eg \citep{parisi2019continual,qu2021recent})
and to \appsref{app:related}.
In a nutshell, algorithmic approaches to continual learning can be roughly divided into three: 
\emph{Memory-based replay approaches} (\eg \citep{robins1995rehearsal,shin2017generativeReplay}); \emph{Regularization approaches} (\eg \citep{kirkpatrick2017overcoming, zeno2021onlineVariational,li2017lwf, lopez2017GEM, benzing2021unifying_regularization}); 
and \emph{Parameter isolation approaches} 
(\eg \citep{aljundi2017expert,yoon2018lifelong,mallya2018packnet}).

\pagebreak

\paragraph{Theoretical results for catastrophic forgetting.}
We briefly discuss few related theoretical works.
\citet{doan2021NTKoverlap} analyzed forgetting in linear regression (or, more generally, in the NTK regime). 
Their derivations largely depend on a matrix
that captures principal angles between tasks,
very much like we do, but they focus on the smallest angle, \ie the Dixmier angle \citep{dixmier1949etude}.
They conclude that increased task similarity leads to more forgetting. 
In contrast, our $T\!=\!2$ analysis reveals the precise role of task similarity at different training stages (see \figref{fig:two_tasks_super}). 
Their work \citep{doan2021NTKoverlap} and others \citep{bennani2020generalisationWithOGD,farajtabar2020OGD} also analyzed the OGD algorithm, deriving generalization guarantees and improved variants.

%\pagebreak

\citet{lee2021taskSimilarity} considered a teacher-student two-layer setup with $2$ tasks, each having its own last layer.
They analyzed a different notion of task similarity 
(\ie teacher similarity) and studied how it affects forgetting in student models.
They found that intermediate (teacher-)similarity leads to the greatest forgetting.
In our linear setup, we find that when two consecutive tasks are shown \emph{once} (\ie in a single cycle),
intermediate principal angles cause higher forgetting
(see \figref{fig:exact_2_tasks}).
% This is different from the setting in our paper, where there is a single teacher (offline solution) and similarity is between subspaces of the data, similarly to \citet{doan2021NTKoverlap}.
%
\citet{Asanuma_2021} considered a linear regression setting, as we do, yet only for $2$ tasks having a different teacher. 
They assumed specific axis-aligned input distributions,
and studied forgetting in the infinite-dimension limit, taking into account both the input-space and weight-space similarities.
Under a ``shared'' teacher, their results imply zero forgetting due to their assumptions on inputs.

\paragraph{Alternating projections (AP).}
Throughout our paper, we lay out connections between continual learning and the AP literature (see survey by \citep{ginat2018method}).
Our cyclic setting (\secref{sec:cyclic}) is a special case of cyclic AP (\citet{vonNeumann1949rings,halperin1962product}).
Studies from this field often bound the convergence to the subspace intersection either asymptotically or using notions of generalized angles between subspaces
(\eg \cite{kayalar1988error,deutsch1997rate,oppenheim2018angle}).
Most results in this area are uninformative at the worst case.

\paragraph{Kaczmarz method.}
Our fitting procedure (described in \secref{sec:fitting_procedure} and \ref{sec:forgetting_in_linear}) 
shares a great resemblance with 
Kaczmarz methods
for solving a system $\A\x\!=\!\vect{b}$.
The ``basic" method \citep{karczmarz1937angenaherte} 
corresponds to tasks of rank $r\!=\!1$,
while the block variant \citep{elfving1980block} corresponds to tasks having $r\!\ge\!2$. 
Traditionally, these methods used a deterministic cyclic ordering, like we do in \secref{sec:cyclic}.
There is also a randomized variant \citep{strohmer2009randomized,needell2010randomized,xiang2017randomized}, 
related
to our random task ordering in \secref{sec:random}, which often achieves faster convergence both empirically and theoretically \cite{sun2019worst}.
Studies of these methods often analyze convergence to a feasible set using the spectrum of $\A$ (\eg its condition number; see
\cite{needell2014stochastic, oswald2015convergence,haddock2021greed}).

\paragraph{Our convergence analysis is inherently different.}
Most convergence results from the AP and Kaczmarz research areas 
concentrate on bounding the convergence to the subspace intersection.
These results \emph{are} valuable for the continual setting,
\eg they help us bound $\tnorm{\w_k-\teacher}$ in \thmref{thm:2_tasks_distance}, which clearly upper bounds the forgetting.
%since 
%$\tfrac{1}{k}\tsum_t\tnorm{\X_{\tau(\itr)}\tprn{\w_\itr-\teacher}}$.
%
However, most convergence results in these areas depend on the task specifics, precluding any informative \emph{worst-case} analysis.
In contrast, we analyze the convergence of the \emph{forgetting} (\ie projection residuals),
allowing us to derive \emph{uniform} worst-case guarantees.
In \secref{sec:two_tasks}, we demonstrate these differences by comparing the two quantities.
Finally, we believe our bounds are novel and can 
%perhaps
provide a new perspective in these areas, 
especially in the cyclic block Kaczmarz setting \cite{elfving1980block,needell2014paved},
\ie that \emph{worst-case} last-iterate analyses become feasible when examining the residual convergence instead of the distance from $\teacher$.

\paragraph{Normalized Least-Mean-Square (NLMS).}
The NLMS algorithm is a popular choice for adaptive filtering.
In its basic form,
it fits one random sample at each iteration using update rules identical to those of our stochastic setting and the randomized Kaczmarz method.
There are known convergence guarantees for the single-sample algorithm
\cite{slock1993convergence}
and its multiple-samples counterpart, the APA \cite{sankaran2000convergence},
but these are proven only under limiting assumptions on the input signals.

\pagebreak

\paragraph{Other optimization methods.}

It is interesting to note that when fitting $T$ tasks in a cyclic or random ordering, our update rule in \eqref{eq:update_rule} 
%stays in the span of the data of all tasks and converges to a solution that fits all tasks \cite{ma2015convergence},
is equivalent to dual block coordinate descent on the dual of \eqref{eq:minimum-norm-solution}, 
\ie
 $   {\argmin}_{\vect{\alpha}\in\reals^{N}}
    \bigprn{\,
    \tfrac{1}{2}
    \bignorm{\X_{1:T}^{\top}\vect{\alpha}}^2
    -
    \,
    \y_{1:T}^{\top}\vect{\alpha}\,
    }$,
where $\X_{1:T}\!\in\!\reals^{N\times d}$ and
$\y_{1:T}\!\in\!\reals^{N}$ (for $N\!=\!\sum_{m=1}^T n_m$) are the concatenation of the data matrices and label vectors from all tasks.
The primal variable $\w$ and the dual one $\vect{\alpha}$ are related through $\w\!=\!\X_{1:T}^{\top}\vect{\alpha}$. 
\citet{DBLP:journals/jmlr/Shalev-Shwartz013} analyzed the stochastic dual coordinate ascent method for minimizing a \textit{regularized} loss,
and proved convergence rates for the primal suboptimality. However, when the regularization parameter vanishes, as in our case, their bounds tend to infinity. 
Others \cite{sun2019worst} analyzed the convergence of coordinate descent on quadratic functions like the dual above, and derived data-dependent bounds.

%\pagebreak

The stochastic proximal point algorithm (SPPA) \cite{RyuBoy:16,Bertsekas2011IncrementalPM,NonasymptoticPatrascu} follows an update rule (for a step size~$\eta$) 
of
${
\w_{\itr}\!=\!\argmin_{\w}\! \Bigprn{
\tfrac{1}{2}(\X_{\tau(\itr)}\w\!-\!\y_{\tau(\itr)})^2 + \tfrac{1}{2\eta}\left\Vert\w\!-\!\w_{\itr-1}\right\Vert^2
}
%\label{prox}
}
$,
equivalent when $\eta\!\to\!\infty$
to~our rule in \eqref{min_l2}.
% is equivalent to \eqref{prox} fo.
As far as we know, no uniform convergence-rates were previously proved for
SPPA.
%\textit{stochastic} proximal point with $\eta\rightarrow\infty$.

Minimizing forgetting over $T$ tasks can also be seen as a finite sum minimization problem \cite{WoodworthS16,negiar20a}.
Many algorithms have been proposed for this problem, some of which achieve better convergence rates than ours using
additional \textit{memory} (\eg for storing gradients \cite{NIPS2013_ac1dd209,allenzhu2018katyusha}). 
We derive forgetting convergence rates for the  fitting procedure in \eqref{min_l2}
that is equivalent to running (S)GD to convergence for each presented task, which is a natural choice for continual learning settings.

\paragraph{Forgetting vs. regret.}
\unnotice{
Our \defref{def:forgetting} of the forgetting should not be confused with the notion of regret, 
mainly studied in the context of online learning
(\eg \citep{shalev2012online}). 
Specifically, using our notations, the average regret after $k$ iteration is $\frac{1}{k}\tsum_{t=1}^{k}\tnorm{\X_{\tau(t)}\w_{t-1}-\y_{\tau(t)}}^2$. 
Comparing the two quantities, we note that forgetting quantifies degradation on \emph{previous} tasks, 
while regret captures the ability to predict \emph{future} tasks.
To illustrate the differences, consider a finite sequence of orthogonal tasks. 
\linebreak
In this case, the forgetting is $0$ (as discussed in \secref{sec:no_forgetting}), but the regret is large. Conversely, given a task sequence like in \figref{fig:worst_case}, when $k\to\infty$ the regret vanishes while forgetting goes to $1$ (see \thmref{thm:worst_case}).
Nevertheless, in the cyclic and random settings, 
both quantities will go to $0$ when $k\to\infty$ since we converge to an offline solution. 
However, their rates of convergence may differ.
}

\section{Conclusion}
\label{sec:conclusion}
Catastrophic forgetting is not yet fully understood theoretically. 
Therefore, one must first study it in the simplest model exhibiting this phenomenon --- linear regression.
In this setting, we provide sharp uniform worst-case bounds.
Most of our analysis does not depend on task specifics, 
but only on the number of tasks $T$, 
the number of iterations $k$, and the task ordering. 
On the one hand, we prove that for an arbitrary ordering, forgetting \emph{can} be catastrophic.
On the other hand, for cyclic orderings, 
we prove forgetting \emph{cannot} be catastrophic
and that even in the worst-case it vanishes at most as 
$\frac{T^2}{\sqrt{k}}$ or $\frac{T^2d}{k}$.
%$T^2/k$. 
Lastly, we prove worst-case bounds for random orderings,
independent of $T$.

Our bounds complement existing Kaczmarz and alternating projection bounds, which focus on convergence to a feasible set.
Unlike ours, their bounds strictly depend on task specifics 
(\eg principal angles or condition numbers)
and
become trivial in worst-case analysis.

There are many intriguing directions for future research.
These include extending our results to the non-realizable case,
analyzing forgetting in classification tasks,
and deriving optimal presentation task orderings. 
Lastly, one can try to extend our results to more complex models
(\eg deep models)
and other training algorithms. 
We hope a thorough theoretical understanding of catastrophic forgetting, will facilitate the development of new practical methods to alleviate it.

%\pagebreak

% Acknowledgments---Will not appear in anonymized version
\acks{We would like to thank Suriya Gunasekar {(Microsoft Research)} for her insightful comments and suggestions.
We would also like to thank 
Simeon Reich {(Technion)}, 
Rafal Zalas {(Technion)},
and 
%\linebreak 
Timur Oikhberg {(Univ.~of Illinois)}
for~their fruitful discussions.
R.~Ward was partially supported by {AFOSR~MURI~FA9550-19-1-0005, NSF~DMS~1952735, NSF~HDR1934932, and NSF~2019844}.
N.~Srebro was partially supported by NSF~IIS~award~\#1718970 and the NSF-Simons~Funded~Collaboration on the Mathematics of Deep Learning.
D.~Soudry was supported by the Israel Science Foundation {(Grant No.~1308/18)}
and the Israel Innovation Authority {(the Avatar Consortium)}.
}

%\newpage

\bibliography{99_biblio}

\begin{thebibliography}{80}
\providecommand{\natexlab}[1]{#1}
\providecommand{\url}[1]{\texttt{#1}}
\expandafter\ifx\csname urlstyle\endcsname\relax
  \providecommand{\doi}[1]{doi: #1}\else
  \providecommand{\doi}{doi: \begingroup \urlstyle{rm}\Url}\fi

\bibitem[Aljundi et~al.(2017)Aljundi, Chakravarty, and
  Tuytelaars]{aljundi2017expert}
Rahaf Aljundi, Punarjay Chakravarty, and Tinne Tuytelaars.
\newblock Expert gate: Lifelong learning with a network of experts.
\newblock In \emph{Proceedings of the IEEE Conference on Computer Vision and
  Pattern Recognition}, pages 3366--3375, 2017.

\bibitem[Aljundi et~al.(2018)Aljundi, Babiloni, Elhoseiny, Rohrbach, and
  Tuytelaars]{aljundi2018memory}
Rahaf Aljundi, Francesca Babiloni, Mohamed Elhoseiny, Marcus Rohrbach, and
  Tinne Tuytelaars.
\newblock Memory aware synapses: Learning what (not) to forget.
\newblock In \emph{Proceedings of the European Conference on Computer Vision
  (ECCV)}, pages 139--154, 2018.

\bibitem[Allen-Zhu(2018)]{allenzhu2018katyusha}
Zeyuan Allen-Zhu.
\newblock Katyusha: The first direct acceleration of stochastic gradient
  methods, 2018.

\bibitem[Asanuma et~al.(2021)Asanuma, Takagi, Nagano, Yoshida, Igarashi, and
  Okada]{Asanuma_2021}
Haruka Asanuma, Shiro Takagi, Yoshihiro Nagano, Yuki Yoshida, Yasuhiko
  Igarashi, and Masato Okada.
\newblock Statistical mechanical analysis of catastrophic forgetting in
  continual learning with teacher and student networks.
\newblock \emph{Journal of the Physical Society of Japan}, 90\penalty0
  (10):\penalty0 104001, Oct 2021.

\bibitem[Bargetz et~al.(2020)Bargetz, Klemenc, Reich, and
  Skorokhod]{bargetz2020angles}
Christian Bargetz, Jona Klemenc, Simeon Reich, and Natalia Skorokhod.
\newblock On angles, projections and iterations.
\newblock \emph{Linear Algebra and its Applications}, 603:\penalty0 41--56,
  2020.

\bibitem[Bennani et~al.(2020)Bennani, Doan, and
  Sugiyama]{bennani2020generalisationWithOGD}
Mehdi~Abbana Bennani, Thang Doan, and Masashi Sugiyama.
\newblock Generalisation guarantees for continual learning with orthogonal
  gradient descent.
\newblock \emph{arXiv preprint arXiv:2006.11942}, 2020.

\bibitem[Benzing(2022)]{benzing2021unifying_regularization}
Frederik Benzing.
\newblock Unifying regularisation methods for continual learning.
\newblock \emph{AISTATS}, 2022.

\bibitem[Bertsekas(2011)]{Bertsekas2011IncrementalPM}
Dimitri~P. Bertsekas.
\newblock Incremental proximal methods for large scale convex optimization.
\newblock \emph{Mathematical Programming}, 129:\penalty0 163--195, 2011.

\bibitem[Bj{\"o}rck and Golub(1973)]{bjorck1973numerical}
Ake Bj{\"o}rck and Gene~H Golub.
\newblock Numerical methods for computing angles between linear subspaces.
\newblock \emph{Mathematics of computation}, 27\penalty0 (123):\penalty0
  579--594, 1973.

\bibitem[Delange et~al.(2021)Delange, Aljundi, Masana, Parisot, Jia, Leonardis,
  Slabaugh, and Tuytelaars]{delange2021continual}
Matthias Delange, Rahaf Aljundi, Marc Masana, Sarah Parisot, Xu~Jia, Ales
  Leonardis, Greg Slabaugh, and Tinne Tuytelaars.
\newblock A continual learning survey: Defying forgetting in classification
  tasks.
\newblock \emph{IEEE Transactions on Pattern Analysis and Machine
  Intelligence}, 2021.

\bibitem[Deutsch(1995)]{deutsch1995angle}
Frank Deutsch.
\newblock The angle between subspaces of a hilbert space.
\newblock In \emph{Approximation theory, wavelets and applications}, pages
  107--130. Springer, 1995.

\bibitem[Deutsch and Hundal(1997)]{deutsch1997rate}
Frank Deutsch and Hein Hundal.
\newblock The rate of convergence for the method of alternating projections,
  ii.
\newblock \emph{Journal of Mathematical Analysis and Applications},
  205\penalty0 (2):\penalty0 381--405, 1997.

\bibitem[Dixmier(1949)]{dixmier1949etude}
Jacques Dixmier.
\newblock {\'E}tude sur les vari{\'e}t{\'e}s et les op{\'e}rateurs de julia,
  avec quelques applications.
\newblock \emph{Bulletin de la Soci{\'e}t{\'e} Math{\'e}matique de France},
  77:\penalty0 11--101, 1949.

\bibitem[Doan et~al.(2021)Doan, Abbana~Bennani, Mazoure, Rabusseau, and
  Alquier]{doan2021NTKoverlap}
Thang Doan, Mehdi Abbana~Bennani, Bogdan Mazoure, Guillaume Rabusseau, and
  Pierre Alquier.
\newblock A theoretical analysis of catastrophic forgetting through the ntk
  overlap matrix.
\newblock In \emph{Proceedings of The 24th International Conference on
  Artificial Intelligence and Statistics}, pages 1072--1080, 2021.

\bibitem[Elfving(1980)]{elfving1980block}
Tommy Elfving.
\newblock Block-iterative methods for consistent and inconsistent linear
  equations.
\newblock \emph{Numerische Mathematik}, 35\penalty0 (1):\penalty0 1--12, 1980.

\bibitem[Farajtabar et~al.(2020)Farajtabar, Azizan, Mott, and
  Li]{farajtabar2020OGD}
Mehrdad Farajtabar, Navid Azizan, Alex Mott, and Ang Li.
\newblock Orthogonal gradient descent for continual learning.
\newblock In \emph{International Conference on Artificial Intelligence and
  Statistics}, pages 3762--3773. PMLR, 2020.

\bibitem[fedja
  (https://mathoverflow.net/users/1131/fedja)(2021)]{mathoverflowbound}
fedja (https://mathoverflow.net/users/1131/fedja).
\newblock Bounding the decrease after applying a contraction operator $n$ vs
  $n+1$ times.
\newblock MathOverflow, 2021.
\newblock URL:https://mathoverflow.net/q/408834.

\bibitem[Friedrichs(1937)]{friedrichs1937angles}
Kurt Friedrichs.
\newblock On certain inequalities and characteristic value problems for
  analytic functions and for functions of two variables.
\newblock \emph{Transactions of the American Mathematical Society}, 41\penalty0
  (3):\penalty0 321--364, 1937.

\bibitem[Ge et~al.(2016)Ge, Jin, Netrapalli, Sidford,
  et~al.]{ge2016generalized}
Rong Ge, Chi Jin, Praneeth Netrapalli, Aaron Sidford, et~al.
\newblock Efficient algorithms for large-scale generalized eigenvector
  computation and canonical correlation analysis.
\newblock In \emph{International Conference on Machine Learning}, pages
  2741--2750. PMLR, 2016.

\bibitem[Ginat(2018)]{ginat2018method}
Omer Ginat.
\newblock The method of alternating projections.
\newblock \emph{arXiv preprint arXiv:1809.05858}, 2018.

\bibitem[Goodfellow et~al.(2013)Goodfellow, Mirza, Xiao, Courville, and
  Bengio]{goodfellow2013empirical}
Ian~J Goodfellow, Mehdi Mirza, Da~Xiao, Aaron Courville, and Yoshua Bengio.
\newblock An empirical investigation of catastrophic forgetting in
  gradient-based neural networks.
\newblock \emph{arXiv preprint arXiv:1312.6211}, 2013.

\bibitem[Gunasekar et~al.(2018)Gunasekar, Lee, Soudry, and
  Srebro]{gunasekar2018characterizing}
Suriya Gunasekar, Jason~D. Lee, Daniel Soudry, and Nathan Srebro.
\newblock Characterizing implicit bias in terms of optimization geometry.
\newblock In \emph{ICML}, 2018.

\bibitem[Hacohen-Gourgy et~al.(2018)Hacohen-Gourgy, Garc{\'\i}a-Pintos, Martin,
  Dressel, and Siddiqi]{hacohen2018zeno}
Shay Hacohen-Gourgy, Luis~Pedro Garc{\'\i}a-Pintos, Leigh~S Martin, Justin
  Dressel, and Irfan Siddiqi.
\newblock Incoherent qubit control using the quantum zeno effect.
\newblock \emph{Physical review letters}, 120\penalty0 (2):\penalty0 020505,
  2018.

\bibitem[Haddock and Ma(2021)]{haddock2021greed}
Jamie Haddock and Anna Ma.
\newblock Greed works: An improved analysis of sampling kaczmarz--motzkin.
\newblock \emph{SIAM Journal on Mathematics of Data Science}, 3\penalty0
  (1):\penalty0 342--368, 2021.

\bibitem[Halperin(1962)]{halperin1962product}
Israel Halperin.
\newblock The product of projection operators.
\newblock \emph{Acta Sci. Math.(Szeged)}, 23\penalty0 (1):\penalty0 96--99,
  1962.

\bibitem[Haykin(2002)]{Haykin:2002}
Simon Haykin.
\newblock \emph{Adaptive filter theory}.
\newblock Prentice Hall, 2002.

\bibitem[Jastrzebski et~al.(2018)Jastrzebski, Kenton, Arpit, Ballas, Fischer,
  Bengio, and Storkey]{Jastrzebski2017minima}
Stanislaw Jastrzebski, Zachary Kenton, Devansh Arpit, Nicolas Ballas, Asja
  Fischer, Yoshua Bengio, and Amos Storkey.
\newblock {Three Factors Influencing Minima in SGD}.
\newblock In \emph{International Conference of Artificial Neural Networks
  (ICANN)}, 2018.

\bibitem[Johnson and Zhang(2013)]{NIPS2013_ac1dd209}
Rie Johnson and Tong Zhang.
\newblock Accelerating stochastic gradient descent using predictive variance
  reduction.
\newblock In \emph{Advances in Neural Information Processing Systems}, pages
  315--323, 2013.

\bibitem[Kaczmarz(1937)]{karczmarz1937angenaherte}
S~Kaczmarz.
\newblock Angenaherte auflosung von systemen linearer glei-chungen.
\newblock \emph{Bull. Int. Acad. Pol. Sic. Let., Cl. Sci. Math. Nat.}, pages
  355--357, 1937.

\bibitem[Kayalar and Weinert(1988)]{kayalar1988error}
Selahattin Kayalar and Howard~L Weinert.
\newblock Error bounds for the method of alternating projections.
\newblock \emph{Mathematics of Control, Signals and Systems}, 1\penalty0
  (1):\penalty0 43--59, 1988.

\bibitem[Kirkpatrick et~al.(2017)Kirkpatrick, Pascanu, Rabinowitz, Veness,
  Desjardins, Rusu, Milan, Quan, Ramalho, Grabska-Barwinska,
  et~al.]{kirkpatrick2017overcoming}
James Kirkpatrick, Razvan Pascanu, Neil Rabinowitz, Joel Veness, Guillaume
  Desjardins, Andrei~A Rusu, Kieran Milan, John Quan, Tiago Ramalho, Agnieszka
  Grabska-Barwinska, et~al.
\newblock Overcoming catastrophic forgetting in neural networks.
\newblock \emph{Proceedings of the national academy of sciences}, 114\penalty0
  (13):\penalty0 3521--3526, 2017.

\bibitem[Knoblauch et~al.(2020)Knoblauch, Husain, and Diethe]{KnoblauchHD20}
Jeremias Knoblauch, Hisham Husain, and Tom Diethe.
\newblock Optimal continual learning has perfect memory and is np-hard.
\newblock In \emph{Proceedings of the 37th International Conference on Machine
  Learning, {ICML} 2020, 13-18 July 2020, Virtual Event}, pages 5327--5337,
  2020.

\bibitem[Knyazev and Argentati(2006)]{knyazev2006principal}
Andrew Knyazev and Merico Argentati.
\newblock Majorization for changes in angles between subspaces, ritz values,
  and graph laplacian spectra.
\newblock \emph{SIAM J. Matrix Analysis Applications}, 29:\penalty0 15--32, 01
  2006.
\newblock \doi{10.1137/060649070}.

\bibitem[Lee et~al.(2021)Lee, Goldt, and Saxe]{lee2021taskSimilarity}
Sebastian Lee, Sebastian Goldt, and Andrew Saxe.
\newblock Continual learning in the teacher-student setup: Impact of task
  similarity.
\newblock In \emph{International Conference on Machine Learning}, pages
  6109--6119. PMLR, 2021.

\bibitem[Li and Hoiem(2017)]{li2017lwf}
Zhizhong Li and Derek Hoiem.
\newblock Learning without forgetting.
\newblock \emph{IEEE transactions on pattern analysis and machine
  intelligence}, 40\penalty0 (12):\penalty0 2935--2947, 2017.

\bibitem[Lopez-Paz and Ranzato(2017)]{lopez2017GEM}
David Lopez-Paz and Marc'Aurelio Ranzato.
\newblock Gradient episodic memory for continual learning.
\newblock \emph{Advances in neural information processing systems},
  30:\penalty0 6467--6476, 2017.

\bibitem[Lubana et~al.(2021)Lubana, Trivedi, Koutra, and
  Dick]{lubana2021regularization}
Ekdeep~Singh Lubana, Puja Trivedi, Danai Koutra, and Robert~P. Dick.
\newblock How do quadratic regularizers prevent catastrophic forgetting: The
  role of interpolation.
\newblock In \emph{ICML Workshop on Theory and Foundations of Continual
  Learning}, 2021.

\bibitem[Mallya and Lazebnik(2018)]{mallya2018packnet}
Arun Mallya and Svetlana Lazebnik.
\newblock Packnet: Adding multiple tasks to a single network by iterative
  pruning.
\newblock In \emph{Proceedings of the IEEE conference on Computer Vision and
  Pattern Recognition}, pages 7765--7773, 2018.

\bibitem[McCloskey and Cohen(1989)]{mccloskey1989catastrophic}
Michael McCloskey and Neal~J Cohen.
\newblock Catastrophic interference in connectionist networks: The sequential
  learning problem.
\newblock In \emph{Psychology of learning and motivation}, volume~24, pages
  109--165. Elsevier, 1989.

\bibitem[Meyer(2000)]{meyer2000matrix}
Carl~D Meyer.
\newblock \emph{Matrix analysis and applied linear algebra}, volume~71.
\newblock Siam, 2000.

\bibitem[Mirzadeh et~al.(2020)Mirzadeh, Farajtabar, Pascanu, and
  Ghasemzadeh]{Mirzadeh2020regimes}
Seyed~Iman Mirzadeh, Mehrdad Farajtabar, Razvan Pascanu, and Hassan
  Ghasemzadeh.
\newblock Understanding the role of training regimes in continual learning.
\newblock In H.~Larochelle, M.~Ranzato, R.~Hadsell, M.~F. Balcan, and H.~Lin,
  editors, \emph{Advances in Neural Information Processing Systems}, volume~33,
  pages 7308--7320. Curran Associates, Inc., 2020.

\bibitem[Mirzadeh et~al.(2022)Mirzadeh, Chaudhry, Hu, Pascanu, Gorur, and
  Farajtabar]{mirzadeh2021wide}
Seyed~Iman Mirzadeh, Arslan Chaudhry, Huiyi Hu, Razvan Pascanu, Dilan Gorur,
  and Mehrdad Farajtabar.
\newblock Wide neural networks forget less catastrophically.
\newblock \emph{ICML}, 2022.

\bibitem[Morshed et~al.(2020)Morshed, Islam, and
  Noor-E-Alam]{morshed2020accelerated}
Md~Sarowar Morshed, Md~Saiful Islam, and Md~Noor-E-Alam.
\newblock Accelerated sampling kaczmarz motzkin algorithm for the linear
  feasibility problem.
\newblock \emph{Journal of Global Optimization}, 77\penalty0 (2):\penalty0
  361--382, 2020.

\bibitem[Needell(2010)]{needell2010randomized}
Deanna Needell.
\newblock Randomized kaczmarz solver for noisy linear systems.
\newblock \emph{BIT Numerical Mathematics}, 50\penalty0 (2):\penalty0 395--403,
  2010.

\bibitem[Needell and Tropp(2014)]{needell2014paved}
Deanna Needell and Joel~A Tropp.
\newblock Paved with good intentions: analysis of a randomized block kaczmarz
  method.
\newblock \emph{Linear Algebra and its Applications}, 441:\penalty0 199--221,
  2014.

\bibitem[Needell et~al.(2014)Needell, Ward, and Srebro]{needell2014stochastic}
Deanna Needell, Rachel Ward, and Nati Srebro.
\newblock Stochastic gradient descent, weighted sampling, and the randomized
  kaczmarz algorithm.
\newblock \emph{Advances in neural information processing systems},
  27:\penalty0 1017--1025, 2014.

\bibitem[Negiar et~al.(2020)Negiar, Dresdner, Tsai, Ghaoui, Locatello, Freund,
  and Pedregosa]{negiar20a}
Geoffrey Negiar, Gideon Dresdner, Alicia Tsai, Laurent~El Ghaoui, Francesco
  Locatello, Robert Freund, and Fabian Pedregosa.
\newblock Stochastic frank-wolfe for constrained finite-sum minimization.
\newblock In \emph{Proceedings of the 37th International Conference on Machine
  Learning}, pages 7253--7262, 2020.

\bibitem[Netyanun and Solmon(2006)]{netyanun2006iterated}
Anupan Netyanun and Donald~C Solmon.
\newblock Iterated products of projections in hilbert space.
\newblock \emph{The American Mathematical Monthly}, 113\penalty0 (7):\penalty0
  644--648, 2006.

\bibitem[Nguyen et~al.(2018)Nguyen, Li, Bui, and Turner]{nguyen2017variational}
Cuong~V Nguyen, Yingzhen Li, Thang~D Bui, and Richard~E Turner.
\newblock Variational continual learning.
\newblock In \emph{International Conference on Learning Representations}, 2018.

\bibitem[Oikhberg(1999)]{oikhberg1999products}
Timur Oikhberg.
\newblock Products of orthogonal projections.
\newblock \emph{Proceedings of the American Mathematical Society}, 127\penalty0
  (12):\penalty0 3659--3669, 1999.

\bibitem[Oppenheim(2018)]{oppenheim2018angle}
Izhar Oppenheim.
\newblock Angle criteria for uniform convergence of averaged projections and
  cyclic or random products of projections.
\newblock \emph{Israel Journal of Mathematics}, 223\penalty0 (1):\penalty0
  343--362, 2018.

\bibitem[Oswald and Zhou(2015)]{oswald2015convergence}
Peter Oswald and Weiqi Zhou.
\newblock Convergence analysis for kaczmarz-type methods in a hilbert space
  framework.
\newblock \emph{Linear Algebra and its Applications}, 478:\penalty0 131--161,
  2015.

\bibitem[Parisi et~al.(2019)Parisi, Kemker, Part, Kanan, and
  Wermter]{parisi2019continual}
German~I Parisi, Ronald Kemker, Jose~L Part, Christopher Kanan, and Stefan
  Wermter.
\newblock Continual lifelong learning with neural networks: A review.
\newblock \emph{Neural Networks}, 113:\penalty0 54--71, 2019.

\bibitem[Patrascu and Necoara(2017)]{NonasymptoticPatrascu}
Andrei Patrascu and Ion Necoara.
\newblock Nonasymptotic convergence of stochastic proximal point methods for
  constrained convex optimization.
\newblock \emph{Journal of Machine Learning Research}, page 7204–7245, 2017.

\bibitem[Qu et~al.(2021)Qu, Rahmani, Xu, Williams, and Liu]{qu2021recent}
Haoxuan Qu, Hossein Rahmani, Li~Xu, Bryan Williams, and Jun Liu.
\newblock Recent advances of continual learning in computer vision: An
  overview, 2021.

\bibitem[Ramasesh et~al.(2020)Ramasesh, Dyer, and Raghu]{ramasesh2020anatomy}
Vinay~Venkatesh Ramasesh, Ethan Dyer, and Maithra Raghu.
\newblock Anatomy of catastrophic forgetting: Hidden representations and task
  semantics.
\newblock In \emph{International Conference on Learning Representations}, 2020.

\bibitem[Ratcliff(1990)]{ratcliff1990rehearsal}
Roger Ratcliff.
\newblock Connectionist models of recognition memory: constraints imposed by
  learning and forgetting functions.
\newblock \emph{Psychological review}, 97\penalty0 (2):\penalty0 285, 1990.

\bibitem[Risteski and Trencevski(2001)]{risteski2001principal}
Ice~B Risteski and Kostadin~G Trencevski.
\newblock Principal values and principal subspaces of two subspaces of vector
  spaces with inner product.
\newblock \emph{Beitr{\"a}ge zur Algebra und Geometrie}, 42\penalty0
  (1):\penalty0 289--300, 2001.

\bibitem[Robins(1995)]{robins1995rehearsal}
Anthony Robins.
\newblock Catastrophic forgetting, rehearsal and pseudorehearsal.
\newblock \emph{Connection Science}, 7\penalty0 (2):\penalty0 123--146, 1995.

\bibitem[Ryu and Boyd(2016)]{RyuBoy:16}
E.~Ryu and S.~Boyd.
\newblock Stochastic proximal iteration: A non-asymptotic improvement upon
  stochastic gradient descent.
\newblock \emph{Author website}, 2016.

\bibitem[Sankaran and Beex(2000)]{sankaran2000convergence}
Sundar~G Sankaran and AA~Louis Beex.
\newblock Convergence behavior of affine projection algorithms.
\newblock \emph{IEEE Transactions on Signal Processing}, 48\penalty0
  (4):\penalty0 1086--1096, 2000.

\bibitem[Schlimmer and Fisher(1986)]{schlimmer1986case}
Jeffrey~C Schlimmer and Douglas Fisher.
\newblock A case study of incremental concept induction.
\newblock In \emph{AAAI}, volume~86, pages 496--501, 1986.

\bibitem[Schwarz et~al.(2021)Schwarz, Jayakumar, Pascanu, Latham, and
  Teh]{schwarz2021powerpropagation}
Jonathan Schwarz, Siddhant Jayakumar, Razvan Pascanu, Peter Latham, and Yee
  Teh.
\newblock Powerpropagation: A sparsity inducing weight reparameterisation.
\newblock \emph{Advances in Neural Information Processing Systems}, 34, 2021.

\bibitem[Shalev{-}Shwartz and
  Zhang(2013)]{DBLP:journals/jmlr/Shalev-Shwartz013}
Shai Shalev{-}Shwartz and Tong Zhang.
\newblock Stochastic dual coordinate ascent methods for regularized loss.
\newblock \emph{J. Mach. Learn. Res.}, 14\penalty0 (1):\penalty0 567--599,
  2013.

\bibitem[Shalev-Shwartz et~al.(2012)]{shalev2012online}
Shai Shalev-Shwartz et~al.
\newblock Online learning and online convex optimization.
\newblock \emph{Foundations and Trends{\textregistered} in Machine Learning},
  4\penalty0 (2):\penalty0 107--194, 2012.

\bibitem[Shin et~al.(2017)Shin, Lee, Kim, and Kim]{shin2017generativeReplay}
Hanul Shin, Jung~Kwon Lee, Jaehong Kim, and Jiwon Kim.
\newblock Continual learning with deep generative replay.
\newblock \emph{Advances in neural information processing systems}, 30, 2017.

\bibitem[Slock(1993)]{slock1993convergence}
Dirk~TM Slock.
\newblock On the convergence behavior of the lms and the normalized lms
  algorithms.
\newblock \emph{IEEE Transactions on Signal Processing}, 41\penalty0
  (9):\penalty0 2811--2825, 1993.

\bibitem[Strohmer and Vershynin(2009)]{strohmer2009randomized}
Thomas Strohmer and Roman Vershynin.
\newblock A randomized kaczmarz algorithm with exponential convergence.
\newblock \emph{Journal of Fourier Analysis and Applications}, 15\penalty0
  (2):\penalty0 262--278, 2009.

\bibitem[Sun and Ye(2019)]{sun2019worst}
Ruoyu Sun and Yinyu Ye.
\newblock Worst-case complexity of cyclic coordinate descent: O$(n^2)$ gap with
  randomized version.
\newblock \emph{Mathematical Programming}, pages 1--34, 2019.

\bibitem[Thrun and Mitchell(1995)]{thrun1995lifelong}
Sebastian Thrun and Tom~M Mitchell.
\newblock Lifelong robot learning.
\newblock \emph{Robotics and autonomous systems}, 15\penalty0 (1-2):\penalty0
  25--46, 1995.

\bibitem[Varre et~al.(2021)Varre, Pillaud-Vivien, and
  Flammarion]{varre2021last}
Aditya~Vardhan Varre, Loucas Pillaud-Vivien, and Nicolas Flammarion.
\newblock Last iterate convergence of {SGD} for least-squares in the
  interpolation regime.
\newblock In A.~Beygelzimer, Y.~Dauphin, P.~Liang, and J.~Wortman Vaughan,
  editors, \emph{Advances in Neural Information Processing Systems}, 2021.

\bibitem[Von~Neumann(1949)]{vonNeumann1949rings}
John Von~Neumann.
\newblock On rings of operators. reduction theory.
\newblock \emph{Annals of Mathematics}, pages 401--485, 1949.

\bibitem[Weinshall and Amir(2020)]{weinshall2020theory}
Daphna Weinshall and Dan Amir.
\newblock Theory of curriculum learning, with convex loss functions.
\newblock \emph{Journal of Machine Learning Research}, 21\penalty0
  (222):\penalty0 1--19, 2020.

\bibitem[Woodworth and Srebro(2016)]{WoodworthS16}
Blake~E. Woodworth and Nati Srebro.
\newblock Tight complexity bounds for optimizing composite objectives.
\newblock In \emph{{NIPS}}, pages 3639--3647, 2016.

\bibitem[Xiang and Zhang(2017)]{xiang2017randomized}
Hua Xiang and Lin Zhang.
\newblock Randomized iterative methods with alternating projections.
\newblock \emph{arXiv preprint arXiv:1708.09845}, 2017.

\bibitem[Yoon et~al.(2018)Yoon, Yang, Lee, and Hwang]{yoon2018lifelong}
Jaehong Yoon, Eunho Yang, Jeongtae Lee, and Sung~Ju Hwang.
\newblock Lifelong learning with dynamically expandable networks.
\newblock In \emph{International Conference on Learning Representations}, 2018.

\bibitem[Zarantonello(1971)]{zarantonello1971projections}
Eduardo~H Zarantonello.
\newblock Projections on convex sets in hilbert space and spectral theory: Part
  i. projections on convex sets: Part ii. spectral theory.
\newblock In \emph{Contributions to nonlinear functional analysis}, pages
  237--424. Elsevier, 1971.

\bibitem[Zenke et~al.(2017)Zenke, Poole, and Ganguli]{zenke2017continual}
Friedemann Zenke, Ben Poole, and Surya Ganguli.
\newblock Continual learning through synaptic intelligence.
\newblock In \emph{International Conference on Machine Learning}, pages
  3987--3995. PMLR, 2017.

\bibitem[Zeno et~al.(2021)Zeno, Golan, Hoffer, and
  Soudry]{zeno2021onlineVariational}
Chen Zeno, Itay Golan, Elad Hoffer, and Daniel Soudry.
\newblock {Task-Agnostic Continual Learning Using Online Variational Bayes With
  Fixed-Point Updates}.
\newblock \emph{Neural Computation}, 33\penalty0 (11):\penalty0 3139--3177, Oct
  2021.

\bibitem[Zhang et~al.(2017)Zhang, Bengio, Hardt, Recht, and
  Vinyals]{zhang2017understanding}
Chiyuan Zhang, Samy Bengio, Moritz Hardt, Benjamin Recht, and Oriol Vinyals.
\newblock Understanding deep learning requires rethinking generalization.
\newblock In \emph{International Conference on Learning Representations}, 2017.

\end{thebibliography}

\newpage

%%%%%%%%%%%%%%%%%%%%%%%%%%%%%%%%%%%%%%%%%%%%%%%%%%%%%%%%%%%%%%%%%%%%%%%%%%%%%%%
%%%%%%%%%%%%%%%%%%%%%%%%%%%%%%%%%%%%%%%%%%%%%%%%%%%%%%%%%%%%%%%%%%%%%%%%%%%%%%%
% APPENDIX
%%%%%%%%%%%%%%%%%%%%%%%%%%%%%%%%%%%%%%%%%%%%%%%%%%%%%%%%%%%%%%%%%%%%%%%%%%%%%%%
%%%%%%%%%%%%%%%%%%%%%%%%%%%%%%%%%%%%%%%%%%%%%%%%%%%%%%%%%%%%%%%%%%%%%%%%%%%%%%%
\newpage
\appendix
\onecolumn

% TODO : \supptitle{Supplementary material}

\section{Preliminary notations and lemmas}

\subsection{Additional notations for the appendices}
\label{app:additional_notations}

We start by adding several notations to those we defined in \secref{sec:setting}.

\begin{enumerate}
\item Like in the main text, $\norm{\cdot}$ denotes either the Euclidean $\ell^2$-norm of a vector or the spectral norm of a matrix.
\item We denote the \textbf{conjugate transpose} (\ie Hermitian transpose) of complex vectors by $\vv^\hop$. 
\item 
We use the \textbf{singular value decomposition}
of \emph{real} matrices,
\ie
$$\X=\U\mSigma\V^{\top}\in\reals^{N\times d},$$
where 
$\U\in\reals^{N\times N}$,
$\V\in\reals^{d\times d}$
are two orthonormal matrices
and
$\mSigma\in\reals^{N\times d}$ has the same rank and dimensions as $\X$. 
We assume w.l.o.g. that $\sigma_1 \ge \sigma_2 \ge \dots \ge \sigma_{\rank\X} \ge \underbrace{~0~=~\dots~=~ 0~}_{\left(d-\rank{\X}\right)\text{ times}}$.
In~addition, 
we often decompose $\V$ 
into 
$\V=
\big[
\underbrace{~~\V^{r}~~}_{d\times \rank{\X}}
~\big|~
\underbrace{~~\V^\perp~~}_{d\times (d-\rank{\X})}
\big]$
to distinguish between the columns of $\V$ that span the range of $\X$ from the ones that span its orthogonal complement.
\end{enumerate}

%\newpage
\vspace{2cm}

\subsection{Useful general properties}
\label{sec:useful_properties}

Following are several 
%widely known
useful inequalities and properties that will facilitate our proofs. % in the appendix.
We only work with \emph{real} matrices, so we often use the transpose and the Hermitian transpose interchangeably.

\bigskip

\begin{lemma}
\label{lem:square_ineq}
 For any $\x_{1},\dots,\x_{T}\!\in\!\mathbb{C}^{d}$, it holds that $\left\Vert \x_{1}\!+\!\dots\!+\!\x_{T}\right\Vert ^{2}
 \le
 T\!\left(\left\Vert \x_{1}\right\Vert ^{2}+\dots+\left\Vert \x_{T}\right\Vert ^{2}\right)$.
\end{lemma}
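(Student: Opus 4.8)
The plan is to reduce the statement to the Cauchy--Schwarz inequality in $\reals^T$ (equivalently, to convexity of $t\mapsto t^2$). First I would invoke the triangle inequality for the $\ell^2$ norm, which is valid in any normed space and in particular in $\mathbb{C}^d$, to get $\norm{\x_1+\dots+\x_T}\le\norm{\x_1}+\dots+\norm{\x_T}$. Then I would apply Cauchy--Schwarz to the two vectors $(\norm{\x_1},\dots,\norm{\x_T})\in\reals^T$ and $(1,\dots,1)\in\reals^T$, which yields $\bigprn{\tsum_{i=1}^T \norm{\x_i}}^2 \le T\tsum_{i=1}^T \norm{\x_i}^2$. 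Squaring the triangle inequality and chaining it with this bound gives the claim immediately.

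An equally short alternative I might present instead: expand $\norm{\tsum_i \x_i}^2=\tsum_{i,j}\iprod{\x_i}{\x_j}$ and bound each cross term by $2\,\mathrm{Re}\,\iprod{\x_i}{\x_j}\le\norm{\x_i}^2+\norm{\x_j}^2$, which over $\mathbb{C}$ follows from $\mathrm{Re}\,\iprod{\x_i}{\x_j}\le\abs{\iprod{\x_i}{\x_j}}\le\norm{\x_i}\norm{\x_j}$ together with AM--GM. Summing over all $T^2$ ordered pairs $(i,j)$ produces exactly $T\tsum_i\norm{\x_i}^2$, since each index $i$ appears in $T$ such pairs.

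The only point requiring a moment of care --- hardly an obstacle --- is that the $\x_i$ are complex, so any step that manipulates an inner product must correctly account for $\iprod{\x_i}{\x_j}+\iprod{\x_j}{\x_i}=2\,\mathrm{Re}\,\iprod{\x_i}{\x_j}$. Both routes above sidestep this cleanly: the first works entirely with the nonnegative real numbers $\norm{\x_i}$, and the second passes through $\abs{\iprod{\x_i}{\x_j}}$ before invoking Cauchy--Schwarz. I would use the triangle-inequality-plus-Cauchy--Schwarz route for brevity, as it needs no case analysis on the complex scalars.
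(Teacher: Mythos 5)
Your proposal is correct, and your second alternative (expanding $\norm{\tsum_i \x_i}^2$ and bounding each cross term by $2\,\mathrm{Re}\,\iprod{\x_i}{\x_j}\le\norm{\x_i}^2+\norm{\x_j}^2$) is essentially the paper's own proof, which derives that cross-term bound from $0\le\norm{\x_i-\x_j}^2$. Your preferred route via the triangle inequality followed by Cauchy--Schwarz on the vector of norms is an equally valid, standard variant of the same elementary argument, so there is nothing substantive to flag.
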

\begin{proof}
Notice that
\begin{align*}
    0\le\left\Vert \x_{i}-\x_{j}\right\Vert ^{2}	
    =\left\Vert \x_{i}\right\Vert ^{2}-2\text{Re}\left(\x_{i}^{*}\x_{j}\right)+\left\Vert \x_{j}\right\Vert ^{2}\iff2\text{Re}\left(\x_{i}^{*}\x_{j}\right)\le\left\Vert \x_{i}\right\Vert ^{2}+\left\Vert \x_{j}\right\Vert ^{2}\,.
\end{align*}

Now, we see use the Cauchy-Schwarz inequality and show
\begin{align*}
    \left\Vert \x_{1}+\dots+\x_{T}\right\Vert ^{2}	
    %=\sum_{i=1}^{T}\left\Vert \x_{i}\right\Vert ^{2}+\sum_{i\neq j}\x_{i}^{*}\x_{j}
    &=\sum_{i=1}^{T}\left\Vert \x_{i}\right\Vert ^{2}+\sum_{i>j}2\text{Re}\left(\x_{i}^{*}\x_{j}\right)\le\sum_{i=1}^{T}\left\Vert \x_{i}\right\Vert ^{2}+\sum_{i>j}\left(\left\Vert \x_{i}\right\Vert ^{2}+\left\Vert \x_{j}\right\Vert ^{2}\right)
    \\
\left[\text{each appears \ensuremath{T} times}\right]
&=
T\sum_{i=1}^{T}\left\Vert \x_{i}\right\Vert ^{2}=T\left(\left\Vert \x_{1}\right\Vert ^{2}+\dots+\left\Vert \x_{T}\right\Vert ^{2}\right)~.
\end{align*}
\end{proof}

\newpage

\begin{property}[Spectral norm properties]
\label{prop:norms}
Let $\A\in\reals^{m\times n}$ and
$\B\in\reals^{n\times p}$.
%be arbitrary real matrices.
Then, the spectral norm holds the following properties:
\begin{enumerate}
    \item \textbf{Definition.}
    $\norm{\A}
    =\norm{\A}_2
    \triangleq
    \max_{\vv\in\reals^{n}\setminus\left\{0\right\}}
    \frac{\norm{\A\vv}_2}{\norm{\vv}_2}
    =
    \max_{\substack{\vv\in\reals^{n}:\\
    \norm{\vv}_2=1}}
    \norm{\A\vv}_2
    =\sigma_{1} \left(\A\right)$~;
    \item \textbf{Invariance to transposition.}
    $\norm{\A}=\Vert{\A^\top}\Vert$~;
    \item \textbf{Triangle inequality.}
    $\norm{\A+\B}\le\norm{\A}+\norm{\B}$~;
    \item \textbf{Squared norm of matrix sum.}
    $\norm{\A_1 + \dots + \A_T}^2
    \le
    T \sum_{i=1}^{T} \norm{\A_i}^2$~;
    \item \textbf{Multiplicative norm inequality.}
    $\norm{\A\B}\le\norm{\A}\norm{\B}$~;
    \item \textbf{Invariance to rotations.}
    Let $\V_1\in\reals^{m'\times m}$
    and $\V_2\in\reals^{n'\times n}$ be matrices with orthonormal \emph{columns} ($n'\ge n, m'\ge m$).
    Then,
    $\norm{\V_1\A}=\norm{\A}=\Vert{\A\V_2^\top}\Vert=\Vert{\V_1\A\V_2^\top}\Vert$~.
\end{enumerate}
\end{property}
See Chapter~5.2 in \citet{meyer2000matrix} for the proofs and for more such properties.
The squared norm inequality follows immediately from
\lemref{lem:square_ineq} and the definition of the spectral norm.

\hspace{0.5cm}

\begin{property}[Orthogonal projection properties]
\label{prop:projections}
Let $\mP$ be a \emph{real} orthogonal-projection linear-operator that projects onto a linear subspace $H\subseteq\mathbb{R}^{d}$.
Let $\vv\in \reals^{d}$ be an arbitrary vector.
Then, $\mP$ holds the following properties: 
\begin{enumerate}
    \item 
    \textbf{Geometric definition.}
    $\norm{\vv-\mP\vv}=
    \inf_{\x\in H}\norm{\vv-\x}$~;
    \item \textbf{Symmetry.} $\mP=\mP^\top$~;
    \item \textbf{Idempotence.}
    $\mP^2=\mP$~;
    \item $\I-\mP$ is also a projection operator, projecting onto the subspace orthogonal to $H$.
    
    Consequentially, $\left(\I-\mP\right)\mP=\0$~;
    \item
    Using the above,
    we get 
    $\norm{\left(\I-\mP\right)\vv}^2
    = \vv^*\left(\I-\mP\right)^2\vv
    = \vv^*\left(\I-\mP\right)\vv
    =\norm{\vv}^2-\norm{\mP\vv}^2$~;
    \item \textbf{Contraction.} 
    $\norm{\mP\vv} \le \norm{\vv}$, holding in equality if and only if $\mP\vv=\vv$~;
    \item \textbf{Singular values.} 
    All the singular values of $\mP$ are in $\left\{0,1\right\}$, implying that %its spectral norm is 
    $\norm{\mP}\!=\!1 \!\iff\! \mP\!\neq\!\0$.
\end{enumerate}
\end{property}
See \citet{zarantonello1971projections} for the proofs and for more properties.

\hspace{0.5cm}

The next corollary stems directly from the properties above
(see Chapter~5.13 in \citet{meyer2000matrix}).
\begin{corollary}[Projection onto solution spaces]
    Let $\X_{m}$ be the matrix of task $m$. 
    Then, the projection onto its solution space,
    or equivalently onto its null space,
    is given by:
    $
        \mP_{m}
        \!=\!
        \I \!-\! 
        \X_{m}^{+}
        \X_{m}
        \!=\!
        \V_{m}
        \left(\I\!-\!\mSigma_{m}^{+}\mSigma_{m}\right)
        \V_{m}^\top
        \!=\!
        \V_{m}^{\perp}\V^{\perp^\top}_{m}
   $.
\end{corollary}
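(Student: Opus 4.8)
The statement to prove is the Corollary, which identifies the orthogonal projection onto the solution space of task $m$ (equivalently, onto $\kernel(\X_m)$) with the explicit formula $\mP_m = \I - \X_m^{+}\X_m = \V_m(\I - \mSigma_m^{+}\mSigma_m)\V_m^{\top} = \V_m^{\perp}\V_m^{\perp\top}$. The plan is to verify these three equalities in turn, starting from the SVD $\X_m = \U_m \mSigma_m \V_m^{\top}$ introduced in Appendix~\ref{app:additional_notations}, and then to check that the resulting operator genuinely is the orthogonal projection onto $\kernel(\X_m)$ by confirming the defining properties (symmetry, idempotence, correct range).

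First I would recall the SVD expression for the Moore--Penrose pseudo-inverse: since $\X_m = \U_m \mSigma_m \V_m^{\top}$ with $\U_m, \V_m$ orthogonal, we have $\X_m^{+} = \V_m \mSigma_m^{+} \U_m^{\top}$, where $\mSigma_m^{+}$ is the (rectangular) diagonal matrix obtained by inverting the nonzero singular values and transposing the shape. Substituting, $\X_m^{+}\X_m = \V_m \mSigma_m^{+} \U_m^{\top} \U_m \mSigma_m \V_m^{\top} = \V_m \mSigma_m^{+}\mSigma_m \V_m^{\top}$, using $\U_m^{\top}\U_m = \I$. Hence $\mP_m = \I - \X_m^{+}\X_m = \V_m\V_m^{\top} - \V_m \mSigma_m^{+}\mSigma_m \V_m^{\top} = \V_m(\I - \mSigma_m^{+}\mSigma_m)\V_m^{\top}$, which is the second equality. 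For the third, observe that $\mSigma_m^{+}\mSigma_m$ is the $d\times d$ diagonal matrix with a $1$ in each of the first $\rank(\X_m)$ diagonal entries and $0$ elsewhere, so $\I - \mSigma_m^{+}\mSigma_m$ is the diagonal matrix with $0$s in the first $\rank(\X_m)$ positions and $1$s in the last $d - \rank(\X_m)$ positions. Conjugating by $\V_m = [\V_m^{r} \mid \V_m^{\perp}]$ and partitioning the block-diagonal accordingly annihilates the $\V_m^{r}$ columns and leaves $\V_m^{\perp}\V_m^{\perp\top}$, giving the last equality.

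It then remains to argue that $\V_m^{\perp}\V_m^{\perp\top}$ is indeed \emph{the} orthogonal projection onto $\kernel(\X_m)$, so that the identification with $\mP_m$ as defined in the main text (the projection onto $\sol{m}$'s direction space $\kernel(\X_m)$, cf.~\eqref{eq:solution_spaces}) is justified. This is routine: symmetry is immediate, idempotence follows from $\V_m^{\perp\top}\V_m^{\perp} = \I_{d-\rank(\X_m)}$ (orthonormal columns), its range is $\range(\V_m^{\perp})$, and since the columns of $\V_m^{\perp}$ are exactly the right-singular vectors associated with zero singular values, $\X_m \V_m^{\perp} = \U_m \mSigma_m \V_m^{\top}\V_m^{\perp} = \mathbf{0}$, so $\range(\V_m^{\perp}) \subseteq \kernel(\X_m)$; a dimension count ($\dim\kernel(\X_m) = d - \rank(\X_m) = \dim\range(\V_m^{\perp})$) gives equality. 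Invoking Property~\ref{prop:projections}, an operator that is symmetric, idempotent, and has range $\kernel(\X_m)$ is the orthogonal projection onto $\kernel(\X_m)$.

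I do not anticipate a genuine obstacle here --- this is a standard linear-algebra fact and the corollary is essentially bookkeeping on top of the SVD. The only point requiring a little care is the rectangular shape of $\mSigma_m$ and $\mSigma_m^{+}$ (so that $\mSigma_m^{+}\mSigma_m$ comes out $d\times d$ rather than $N\times N$) and keeping the block partition of $\V_m$ consistent with the ordering convention $\sigma_1 \ge \cdots \ge \sigma_{\rank\X_m} > 0 = \cdots = 0$ fixed in Appendix~\ref{app:additional_notations}; everything else is a direct substitution. As the excerpt notes, this also follows from Chapter~5.13 of \citet{meyer2000matrix}, so the proof can be kept brief.
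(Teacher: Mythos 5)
Your proposal is correct, and it is exactly the standard SVD bookkeeping that the paper itself relies on: the paper gives no proof of this corollary, simply noting that it "stems directly from the properties above" and citing Chapter~5.13 of \citet{meyer2000matrix}, which is the same computation you spell out ($\X_m^{+}=\V_m\mSigma_m^{+}\U_m^{\top}$, cancellation of $\U_m^{\top}\U_m$, and the block structure of $\I-\mSigma_m^{+}\mSigma_m$). Your added verification that $\V_m^{\perp}\V_m^{\perp\top}$ is symmetric, idempotent, and has range $\kernel(\X_m)$ is sound and harmless, so there is nothing to correct.
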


\newpage

\subsection{Principal angles between two tasks}
\label{app:principal_angles}
We continue our discussion from \secref{sec:forgetting_in_linear} and present principal angles more thoroughly for completeness.
We mostly follow the definitions in \citep{bjorck1973numerical,bargetz2020angles,meyer2000matrix}.

\begin{definition}[Principal angles between two tasks]
Let $\X_1\!\in\!\reals^{n_1 \times d},\X_2\!\in\!\reals^{n_2 \times d}$
be two data matrices, corresponding to two given tasks.
Let  $r_{\min} = \min(\rank(\X_1),\rank(\X_2))$
be their minimal rank.
The principal angles $\theta_1, \dots, \theta_{r_{\min}}\in\left[0,\nicefrac{\pi}{2}\right]$
between the two tasks,
\ie the angles between the row spaces of $\X_1,\X_2$,
are recursively defined by
\begin{align*}
\vu_{i},\!
\vv_{i}\!=&
\argmax
%~
\abs{\vu^{\hop}_{i} \vv_{i}}~~
\\
&\hskip25pt \emph{s.t.}~
\vu_i\in\range(\X_1^\top),~
% \\
% &\hskip42pt
\vv_i\in\range(\X_2^\top),~
\norm{\vu_i}=\norm{\vv_i}=1,~~
\\
&\hskip43pt
%\forall j=1,\dots,{i-1}:~
\forall j\!\in\!\cnt{i-1}\!:~
\vu_{i}\!\perp\!\vu_{j},~
\vv_{i}\!\perp\!\vv_{j}
\\
\theta_i = &
\arccos\abs{\vu^{\hop}_{i} \vv_{i}}~.
\end{align*}
%
% where
% $\sol{1}$ and $\sol{2}$ are the solution spaces of the given two tasks (defined in \eqref{eq:solution_spaces}).
\end{definition}

\medskip

Notice that according to our definition,
the principal angles hold
$\nicefrac{\pi}{2}\ge\theta_{r_{\min}}\ge\dots\ge\theta_1\ge 0$~. 
Important for our analysis is the fact that 
unlike the principal \emph{vectors}
$\left\{\vu_i, \vv_i\right\}_i$,
the principal \emph{angles} between two subspaces are uniquely defined
\citep{bjorck1973numerical}.
Two fundamental principal angles in the field of alternating projections are the minimal principal angle, \ie the Dixmier angle \citep{dixmier1949etude};
and the minimal non-zero principal angle,
\ie the Friedrichs angle \citep{friedrichs1937angles}.

\begin{figure}[h!]
  \begin{minipage}[c]{0.51\textwidth}
    \caption{
    \small
    \textbf{
    Principal angles between subspaces.
    }
    Since the two hyperplanes share an intersecting direction, $\vu_1, \vv_1$ are chosen inside this intersection, 
    meaning that
    ${\theta_1\!=\!\arccos\abs{\vu^{\hop}_{1} \vv_{1}}\!=\!\arccos(1)\!=\!0}$.
    In this case, $\theta_1$ is the Dixmier angle.
    From the remaining directions that are orthogonal to $\vu_1,\vv_1$, the recursive definition chooses two unit vectors $\vu_2,\vv_2$, forming a non-zero angle ${\theta_2\!=\!\arccos\abs{\vu^{\hop}_{2} \vv_{2}}}$.
    In this case, $\theta_2$ is the Friedrichs angle.
    }
  \end{minipage}
  \hfill
  \begin{minipage}[c]{0.48\textwidth}
    \includegraphics[width=.99\textwidth]{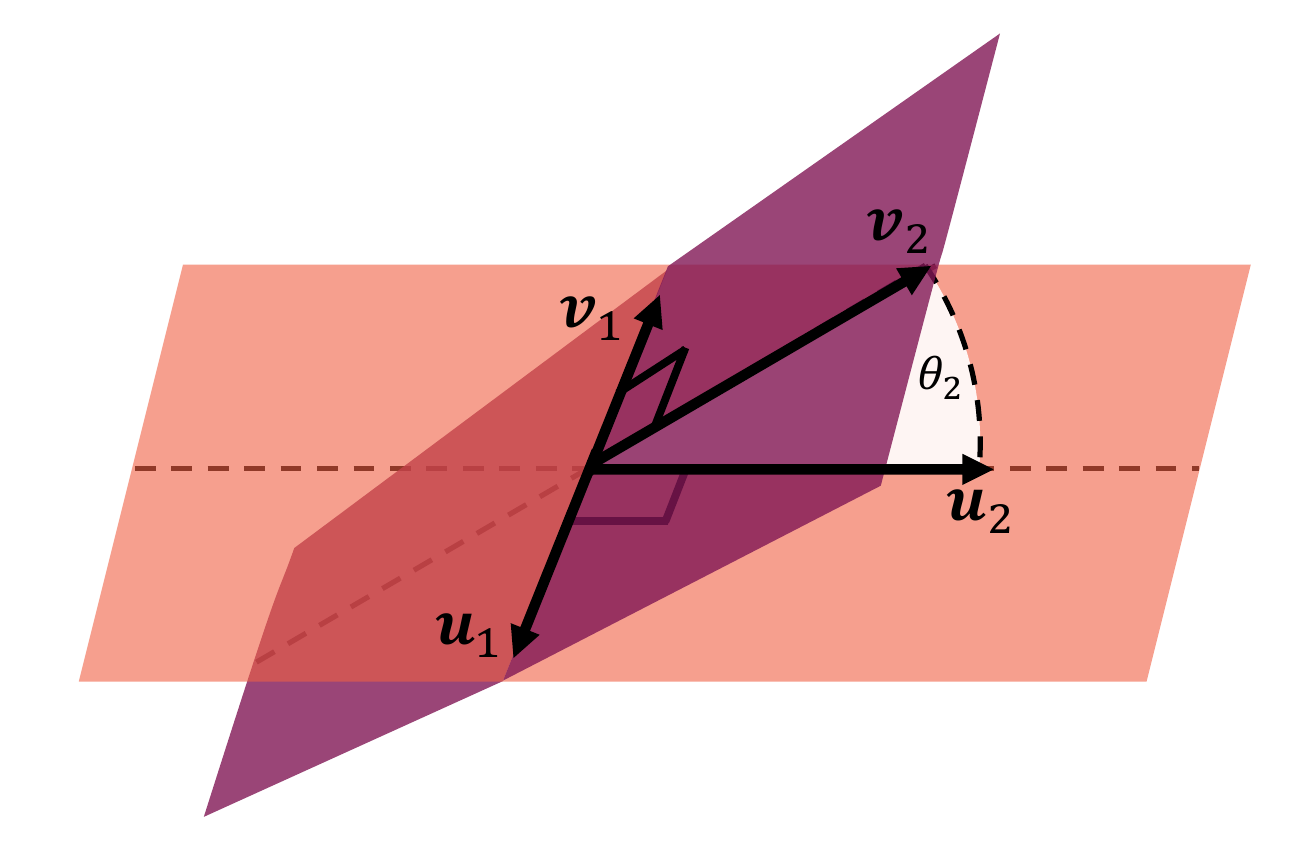}
  \end{minipage}
\end{figure}

\begin{claim}[Principal angles between "equivalent" subspaces]
\label{clm:same_angles}
Let $\bigprn{\X_1,\y_1}, \bigprn{\X_2, \y_2}$ be two tasks.
Then, the sets of \emph{non-zero} principal angles between the following pairs of subspaces, are all the same:
\begin{enumerate}
    \item The data row-spaces, \ie $\range(\X_1^\top)$ and $\range(\X_2^\top)$ ;
    \item The null spaces, \ie $\kernel(\X_1)$ and $\kernel(\X_2)$ ;
    \item The affine solution spaces, \ie 
    $\sol{1}=\teacher+\kernel(\X_1)$ and $\sol{2}=\teacher+\kernel(\X_1)$~. 
\end{enumerate}
\end{claim}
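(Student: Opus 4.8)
The plan is to reduce all three equivalences to a single fact: principal angles between two subspaces depend only on the subspaces themselves, and passing to orthogonal complements or translating by a fixed vector either preserves the subspace pair or changes it in a way that only affects the zero angles. First I would observe that the equivalence between items (1) and (2) is immediate once one notes that $\kernel(\X_m)$ is exactly the orthogonal complement of $\range(\X_m^\top)$ inside $\reals^d$. So the content of that part is the general statement: for subspaces $H_1,H_2 \subseteq \reals^d$, the nonzero principal angles between $H_1,H_2$ coincide with the nonzero principal angles between $H_1^\perp,H_2^\perp$. I would prove this via the standard SVD characterization: if $\A_1,\A_2$ have orthonormal columns spanning $H_1,H_2$ respectively, the principal angles are the $\arccos$ of the singular values of $\A_1^\top \A_2$; completing $\A_1,\A_2$ to orthonormal bases $[\A_1 \mid \A_1^\perp]$, $[\A_2 \mid \A_2^\perp]$ and using that these are orthogonal matrices, the full matrix $[\A_1 \mid \A_1^\perp]^\top [\A_2 \mid \A_2^\perp]$ is orthogonal, and a block/CS-decomposition argument shows the singular values of the $(\A_1^\perp)^\top \A_2^\perp$ block are in bijection with those of $\A_1^\top \A_2$ up to multiplicities of $0$ and $1$ — hence the \emph{nonzero} angles (equivalently, the angles strictly less than $\pi/2$ on the complement side) match. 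I would cite \citet{bjorck1973numerical} or \citet{meyer2000matrix} for the CS decomposition rather than reproving it.

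Next, for the equivalence of item (3) with items (1)–(2), I would use \eqref{eq:solution_spaces}: $\sol{m} = \teacher + \kernel(\X_m)$ is an affine translate of the linear subspace $\kernel(\X_m)$. The definition of principal angles between affine subspaces is naturally taken to be the principal angles between their direction spaces (the parallel linear subspaces through the origin), so the nonzero principal angles between $\sol{1}$ and $\sol{2}$ are by definition those between $\kernel(\X_1)$ and $\kernel(\X_2)$, which we have already identified with those of item (2). I would spell this out explicitly, since the claim's statement writes $\sol{2}$ with a (presumably typographical) $\kernel(\X_1)$; the point to make is simply that the common translation vector $\teacher$ is irrelevant to angles.

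The main obstacle, and the only step requiring genuine care, is the complement statement — showing the nonzero principal angles are preserved under $H_i \mapsto H_i^\perp$ — because one must track exactly how dimensions and the counts of $0$'s and $\pi/2$'s shift. The cleanest route is the CS decomposition of the orthogonal matrix formed from completed bases: it simultaneously block-diagonalizes all four pairings $(H_1,H_2), (H_1^\perp,H_2), (H_1,H_2^\perp), (H_1^\perp,H_2^\perp)$, and reading off the diagonal blocks shows the angles in $(0,\pi/2)$ are literally the same set with the same multiplicities across all four, while $0$ and $\pi/2$ angles get swapped/absorbed according to the relevant dimensions. I would present this as a short lemma (principal angles and orthogonal complements) with the CS decomposition cited, then apply it twice — once to pass between $\range(\X_m^\top)$ and $\kernel(\X_m)$, and once implicitly already handled — and finish with the trivial affine-translation remark for item (3).
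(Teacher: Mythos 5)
Your proposal is correct and takes essentially the same route as the paper: the paper also reduces everything to the fact that the non-zero principal angles between two subspaces coincide with those between their orthogonal complements (it simply cites Theorem~2.7 of Knyazev and Argentati for this, where you sketch a CS-decomposition proof of the same lemma), and then disposes of item (3) by noting that the common translation by $\teacher$ leaves the angles of the direction spaces $\kernel(\X_1),\kernel(\X_2)$ unchanged. Your reading of the $\kernel(\X_1)$ in item (3) as a typo for $\kernel(\X_2)$ is also consistent with the paper's intent.
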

\begin{proof}
The equivalence between (1) and (2) is proven by
Theorem~2.7 of \citet{knyazev2006principal}
(stating that the \emph{non-zero} principal angles between 
two subspaces are essentially the same as those between
their orthogonal complements).
Moreover, since the solution spaces $\sol{1},\sol{2}$ are merely affine subspaces of the null spaces themselves, they induce the exact same principal angles.

It \emph{is} possible however that these pairs of subspaces \emph{do} have a different amount of zero principal angles between them, but this generally does not interfere with our analyses in the appendices.
\end{proof}

\newpage

We now use principal angles to prove a lemma that will facilitate our proofs for Sections~\ref{sec:arbitrary}~and~\ref{sec:cyclic}.

\begin{lemma}
\label{lem:principal_form}
Let $\X_1,\X_2$ be two data matrices of two tasks
and let the corresponding orthogonal projections onto their null spaces
be
${\mP_1=\I-\X_1^+\X_1,~
\mP_2=\I-\X_2^+\X_2}$.
Then, for any $n\in\naturals$ we have that
\begin{align*}
\norm{\left(\I-\mP_{1}\right)
	\left(\mP_{2}\mP_{1}\right)^{n}}^{2}
=
\max_{i}
\left\{
\left(\cos^2 \theta_i\right)^{2n-1}(1-\cos^2 \theta_i)
\right\}~,
\end{align*}
where 
$\{\theta_i\}_i \subseteq
\left(0, \tfrac{\pi}{2}\right]$
are the non-zero principal angles between the two tasks in $\coll$,
\ie between $\range(\X_1^\top)$ and $\range(\X_2^\top)$ or equivalently
between $\kernel(\X_1)$ and $\kernel(\X_2)$
or between the affine solution spaces $\sol{1}$ and $\sol{2}$. 
\end{lemma}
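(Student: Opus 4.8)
The plan is to reduce the statement to the classical simultaneous block-diagonalization of two orthogonal projections. By the theory of principal angles and the associated CS-type decomposition (see \citet{bjorck1973numerical,meyer2000matrix}), there is an orthonormal basis of $\reals^d$ in which both $\mP_1$ and $\mP_2$ are block-diagonal, with each block being either $1\times1$ — where $(\mP_1,\mP_2)$ restricts to one of $(1,1),(1,0),(0,1),(0,0)$ — or a generic $2\times2$ block indexed by a non-zero principal angle $\theta_i\in(0,\tfrac{\pi}{2}]$ between $\kernel(\X_1)$ and $\kernel(\X_2)$, on which
\[
\mP_1=\begin{pmatrix}1&0\\0&0\end{pmatrix},\qquad
\mP_2=\begin{pmatrix}\cos^2\theta_i&\cos\theta_i\sin\theta_i\\ \cos\theta_i\sin\theta_i&\sin^2\theta_i\end{pmatrix}.
\]
Since $(\I-\mP_1)(\mP_2\mP_1)^n$ is then block-diagonal in the same basis, its squared spectral norm is the maximum over blocks of the squared spectral norm of the corresponding restriction, so it suffices to evaluate each block.

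Next I would compute each block. For every $1\times1$ block the restriction of $(\I-\mP_1)(\mP_2\mP_1)^n$ vanishes: if $\mP_1=1$ then $\I-\mP_1=0$, and if $\mP_1=0$ then already $\mP_2\mP_1=0$; hence in all four $1\times1$ cases the contribution is $0$ for every $n\in\naturals$. For a generic $2\times2$ block with angle $\theta$, write $c=\cos\theta$, $s=\sin\theta$; a direct computation gives $\mP_2\mP_1=\begin{pmatrix}c^2&0\\cs&0\end{pmatrix}$, and since $(\mP_2\mP_1)^2=c^2\,\mP_2\mP_1$ one gets $(\mP_2\mP_1)^n=(c^2)^{n-1}\mP_2\mP_1$ by induction. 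Multiplying by $\I-\mP_1=\begin{pmatrix}0&0\\0&1\end{pmatrix}$ yields $(\I-\mP_1)(\mP_2\mP_1)^n=(c^2)^{n-1}\begin{pmatrix}0&0\\cs&0\end{pmatrix}$, whose spectral norm equals $(c^2)^{n-1}|c|\,s$ (with $c\ge 0$, $s\ge 0$ as $\theta\in(0,\tfrac{\pi}{2}]$). Squaring gives $(c^2)^{2n-1}s^2=(\cos^2\theta)^{2n-1}(1-\cos^2\theta)$.

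Finally I would assemble the blocks: the maximum of the per-block squared norms is $\max_i\{(\cos^2\theta_i)^{2n-1}(1-\cos^2\theta_i)\}$ (the $1\times1$ blocks contribute $0$ and never dominate; if there are no generic blocks both sides are $0$), which by the block-diagonal observation equals $\bignorm{(\I-\mP_1)(\mP_2\mP_1)^n}^2$, as claimed. The main obstacle — and essentially the only non-elementary step — is setting up the simultaneous $2\times2$/$1\times1$ decomposition and checking that the angles appearing in the generic blocks are exactly the non-zero principal angles between $\kernel(\X_1)$ and $\kernel(\X_2)$ (equivalently, by \clmref{clm:same_angles}, between $\range(\X_1^\top)$ and $\range(\X_2^\top)$, or between $\sol{1}$ and $\sol{2}$); once this structural fact is established, the per-block computations above finish the proof.
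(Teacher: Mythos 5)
Your proof is correct, but it takes a genuinely different route from the paper's. You invoke the simultaneous block-diagonalization of the two projections (the Halmos/CS two-subspace decomposition) into $1\times1$ blocks and generic $2\times2$ blocks carrying the non-zero principal angles, and then compute $(\I-\mP_1)(\mP_2\mP_1)^n$ exactly on each block; your $2\times2$ computation $(\mP_2\mP_1)^n=(\cos^2\theta)^{n-1}\mP_2\mP_1$ and the resulting per-block squared norm $(\cos^2\theta)^{2n-1}(1-\cos^2\theta)$ are right, the $1\times1$ blocks (including any $\theta_i=\tfrac{\pi}{2}$ directions) contribute $0$ consistently with the formula, and the block-diagonal structure justifies taking the maximum over blocks. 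The paper instead never block-diagonalizes: writing $\mP_1=\V_1^{\perp}\V_1^{\perp^\top}$, $\mP_2=\V_2^{\perp}\V_2^{\perp^\top}$, it uses $\norm{\A}^2=\norm{\A^\top\A}$ and idempotence to reduce the squared norm to $\bignorm{\mat{G}^{2n-1}-\mat{G}^{2n}}$ for the Gram matrix $\mat{G}=\V_1^{\perp^\top}\V_2^{\perp}\V_2^{\perp^\top}\V_1^{\perp}$, then applies the known fact (Theorem~2.1 of Risteski--Tren\v{c}evski) that the eigenvalues of $\mat{G}$ are $\cos^2\theta_i$, together with \clmref{clm:same_angles} for the equivalence of the angle sets. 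The trade-off: your route is more geometric and, once the decomposition is granted, entirely elementary and also exposes exactly which directions saturate the norm; the paper's route avoids setting up the full simultaneous decomposition (the one non-elementary structural input you acknowledge) and only needs the eigenvalue--cosine identity for a single Gram matrix. Both arguments hinge on an external structural fact of comparable depth, so neither has a gap; just make sure you cite a precise statement of the two-subspace decomposition identifying the block angles with the principal angles between $\kernel(\X_1)$ and $\kernel(\X_2)$, since that identification is carrying the same weight that the Gram-matrix eigenvalue theorem carries in the paper.
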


\begin{proof}
We use the SVD notation defined in \appref{app:additional_notations}
and denote by $\V_1^\perp,\,\V_2^\perp$ the matrices whose orthonormal columns span $\kernel(\X_1),\,\kernel(\X_2)$ respectively.
Thus, we can express the projections as 
$\mP_1=\V_1^\perp\V_1^{\perp^\top}$ and $\mP_2=\V_2^\perp\V_2^{\perp^\top}$.

Now, we show that
\begin{align*}
%\label{eq:forgetting_2_tasks}
\begin{split}
    \norm{\left(\I-\mP_{1}\right)
	\left(\mP_{2}\mP_{1}\right)^{n}}^{2}
	&=
	\norm{\left(\I-\mP_{1}\right)
	\left(
	\V_{2}^{\perp}\V_{2}^{\perp^\top}
	\V_{1}^{\perp}\V_{1}^{\perp^\top}
	\right)^{n}}^{2}
	\\
	&
	=
	\norm{\left(\I-\mP_{1}\right)
	\V_{2}^{\perp}\V_{2}^{\perp^\top}
	\V_{1}^{\perp}
	\left(
	\V_{1}^{\perp^\top}
	\V_{2}^{\perp}\V_{2}^{\perp^\top}
	\V_{1}^{\perp}
	\right)^{n-1}
	\V_{1}^{\perp^\top}}^{2}
	\\
	\explain{\substack{\text{spectral norm}\\ \text{properties}}}
	&
	=
	\norm{\left(\I-\mP_{1}\right)
	\V_{2}^{\perp}\V_{2}^{\perp^\top}
	\V_{1}^{\perp}
	\left(
	\V_{1}^{\perp^\top}
	\V_{2}^{\perp}\V_{2}^{\perp^\top}
	\V_{1}^{\perp}
	\right)^{n-1}}^{2}~.
\end{split}
\end{align*}
We now notice that the idempotence of $\mP_1=\V_{1}^{\perp}\V_{1}^{\perp^\top}$
and
$\mP_2 = \V_{2}^{\perp}\V_{2}^{\perp^\top}$
implies
\begin{align*}
	&
	\V_{1}^{\perp^\top}
	\V_{2}^{\perp}\V_{2}^{\perp^\top}
	\left(\I-\mP_{1}\right)^\top
	\left(\I-\mP_{1}\right)
	\V_{2}^{\perp}\V_{2}^{\perp^\top}
	\V_{1}^{\perp}
	=
	\V_{1}^{\perp^\top}
	\V_{2}^{\perp}\V_{2}^{\perp^\top}
	\left(\I-\mP_{1}\right)
	\V_{2}^{\perp}\V_{2}^{\perp^\top}
	\V_{1}^{\perp}
	\\
	&\hspace{1cm}=
	\V_{1}^{\perp^\top}
	\V_{2}^{\perp}\V_{2}^{\perp^\top}
	\V_{1}^{\perp}
	-
	\V_{1}^{\perp^\top}
	\V_{2}^{\perp}\V_{2}^{\perp^\top}
	\V_{1}^{\perp}\V_{1}^{\perp^\top}
	\V_{2}^{\perp}\V_{2}^{\perp^\top}
	\V_{1}^{\perp}
	\\
	&\hspace{1cm}=
	\V_{1}^{\perp^\top}
	\V_{2}^{\perp}\V_{2}^{\perp^\top}
	\V_{1}^{\perp}
	-
	\prn{
	\V_{1}^{\perp^\top}
	\V_{2}^{\perp}\V_{2}^{\perp^\top}
	\V_{1}^{\perp}
	}^2~.
\end{align*}
Since for the spectral norm of any real matrix $\A$ it holds that
$\norm{\A}^2=\norm{\A^\top \A}$,
we get that
\begin{align*}
	&\norm{\left(\I-\mP_{1}\right)
	\V_{2}^{\perp}\V_{2}^{\perp^\top}
	\V_{1}^{\perp}
	\left(
	\V_{1}^{\perp^\top}
	\V_{2}^{\perp}\V_{2}^{\perp^\top}
	\V_{1}^{\perp}
	\right)^{n-1}}^{2}
	\\
	&
	=
	\norm{
	\left(
	\V_{1}^{\perp^\top}
	\V_{2}^{\perp}\V_{2}^{\perp^\top}
	\V_{1}^{\perp}
	\right)^{n-1}
	\V_{1}^{\perp^\top}
	\V_{2}^{\perp}\V_{2}^{\perp^\top}
	\left(\I-\mP_{1}\right)^2
	\V_{2}^{\perp}\V_{2}^{\perp^\top}
	\V_{1}^{\perp}
	\left(
	\V_{1}^{\perp^\top}
	\V_{2}^{\perp}\V_{2}^{\perp^\top}
	\V_{1}^{\perp}
	\right)^{n-1}
	}
	\\
	&
	=
	\norm{
	\left(
	\V_{1}^{\perp^\top}
	\V_{2}^{\perp}\V_{2}^{\perp^\top}
	\V_{1}^{\perp}
	\right)^{2n-1}
	-
	\left(
	\V_{1}^{\perp^\top}
	\V_{2}^{\perp}\V_{2}^{\perp^\top}
	\V_{1}^{\perp}
	\right)^{2n}
	}~.
\end{align*}

\newpage

Denote the spectral decomposition of the 
Gram matrix 
$\V_{1}^{\perp^\top}
    \V_{2}^{\perp}
    \V_{2}^{\perp^\top}
\V_{1}^{\perp}\succeq\0$
as
$\Q\mLambda \Q^\top$,
with~its (non-negative) eigenvalues ordered in a non-ascending order on the diagonal of $\mLambda$ and $\Q$ being some orthonormal matrix.
The upper bound thus becomes
\begin{align*}
	\norm{
	\bigprn{
    \Q\mLambda \Q^\top
	}^{2n-1}
	-
	\bigprn{
	\Q\mLambda \Q^\top
	}^{2n}
	}
    &=
	\norm{\mLambda^{2n-1} - \mLambda^{2n}}~.
    \\
    &
=
\max_{i}
\lambda_i^{2n-1}(1-\lambda_i)
=
\max_{i}
\left\{
\left(\cos^2 \theta_i\right)^{2n-1}(1-\cos^2 \theta_i)
\right\}~,
\end{align*}
where
$\{\theta_i\}_i \subseteq
\left[0, \tfrac{\pi}{2}\right]$
are all the principal angles (zeros included)
between $\kernel(\X_1)$ and $\kernel(\X_2)$.
The last equality stems from a known analysis result
(Theorem~2.1 in \citet{risteski2001principal}) 
relating the principal angles 
between $\kernel(\X_1)$ and $\kernel(\X_2)$
to the eigenvalues of the Gram matrix we defined,
\ie
${\V_{1}^{\perp^\top}
    \V_{2}^{\perp}
    \V_{2}^{\perp^\top}
\V_{1}^{\perp}=\Q\mLambda \Q^\top}$. 
More formally, the result states 
that $\forall i$, $\lambda_i = \cos^2 \theta_i$.
See also Chapter~5.15 in \citet{meyer2000matrix} for more detailed explanations on this relation.

\bigskip

We conclude this proof by using
Claim~\ref{clm:same_angles} showing that the \emph{non-zero} principal angles between 
$\kernel(\X_1)$ and $\kernel(\X_2)$
are essentially the same as those between
their orthogonal complements, 
\ie
$\range(\X^\top_1)$ and $\range(\X^\top_2)$.
\end{proof}

\newpage
\subsubsection{Auxiliary lemmas on forgetting}

We first state an auxiliary lemma for deriving lower bounds on the forgetting of tasks of rank $d\!-\!1$.

\begin{lemma}[Lower bound on forgetting in the $r=d-1$ case]
\label{lem:forgetting_lower_bound}
Let $\coll\in\Coll$ be a task collection with $T$ data matrices of rank $r=d-1$.
Let $\vv_1,\dots,\vv_T$
be the normalized vectors spanning the $T$ rank-\emph{one} solution spaces $\sol{1},\dots,\sol{T}$.
Let $\ang{i}{j}\in\left[0,\pi/2\right]$
be the angle between $\vv_{\tau(i)}$ and $\vv_{\tau(j)}$, $\forall i,j\in\cnt{k}$,
meaning that 
$\cos^2 \ang{i}{j} 
\!= \!
\bigprn{\vv_{\tau(i)}^\top
\vv_{\tau(j)}}^2$.
Finally, let $\teacher$ be the minimum norm offline solution of $\coll$.
Then, the forgetting on $\coll$ after $k$ iterations is lower bounded by
\begin{align*}
    F_{\tau,\coll}(k)
    %= G_{\tau,S}(k)
    \ge
    \min_{m\in\cnt{T}}
    \big\{\sigma_{\min}^2 (\X_m)\big\}
    \left(
        \vv_{\tau(1)}^\top
        \teacher
    \right)^2
    \frac{1}{k}
    \left(
        \prod_{i=1}^{k-1}
        \cos^2 \ang{i}{i+1} 
    \right)
    \sum_{j=1}^{k-1}
    \left(1-
        \cos^2 \ang{j}{k} 
    \right)
    ~,
\end{align*}
\end{lemma}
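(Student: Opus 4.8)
The goal is a lower bound on $F_{\tau,\coll}(k)$ in the special case where all $T$ tasks have rank $r = d-1$, so every solution space $\sol{m}$ is a line spanned by a unit vector $\vv_m$, and every projection $\mP_m = \vv_m\vv_m^\top$ is rank-one. The plan is to start from the exact expression for forgetting in \eqref{eq:linear_forgetting}, namely $F_{\tau,\coll}(k) = \tfrac{1}{k}\sum_{\itr=1}^k \bignorm{\X_{\tau(\itr)}\mP_{\tau(k)}\cdots\mP_{\tau(1)}\teacher}^2$, and to lower bound it by keeping \emph{only the last term}, $\itr = k$ — no, wait; the term at $\itr=k$ vanishes since $\w_k$ fits task $\tau(k)$ exactly. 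So instead I would retain the full sum over $\itr \in \{1,\dots,k-1\}$, and use $\bignorm{\X_{\tau(\itr)}\vu}^2 \ge \sigma_{\min}^2(\X_{\tau(\itr)})\bignorm{(\I-\mP_{\tau(\itr)})\vu}^2 \ge \min_m \sigma_{\min}^2(\X_m)\,\bignorm{(\I-\mP_{\tau(\itr)})\vu}^2$ to pass from the data matrix to the projection residual. This is the reverse of the inequality used in \eqref{eq:geometric}, and it is valid because on the range of $\X_m$ (which is where $(\I-\mP_m)\vu$ lives) the smallest singular value controls the norm from below.

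\textbf{The rank-one chain computation.} With $\vu_k \triangleq \mP_{\tau(k)}\cdots\mP_{\tau(1)}\teacher$, the chain of rank-one projections telescopes: $\mP_{\tau(\itr)}\cdots\mP_{\tau(1)}\teacher = \bigl(\prod_{i=1}^{\itr-1}\cos\ang{i}{i+1}\bigr)\bigl(\vv_{\tau(1)}^\top\teacher\bigr)\,\vv_{\tau(\itr)}$ up to a sign, because each application of $\mP_{\tau(i+1)} = \vv_{\tau(i+1)}\vv_{\tau(i+1)}^\top$ to a multiple of $\vv_{\tau(i)}$ produces $\vv_{\tau(i+1)}^\top\vv_{\tau(i)}$ times $\vv_{\tau(i+1)}$, and $(\vv_{\tau(i+1)}^\top\vv_{\tau(i)})^2 = \cos^2\ang{i}{i+1}$. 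Hence $\bignorm{\vu_k}^2 = \bigl(\prod_{i=1}^{k-1}\cos^2\ang{i}{i+1}\bigr)(\vv_{\tau(1)}^\top\teacher)^2$, and more importantly $\vu_k$ is a known scalar multiple of $\vv_{\tau(k)}$. Then for each $\itr < k$, $\bignorm{(\I-\mP_{\tau(\itr)})\vu_k}^2 = \bignorm{\vu_k}^2 - \bignorm{\mP_{\tau(\itr)}\vu_k}^2 = \bignorm{\vu_k}^2(1 - \cos^2\ang{\itr}{k})$, using that $\mP_{\tau(\itr)}\vu_k = (\vv_{\tau(\itr)}^\top \vv_{\tau(k)})(\text{scalar})\vv_{\tau(\itr)}$ and squaring. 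Summing over $\itr = 1,\dots,k-1$ (the $\itr=k$ term contributes zero) and dividing by $k$ gives exactly the claimed bound, after pulling out $\min_m\sigma_{\min}^2(\X_m)$ and $\bignorm{\vu_k}^2 = (\vv_{\tau(1)}^\top\teacher)^2\prod_{i=1}^{k-1}\cos^2\ang{i}{i+1}$.

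\textbf{Writing it.} Concretely I would proceed as follows. \emph{Step 1:} Note $\mP_m = \vv_{\tau(m)}\vv_{\tau(m)}^\top$ and establish the closed form $\mP_{\tau(k)}\cdots\mP_{\tau(1)}\teacher = \pm\bigl(\prod_{i=1}^{k-1}\cos\ang{i}{i+1}\bigr)(\vv_{\tau(1)}^\top\teacher)\vv_{\tau(k)}$ by induction on the length of the chain. \emph{Step 2:} From \eqref{eq:linear_forgetting}, lower bound $F_{\tau,\coll}(k) \ge \min_m\sigma_{\min}^2(\X_m)\cdot\tfrac1k\sum_{\itr=1}^{k}\bignorm{(\I-\mP_{\tau(\itr)})\mP_{\tau(k)}\cdots\mP_{\tau(1)}\teacher}^2$, justifying the singular-value step. \emph{Step 3:} Plug in the Step~1 formula, use $\bignorm{(\I-\mP_{\tau(\itr)})\vv_{\tau(k)}}^2 = 1 - \cos^2\ang{\itr}{k}$, note the $\itr = k$ summand is $0$, and collect terms. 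The main obstacle — more a point requiring care than a genuine difficulty — is Step~2: one must be sure the inequality $\bignorm{\X_m\vu}^2 \ge \sigma_{\min}^2(\X_m)\bignorm{(\I-\mP_m)\vu}^2$ is applied correctly, i.e. that $(\I-\mP_m)\vu \in \range(\X_m^\top)$ so the \emph{smallest nonzero} singular value is the right constant, and that when $r_m = d-1$ this is unambiguous; the rest is bookkeeping with rank-one projectors and cosines.
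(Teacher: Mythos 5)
Your proposal is correct and follows essentially the same route as the paper's proof: insert $\I-\mP_{\tau(\itr)}=\X_{\tau(\itr)}^{+}\X_{\tau(\itr)}$ to pull out $\min_m\sigma_{\min}^2(\X_m)$, exploit that the rank-one projections $\mP_m=\vv_m\vv_m^\top$ collapse the chain to $\pm\bigprn{\prod_{i=1}^{k-1}\cos\ang{i}{i+1}}(\vv_{\tau(1)}^\top\teacher)\,\vv_{\tau(k)}$, and then each residual contributes $1-\cos^2\ang{\itr}{k}$ with the $\itr=k$ term vanishing. Your care about the smallest \emph{nonzero} singular value being the right constant (since $(\I-\mP_m)\vu$ lies in $\range(\X_m^\top)$) is exactly the justification the paper's pseudo-inverse manipulation provides.
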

where $\sigma_{\min}^2 (\X_m)$ is the smallest squared non-zero singular value of $\X_{m}$.

\bigskip

\begin{proof}[Proof for \lemref{lem:forgetting_lower_bound}]
We start from \eqref{eq:linear_forgetting} 
and perform similar derivations to the ones that lead us to \eqref{eq:forgetting-projection}.
We have,
\begin{align*}
\begin{split}
    F_{\tau,S}\left(k\right)
    &=
    \frac{1}{k}
    \sum_{\itr=1}^{k}
    \norm{
    \X_{\tau(\itr)} {\mP_{\tau(k)}\cdots\mP_{\tau(1)}}
    \w^{\star}}^2
    \\
    \explain{\text{pseudo-inverse}}
    &=
    \frac{1}{k}
    \sum_{\itr=1}^{k}
    \bignorm{
    \X_{\tau(\itr)}
    \bigprn{\X_{\tau(\itr)}^+\X_{\tau(\itr)}}
    {\mP_{\tau(k)}\cdots\mP_{\tau(1)}}
    \w^{\star}}^2
    \\
    \explain{\text{spectral}\\\text{norm}\\\text{properties}}
    &\ge
    \frac{1}{k}
    \sum_{\itr=1}^{k}
    \sigma_{\min}^2 (\X_{\tau(\itr)})
    \bignorm{
    {\bigprn{\X_{\tau(\itr)}^+\X_{\tau(\itr)}}}
    {\mP_{\tau(k)}\cdots\mP_{\tau(1)}}
    \teacher}^2
    \vspace{-1mm}
    \\
    % \explain{\text{Assumptions \ref{assume:bounded_data},\ref{assume:realizability}}}
    &\ge
    \min_{m\in\cnt{T}}
    \big\{\sigma_{\min}^2 (\X_m)\big\}
    \frac{1}{k}
    \sum_{\itr=1}^{k}
    \norm{
    \bigprn{\I-\mP_{\tau(\itr)}}
    {\mP_{\tau(k)}\cdots\mP_{\tau(1)}}
    \teacher}^2
    \\
    \explain{r=d-1}
    &
    =
    \min_{m\in\cnt{T}}
    \big\{\sigma_{\min}^2 (\X_m)\big\}
    \frac{1}{k}
    \sum_{\itr=1}^{k-1}
    \norm{
    \bigprn{\I-
    \vv_{\tau(\itr)}\vv_{\tau(\itr)}^\top}
    \vv_{\tau(k)}\vv_{\tau(k)}^\top
    \cdots
    \vv_{\tau(1)}\vv_{\tau(1)}^\top
    \teacher}^2~,
\end{split}
\end{align*}
where we used the rank-1 SVD of the data matrices,
\ie $\X_m = \vv_m \vv_m^{\top}$.

Now, we denote
$\forall i,j: 
\abs{\vv_{\tau(i)}^\top
\vv_{\tau(j)}}
=
\theta_{i,j}$
and get 
(notice the signs do not matter in the norm),
\begin{align*}
    &=\!
    \min_{m\in\cnt{T}}\!\!
    \big\{\sigma_{\min}^2 (\X_m)\big\}
    \frac{1}{k}
    \sum_{\itr=1}^{k-1}
    \biggnorm{
    \bigprn{\I-
    \vv_{\tau(\itr)}\vv_{\tau(\itr)}^\top}
        \prn{
        \prod_{i=1}^{k-1}
        \cos \ang{i}{i+1} 
        \!}
        \vv_\perp^{\tau(k)}
        \vv_\perp^{\tau(1)^\top}
        \teacher
    }^2
    \\
    % \explain{\text{spectral}\\\text{norm}\\\text{properties}}
    &=\!
    \min_{m\in\cnt{T}}\!\!
    \big\{\sigma_{\min}^2 (\X_m)\big\}
    \frac{\tprn{
        \vv_{\tau(1)}^\top
        \teacher
    }^2}{k}\!
    \prn{
        \prod_{i=1}^{k-1}
        \cos^2 \ang{i}{i+1} 
        \!}
    \sum_{j=1}^{k-1}
    \norm{
        \left(\I\!-\! \vv_\perp^{\tau(j)}\vv_\perp^{\tau(j)^\top}\right)
        \vv_\perp^{\tau(k)}
    }^2
    \\
    \explain{\text{idempotence}}
    &=\!
    \min_{m\in\cnt{T}}\!\!
    \big\{\sigma_{\min}^2 (\X_m)\big\}
    \frac{\tprn{
        \vv_{\tau(1)}^\top
        \teacher
    }^2}{k}\!
    \prn{
        \prod_{i=1}^{k-1}
        \cos^2 \ang{i}{i+1} 
        \!}
    \sum_{j=1}^{k-1}\!
    \!\vv_\perp^{\tau(k)^\top}
    \!\!\!
    \left(\I\!-\!\vv_\perp^{\tau(j)}\vv_\perp^{\tau(j)^\top}\right)\!
    \vv_\perp^{\tau(k)}
    ~
    \\
    %\left[\substack{\text{idempotence}}\right]
    &=\!
    \min_{m\in\cnt{T}}\!\!
    \big\{\sigma_{\min}^2 (\X_m)\big\}
    \frac{\tprn{
        \vv_{\tau(1)}^\top
        \teacher
    }^2}{k}\!
    \prn{
        \prod_{i=1}^{k-1}
        \cos^2 \ang{i}{i+1} 
        }
    \sum_{j=1}^{k-1}\!
    \left(1\!-\!\cos^2 \ang{j}{k} 
    \right)
    ~.
\end{align*}
\vspace{-4mm}
\end{proof}

\newpage

\section{Supplementary material: Arbitrary task orderings (\secref{sec:arbitrary})}
\label{app:proofs1}

\subsection{No forgetting cases (\secref{sec:no_forgetting})}
\label{app:no_forgetting}

Consider learning two tasks sequentially:
$\big(\X_{1},\y_{1}\big)$
and then $\big(\X_{2},\y_{2}\big)$.
Right after learning the second task,
we have
$\mathcal{L}_{2}(\w_2)
\!=\!
\tnorm{\y_2 \!-\! \X_2 \w_2}^2 
\!=\! 0$.
Thus, the forgetting from \eqref{eq:linear_forgetting} becomes 
${
    F_{\tau,S}(2)
    =
    \tfrac{1}{2}
    \norm{
    \X_1
    \mP_{2}\mP_{1}
    \w^\star}^2
}
$.
We now derive sufficient and necessary conditions for no forgetting.

\medskip

\begin{recall}[\thmref{thm:noforgetting}]
Let
$\coll\!=\!\left\{\big(\X_{1},\y_{1}\big),\big(\X_{2},\y_{2}\big)\right\}
\!\in\!\mathcal{S}_{T=2}$ 
be a task collection with $2$ tasks,
fitted under an identity ordering $\tau$,
\ie $\big(\X_{1},\y_{1}\big)$~and then $\big(\X_{2},\y_{2}\big)$.
Then the following conditions are equivalent:
\begin{enumerate}
    \item 
    For \emph{any} labeling $\y_1,\y_2$
    (or equivalently, any minimum norm solution $\teacher$),
    after fitting the second task, the model does not ``forget'' the first one.
    That is,
    $F_{\tau,\coll}(2)=0$.
    \item
    It holds that
    $\X_1 \mP_2 \mP_1 = 
    \0_{n_1 \times d}$.
    \item Each principal angle between the tasks, \ie $\range({\X_1^\top})$ and
    $\range({\X_2^\top})$, is either $0$ or $\nicefrac{\pi}{2}$.
\end{enumerate}
\end{recall}

\medskip

\subsubsection{Example: Sufficient conditions for no forgetting}
\label{sec:example_conditions}
Before we prove the theorem above,
we exemplify some of its implications by showing clear and simple sufficient conditions for holding the conditions of the theorem.

\begin{enumerate}[label=(\alph*),leftmargin=2cm]\itemsep2pt
\item $\mathrm{range}\big(\X_{2}^\top\big)\subseteq \mathrm{range}\big(\X_{1}^\top\big)$; 
or
$\mathrm{range}\big(\X_{2}^\top\big)\supseteq \mathrm{range}\big(\X_{1}^\top\big)$; 
or
\item $\mathrm{range}\big(\X_{2}^\top\big)\subseteq \kernel\big(\X_{1}\big)$; 
or
$\mathrm{range}\big(\X_{2}^\top\big)\supseteq \kernel\big(\X_{1}\big)$.
\end{enumerate}
These conditions can help understand that maximal task (=sample) similarity or dissimilarity can help prevent forgetting in the linear setting.

%\newpage

\bigskip

\subsubsection{Proving the theorem}

\begin{proof}
\label{prf:noforgetting}
Like we explain in \secref{sec:no_forgetting},
the forgetting after learning the second task is equal to 
${
    F_{\tau,S}(2)
    =
    \mathcal{L}_{1}(\w_2)   
    =
    \tfrac{1}{2}
    \norm{
    \X_1
    \mP_{2}\mP_{1}
    \w^\star}^2
}
$.
We notice that $\mP_1=\I-\X_1^+ \X_1$
and
$\mP_2=\I-\X_2^+ \X_2$,
meaning that the labels $\y_1,\y_2$ do not have any effect on the matrix 
$\X_1 \mP_2 \mP_1$.
However, these labels \emph{do} effect the minimum norm solution $\teacher$.

Here, we briefly discuss the relation between $\teacher$ and $\y_1,\y_2$,
so as to facilitate our proof below.
\paragraph{Relating the minimum norm solution and the labelings.}
Recall the constraints between the offline solution and the labels, \ie $\X_1\teacher=\y_1$ and $\X_2\teacher=\y_2$.
Also recall that under Assumption~\ref{assume:realizability}, we have that $\teacher\in\ball{d}$.
Note that a minimum norm solution \emph{must} lie in the row span of \emph{both} data matrices,
\ie
${\teacher\in\range({\X_1^\top}) \cup
\range({\X_2^\top})}$,
since any contributions from the nullspaces will not affect its predictions $\X_1 \teacher$ and $\X_2 \teacher$ but \emph{will} increase its norm.

Moreover,
notice that the $\y_1,\y_2$ can yield \emph{any} minimum norm solution that is inside $\range(\X_1^\top)$, since we could just
choose an arbitrary vector $\teacher\in\range(\X_1^\top)$
and set $\y_2=\X_2 \teacher$ (we do not have restrictions on the labelings).

We are now ready to complete our proof.

\newpage

\paragraph{Condition (1) $\iff$ Condition (2).}
Clearly, since ${
    F_{\tau,S}(2)
    \!=\!
    \tfrac{1}{2}
    \norm{
    \X_1
    \mP_{2}\mP_{1}
    \w^\star}^2
}
$,
we have that 
$$\underbrace{\X_1\mP_{2}\mP_{1}=\0}_{\text{(2)}}
~\Longrightarrow~
\underbrace{
\forall \y_1, \y_2, \teacher:~
F_{\tau,S}(2)=
    \tfrac{1}{2}
    \norm{
    \X_1
    \mP_{2}\mP_{1}
    \w^\star}^2=0}_{\text{(1)}}~.$$
Since $\mP_1$ is a symmetric operator projecting onto the row span of $\X_1$, we have that
$\range(\mP_1)
\!=\!
\range(\mP_1^\top)
\!=\!
\range(\X^\top_1)$.
It is readily seen that
$$
\prn{\forall \teacher \!\in \range(\X^\top_1)
:\,
\norm{
    \X_1
    \mP_{2}\mP_{1}
    \teacher}\!=\!0
}
~\iff~
    \X_1
    \mP_{2}\mP_{1}
    =
    \0~.
$$

We explicitly denote the minimum norm solution that two labelings $\y_1,\y_2$ induce as
$\teacher(\y_1, \y_2)$.
As explained, $\y_1,\y_2$ can yield \emph{any} minimum solution inside $\range(\X_1^\top)$.
Assume $\forall \y_1,\y_2\!: \norm{
    \X_1
    \mP_{2}\mP_{1}
    \teacher(\y1,\y2)}\!=\!0$.
Then, it follows that 
$\forall \teacher \!\in \range(\X^\top_1)
:\,
\norm{
    \X_1
    \mP_{2}\mP_{1}
    \teacher}\!=\!0$.
In this case we proved that it follows that $\X_1\mP_2\mP_1=\0$.

Overall we showed:
$$
\underbrace{\prn{\forall \y_1,\y_2 \!\in \range(\X^\top_1)
:\,
\norm{
    \X_1
    \mP_{2}\mP_{1}
    \teacher}\!=\!0
}}_{\text{(1)}}
~\Longrightarrow~
    \underbrace{\X_1
    \mP_{2}\mP_{1}
    =
    \0}_{\text{(2)}}~,
$$
thus completing the proof for (1) $\iff$ (2).

\bigskip

\paragraph{Condition (2) $\iff$ Condition (3).}
First, we notice that
$$
\left\Vert\X_1\boldsymbol{P}_{2}\boldsymbol{P}_{1}\right\Vert 
=
\0
\iff
\left\Vert
\left(\boldsymbol{I}-\boldsymbol{P}_{1}\right)\boldsymbol{P}_{2}\boldsymbol{P}_{1}\right\Vert =\0$$
since simple norm properties give us:
\begin{align*}
    &\left\Vert \left(\boldsymbol{I}-\boldsymbol{P}_{1}\right)\boldsymbol{P}_{2}\boldsymbol{P}_{1}\right\Vert 	
    =\left\Vert \boldsymbol{V}_{1}\boldsymbol{\Sigma}_{1}^{+}\boldsymbol{\Sigma}_{1}\boldsymbol{V}_{1}^{\top}\boldsymbol{P}_{2}\boldsymbol{P}_{1}\right\Vert =\left\Vert \boldsymbol{\Sigma}_{1}^{+}\boldsymbol{\Sigma}_{1}\boldsymbol{V}_{1}^{\top}\boldsymbol{P}_{2}\boldsymbol{P}_{1}\right\Vert 
	\\
	&
	\le
	\underbrace{\left\Vert \boldsymbol{\Sigma}_{1}^{+}\right\Vert}_{>0}
	\left\Vert \boldsymbol{\Sigma}_{1}\boldsymbol{V}_{1}^{\top}\boldsymbol{P}_{2}\boldsymbol{P}_{1}\right\Vert 
	\propto
	\left\Vert \boldsymbol{\Sigma}_{1}\boldsymbol{V}_{1}^{\top}\boldsymbol{P}_{2}\boldsymbol{P}_{1}\right\Vert 
	=\left\Vert \boldsymbol{U}_{1}\boldsymbol{\Sigma}_{1}\boldsymbol{V}_{1}^{\top}\boldsymbol{P}_{2}\boldsymbol{P}_{1}\right\Vert 
	=\left\Vert \boldsymbol{X}_{1}\boldsymbol{P}_{2}\boldsymbol{P}_{1}\right\Vert~,
\end{align*}
and
\begin{align*}
    &\left\Vert \boldsymbol{X}_{1}\boldsymbol{P}_{2}\boldsymbol{P}_{1}\right\Vert 	
    =
    \left\Vert \boldsymbol{U}_{1}\boldsymbol{\Sigma}_{1}\boldsymbol{V}_{1}^{\top}\boldsymbol{P}_{2}\boldsymbol{P}_{1}\right\Vert =\left\Vert \boldsymbol{\Sigma}_{1}\boldsymbol{V}_{1}^{\top}\boldsymbol{P}_{2}\boldsymbol{P}_{1}\right\Vert 
	=\left\Vert \boldsymbol{\Sigma}_{1}\left(\boldsymbol{\Sigma}_{1}^{+}\boldsymbol{\Sigma}_{1}\right)\boldsymbol{V}_{1}^{\top}\boldsymbol{P}_{2}\boldsymbol{P}_{1}\right\Vert 
	\\
	&
	\le\underbrace{\left\Vert \boldsymbol{\Sigma}_{1}\right\Vert}_{>0}
	\left\Vert \boldsymbol{\Sigma}_{1}^{+}\boldsymbol{\Sigma}_{1}\boldsymbol{V}_{1}^{\top}\boldsymbol{P}_{2}\boldsymbol{P}_{1}\right\Vert 
	\\&
	\propto
	\left\Vert \boldsymbol{\Sigma}_{1}^{+}\boldsymbol{\Sigma}_{1}\boldsymbol{V}_{1}^{\top}\boldsymbol{P}_{2}\boldsymbol{P}_{1}\right\Vert =\left\Vert \boldsymbol{V}_{1}\boldsymbol{\Sigma}_{1}^{+}\boldsymbol{\Sigma}_{1}\boldsymbol{V}_{1}^{\top}\boldsymbol{P}_{2}\boldsymbol{P}_{1}\right\Vert =\left\Vert \left(\boldsymbol{I}-\boldsymbol{P}_{1}\right)\boldsymbol{P}_{2}\boldsymbol{P}_{1}\right\Vert. 
\end{align*}

Then, we use \lemref{lem:principal_form}
and get that
$$\left\Vert \left(\boldsymbol{I}-\boldsymbol{P}_{1}\right)\boldsymbol{P}_{2}\boldsymbol{P}_{1}\right\Vert 
= 
\max_{i}
\left\{
\left(\cos^2 \theta_i\right)(1-\cos^2 \theta_i)
\right\}
= 
\frac{1}{4}
\max_{i}
\left\{\sin^2 (2\theta_i)\right\}~,$$
where 
$\{\theta_i\}_i \subseteq
\left(0, \tfrac{\pi}{2}\right]$
are the non-zero principal angles between the two tasks,
\ie between $\range(\X_1^\top)$ and $\range(\X_2^\top)$.
Finally, it is now clear that
$$
\left\Vert\X_1\boldsymbol{P}_{2}\boldsymbol{P}_{1}\right\Vert 
=
\0
\iff
\left\Vert
\left(\boldsymbol{I}-\boldsymbol{P}_{1}\right)\boldsymbol{P}_{2}\boldsymbol{P}_{1}\right\Vert = \frac{1}{4}
\max_{i}
\left\{\sin^2 (2\theta_i)\right\} = \0
\iff
\forall i: \theta_i \in \left\{0,\frac{\pi}{2}\right\}
~.$$
\end{proof}

%\hspace{0.5cm}

\newpage

\subsubsection{Comparison to \citet{doan2021NTKoverlap}}
\label{app:compare_to_doan}
\citet{doan2021NTKoverlap} also studied  forgetting in a linear setting where a series of tasks are learned sequentially by SGD on squared loss with ridge penalty. 
In the special case where only $T=2$ tasks are given
and no regularization is used (\ie $\lambda=0$), 
their expression for forgetting in Theorem~1  \citep{doan2021NTKoverlap} is equivalent to our derivation in \eqref{eq:linear_forgetting} up to scaling by $\frac{1}{2}$.
However, their subsequent upper bound stated in Corollary~1 is a looser characterization of forgetting, which can be paraphrased in terms of our notation and framework as follows 
(for convenience, we attach their notations beneath the last equation):
\begin{equation}
\begin{split}
    F_{\tau,S}(2)
    &=
    \frac{1}{2}
    \norm{
    \X_{1}
    %\mP^{(1)}
    \mP_2 \mP_1
    (\w_{0}-\w^\star)
    }^2
    =
    \frac{1}{2}
    \norm{
    \X_{1}
    \cdot
    \X_{1}^+
    \X_{1}
    %\mP^{(1)}
    \mP_2 \mP_1
    (\w_{0}-\w^\star)
    }^2
    \\
    &=
    \frac{1}{2}
    \norm{
    \X_{1}
    \tprn{\I-\mP_1}
    %\mP^{(1)}
    \mP_2 \mP_1
    (\w_{0}\!-\!w^\star)
    }^2
    \\
    &
    \overset{(*)}=
    \frac{1}{2}
    \|
    \X_{1}
    \tprn{\I-\mP_1}
    \tprn{\I-\mP_2}
    \mP^{(1)}_\perp(\w_{0}\!-\!\w^\star)\|^2
    \\
    &
    \le \frac{1}{2}\|\X_1\|^2
    \bignorm{
    \underbrace{
    \tprn{\I-\mP_1}
    \tprn{\I-\mP_2}}_{
    \Theta^{1\to 2}
    }}^2
    \bignorm{
    \underbrace{\mP_1(\w_{0}-\w^\star)}_{
    = \M_2 \tilde{\y}_2
    }}^2,
\end{split}
\label{doan_our}
\end{equation} 
where (*) follows from 
plugging in
$\mP_2 = \mP_2 - \I + \I$ and $(\I-\mP_1)\mP_{1}=0.$

Based on the above upper bound, the authors informally argue that higher similarity of the principal components between the source task
and target task leads to higher risk of forgetting. In contrast, 
the sufficient condition (a) in our \appref{sec:example_conditions}
shows that there is no forgetting when tasks have maximum overlap in the row spans of their inputs.

Concretely, consider the following two tasks that hold
the sufficient conditions of \thmref{thm:noforgetting}
presented in \appref{sec:example_conditions}:
$$\left(\X^{(1)}=[1,0,0,0],~\y^{(1)}=\frac{1}{\sqrt{2}}[1]\right),~~~
\left(\X^{(2)}=\left[\begin{array}{cccc}1&0&0&0\\0&1&0&0\end{array}\right],~\y^{(2)}=\frac{1}{\sqrt{2}}\left[\begin{array}{c}1\\1\end{array}\right]\right)~.$$
Note that $\w^\star=\frac{1}{\sqrt{2}}[1,1,0,0]^\top$ is a unit norm linear predictor that realizes both tasks. In this case, there is clearly no forgetting as the first task is also part of second task, \ie $F_{\tau,\coll}(2)=0$.
However, we can verify that $\|\X_1\|\|\mP^{(1)}\mP^{(2)}\|=1$ and $\|\mP_1(\w_{0}-\w^\star)\|=\nicefrac{1}{\sqrt{2}}$, thus the upper bound in \eqref{doan_our} evaluates to $F_{\tau,S}(2)\le\nicefrac{1}{4}$. This demonstrates the weakness of the upper bound in Corollary $1$ of \cite{doan2021NTKoverlap}.

This gap can also be seen from a principal angle perspective.
Their so-called overlap matrix,
\ie $\Theta^{1\to 2}$,
is a diagonal matrix holding the singular values 
of $\V_2^{\perp^\top}\V_1^{\perp}$.
As they explain in Corollary~1 and we explain in 
the proof of \lemref{lem:principal_form},
these singular values are actually connected to the principal angles between $\X_1$ and $\X_2$.
Thus, when they use the spectral norm $\norm{\Theta^{1\to 2}}$ which equals $1$, they are actually using \emph{only} the largest singular values, \ie the \emph{smallest} principal angle which is called the Dimixer angle \citep{dixmier1949etude}.
In the above example, this angle, which is the only principal angle, is zero (thus holding our conditions from \thmref{thm:noforgetting}).
In contrast, our analysis in \lemref{lem:2_tasks_forgetting} uses \emph{all}
principal angles, revealing more delicate effects of task similarity on forgetting dynamics. 
\newpage

\subsection{Maximal forgetting cases (\secref{sec:maximal_forgetting})}
\label{app:maximal_forgetting}

\begin{recall}[\thmref{thm:worst_case}]
When using the identity ordering 
(\ie $\tau\!\left(t\right)\!=t$),
thus seeing each task once,
the worst-case forgetting after $k$ iterations is arbitrarily bad,
\ie
\vspace*{-.1cm}
\begin{align*}
%    F_{\tau} (k)
%    \ge
%    1-\frac{1}{\sqrt{k}}
    %1-F_{\tau} (k) =
    1-\!
    \sup_{
    \substack{
        \coll\in\mathcal{S}_{T=k}
    }
    }
    \!\!
    F_{\tau,\coll}(k)
    \le
    \bigO\left(\nicefrac{1}{\sqrt{k}}\right)~.
\end{align*}
\end{recall}

\bigskip

\begin{figure}[h!]
\centering
\begin{minipage}{.8\textwidth}
    \begin{centering}
    %\captionsetup{labelformat=empty}
    \caption*{\textbf{Recall \figref{fig:worst_case}}.
    Sequence of tasks where $F_{\tau,\coll}(k) \rightarrow 1$.
    %\!\!\!\textbf{Recall \figref{fig:worst_case}. Sequence of tasks where $F_{\tau,\coll}(k) \rightarrow 1$.}
    Each black arrow represents a solution space of a rank $1$ task in $\reals^2$.
    There are
    $k_1$ tasks between $[0,\theta]$
    and $k_2$ tasks between $[\theta,\pi/2]$. 
    The green diamond shows the overall contraction after fitting all tasks.
    The red arrows show projections back onto the solution spaces.
    %of one specific task (seen at $\tau(k_1)$). 
    The mean squared length of these arrows is the forgetting.
    %of this task,    \ie $F^{(k_1\to k)}$.
    %
    }
    \end{centering}
\end{minipage}
    \centering
    \includegraphics[width=0.5\columnwidth]{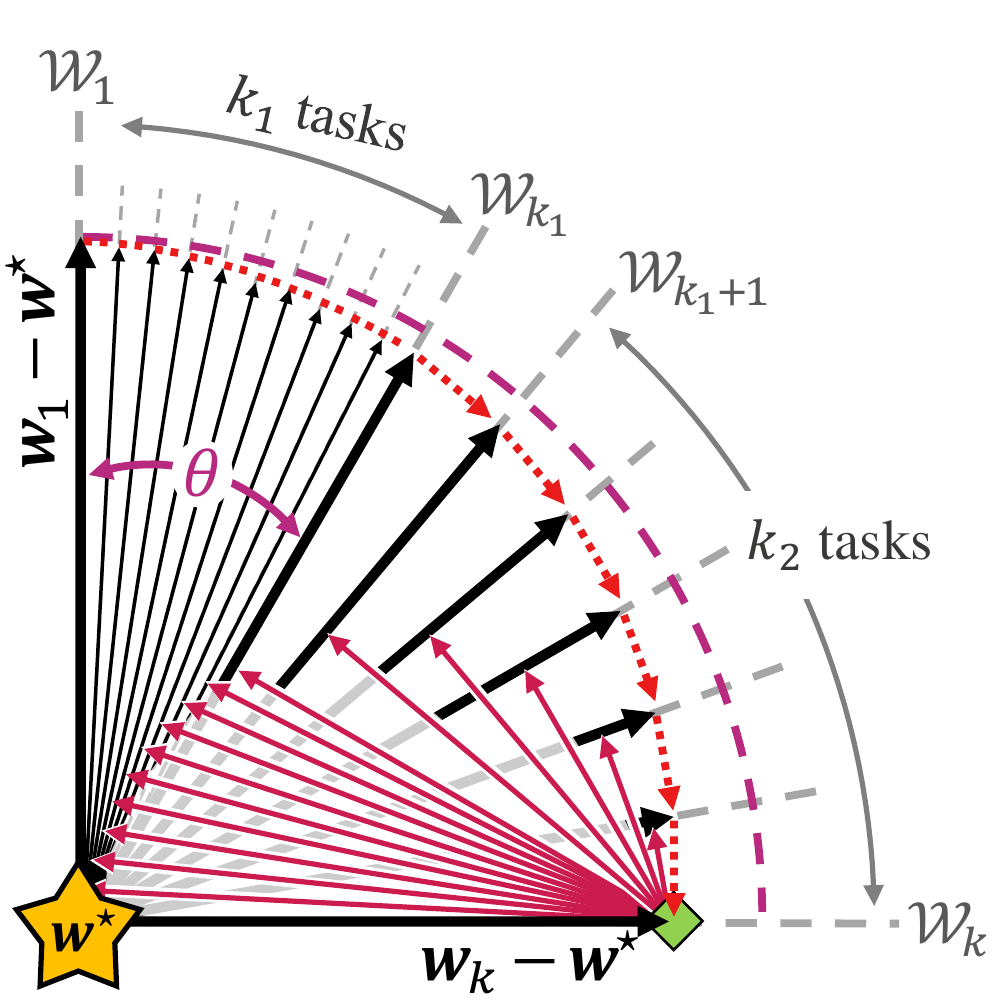}
\end{figure}

\begin{proof-sketch}
We show that for any $\epsilon>0$ 
there exists a task collection $\coll$ 
of $T=k=\mathcal{O}({1}/{\epsilon^2})$ tasks, 
such that under the identity ordering $\tau$,
the forgetting is $F_{\tau,\coll} (k) >1-\epsilon$.

We construct a task sequence as follows 
(illustrated in the figure in 2 dimensions): 
\begin{enumerate}
    \item For some small angle $\theta\sim\sqrt{\epsilon}$ we define $k_1\sim {1}/{\epsilon^2}$ tasks uniformly in $\left[0,\theta\right]$~;
    
    \item We add another $k_2\sim{1}/{\epsilon}$ tasks uniformly in $\left(\theta,\pi/2\right]$~.
\end{enumerate}
With this construction we show that on the one hand there is almost no contraction, but on the other hand there is a large forgetting especially on the first $k_1$ tasks, since they are almost orthogonal to the last task and the projection is large. 
\end{proof-sketch}

We note in passing 
a related phenomenon in quantum physics,
known as 
the \emph{quantum Zeno effect}
\citep{hacohen2018zeno}, 
where the state of a quantum system, described by a vector, can be manipulated by applying infinitesimally-spaced measurements, which act on it as orthogonal projections.

\bigskip

The full proof of \thmref{thm:worst_case} is given below.

\pagebreak

\begin{proof}[Proof of \thmref{thm:worst_case}] 
\label{prf:worst_case}

We show that for any $\epsilon\in\left(0,1\right)$
there exists a task collection $S$ of $k=\mathcal{O}\left({1}/{\epsilon^{2}}\right)$
tasks such that under the identity ordering,
%the forgetting is at least $1-\epsilon$.
$F_{\tau,\coll}\left(k\right)>1-\epsilon$.
%
% We consider the case $r=d-1$ and also assume that all non-zero singular values
% of the data matrices are $1$. 
From \lemref{lem:forgetting_lower_bound} we know
that for any choice of a task collection $\coll$ of rank $d-1$,
\begin{align}
F_{\tau,\coll}\left(k\right)
\!\ge\!
\frac{1}{k}\!
\left(
\prod_{m=1}^{k-1}
\cos^{2}\ang{m}{m+1}
\right)\!
\sum_{m=1}^{k-1}\!\left(1\!-\!\cos^{2}\ang{m}{k}\right) 
 \!=\!
 \frac{1}{k}\!
 \left(\prod_{m=1}^{k-1}\cos^{2}\theta_{m,m+1}\right)
 \!\!\sum_{m=1}^{k-1}\sin^{2}\theta_{m,k}.
\label{forgetting_k1k2}
\end{align}
We construct a sequence with $T=k=\nicefrac{72}{\epsilon^2}+1$ tasks as following: let $\theta=\theta\left(\epsilon\right)\in\left(0,\frac{\pi}{2}\right)$
and assume we have $k_{1}=k_{1}\left(\epsilon\right)$ tasks uniformly
in $\left[0,\theta\right]$ and $k_{2}=k_{2}\left(\epsilon\right)$
tasks uniformly in $\left(\theta,\frac{\pi}{2}\right]$. Note that we require
$k_{1}+k_{2}=k$.

Then from \eqref{forgetting_k1k2} we have
\begin{align*}
F_{\tau,\coll}\left(k\right) 
&\ge
\frac{1}{k_{1}+k_{2}}
%\underbrace{\cos^{2(k_{1}-1)}\left(\frac{\theta}{k_{1}-1}\right)}_{\le 1}
\cos^{2(k_{1}-1)}\left(\frac{\theta}{k_{1}-1}\right)
\cos^{2k_{2}}\left(\frac{\frac{\pi}{2}-\theta}{k_{2}}\right)
\Bigg({
\sum_{i=1}^{k_{1}-1}
\sin^{2}\!\!\underbrace{\theta_{i,k}}_{\ge\nicefrac{\pi}{2}-\theta}
+
\underbrace{\sum_{i=k_{1}}^{k-1}\sin^{2}\theta_{i,k}}_{\ge 0}
}
\Bigg)
\\
%  & 
%  \ge\frac{1}{k_{1}+k_{2}}
%  \cos^{2k_{1}-2}
%  \prn{\frac{\theta}{k_{1}-1}}
%  %\biggprn{\,\underbrace{\frac{\theta}{k_{1}-1}}_{\in\prn{0,\nicefrac{\pi}{2}}}\,}
%  \cos^{2k_{2}}
%  \left(\frac{\pi-2\theta}{2k_{2}}\right)
%  \sum_{i=1}^{k_{1}-1}\sin^{2}\!\!\underbrace{\theta_{i,k}}_{\ge\nicefrac{\pi}{2}-\theta}
%  \\
 & 
 \ge
 \frac{1}{k_{1}+k_{2}}
 \cos^{2k_{1}-2}
 \prn{\frac{\theta}{k_{1}-1}}
 \cos^{2k_{2}}\left(\frac{\pi-2\theta}{2k_{2}}\right)
 \sum_{i=1}^{k_{1}-1}
 \sin^{2}\!\left(\frac{\pi}{2}-\theta\right)
 \\
 & 
 =\frac{1}{k_{1}+k_{2}}
 \cos^{2k_{1}-2}
 \prn{\frac{\theta}{k_{1}-1}}
 \cos^{2k_{2}}\left(\frac{\pi-2\theta}{2k_{2}}\right)
 \left(\prn{k_{1}-1}\sin^{2}\left(\frac{\pi}{2}-\theta\right)\right)
 \\
 & 
 =
 \underbrace{\cos^{2k_{1}-2}\left(\frac{\theta}{k_{1}-1}\right)}_{\triangleq F_{1}}~\cdot~
 \underbrace{\cos^{2k_{2}}\left(\frac{\pi-2\theta}{2k_{2}}\right)}_{\triangleq F_{2}}~\cdot~
 \underbrace{\frac{k_{1}-1}{k_{1}+k_{2}}\cos^{2}\theta}_{\triangleq F_{3}}~.
\end{align*}
We show that for any $\epsilon\in\left(0,1\right)$
we can choose $k_{1},k_{2},\theta$ such that
$$
F_{\tau,\coll}\left(k\right)\ge F_{1}\left(k_{1},\theta\right)\cdot F_{2}\left(k_{2},\theta\right)\cdot F_{3}\left(k_{1},k_{2},\theta\right)>1-\epsilon~
.$$
Specifically, let
$
k_{1} =\frac{72-12\epsilon}{\epsilon^{2}}+1,~~~~~~~
k_{2} = \frac{12}{\epsilon},~~~~~~~
\theta = \sqrt{\frac{\epsilon}{6}}
$.

Then, we 
use the inequality $\cos x\ge1-x^{2}/2$,
and get
\begin{align*}
F_{1}\!\left(k_{1},\theta\right) & 
\!=\!
\cos^{2(k_{1}-1)}\left(\frac{\theta}{k_{1}-1}\right)
\ge
\cos^{2(k_{1}-1)}\left(\frac{1}{k_{1}-1}\right)
\!\ge\!
 \left(1-\frac{1}{2(k_{1}-1)^{2}}\right)^{2k_{1}-2}
 \\
 &
 \!\ge\!
 1\!-\!\frac{1}{k_{1}-1}
 =
 1-\frac{\epsilon^{2}}{72-12\epsilon}\ge1-\frac{\epsilon}{3}
 \\
F_{2}\!\left(k_{2},\theta\right) & \!=\!\cos^{2k_{2}}\left(\frac{\pi-2\theta}{2k_{2}}\right)
\!\ge\!
\cos^{2k_{2}}\left(\frac{2}{k_{2}}\right)
\ge
\left(1-\frac{2}{k_{2}^{2}}\right)^{2k_{2}}\!\ge\!
1-\frac{4}{k_{2}}=1-\frac{\epsilon}{3}
\\
F_{3}\!\left(k_{1},k_{2},\theta\right) & 
\!=\!
\frac{k_{1}-1}{k_{1}+k_{2}}\!
\cos^{2}\theta
\ge
% \frac{k_{1}-1}{k_{1}+k_{2}}\!\left(1-\theta^{2}\right)
% =
\frac{\frac{72-12\epsilon}{\epsilon^{2}}}{\frac{72-12\epsilon}{\epsilon^{2}}+\frac{12}{\epsilon}+1}
\left(1\!-\!\frac{\epsilon}{6}\right)
\!=\!
\frac{72-12\epsilon}{72+\epsilon^2}
\left(1\!-\!\frac{\epsilon}{6}\right)
\ge
1\!-\!\frac{\epsilon}{3}~.
\end{align*}
Therefore, we get that
$
F_{\tau,S}\left(k\right)\ge F_{1}\left(k_{1},\theta\right)\cdot F_{2}\left(k_{2},\theta\right)\cdot F_{3}\left(k_{1},k_{2},\theta\right)\ge\left(1-\frac{\epsilon}{3}\right)^{3}\ge1-\epsilon$~.

 Finally note that $k=k_{1}+k_{2}=\nicefrac{72}{\epsilon^{2}}+1$, which
concludes the proof.
\end{proof}

\newpage

\section{Supplementary material: Cyclic task orderings (\secref{sec:cyclic})}
\label{app:proofs2}

\subsection{Forgetting with $T=2$ tasks (\secref{sec:two_tasks})}
\label{app:two_tasks}

\begin{recall}[\lemref{lem:2_tasks_forgetting}]
For any task collection $\coll\in\mathcal{S}_{T=2}$ of two tasks,
the forgetting after $k=2n$ iterations (\ie $n$ cycles) is tightly upper bounded by
\begin{align*}
\begin{split}
F_{\tau, \coll}\left(k\right)
\le~
&
\frac{1}{2}
\max_{i}
\left\{
\left(\cos^2 \theta_i\right)^{k-1}
\left(1-\cos^2 \theta_i\right)
\right\}~,
\end{split}
\end{align*}
where 
$\{\theta_i\}_i \subseteq
\left(0, \tfrac{\pi}{2}\right]$
are the non-zero principal angles between the two tasks in $\coll$.
Moreover, the above inequality saturates when all non-zero singular values of the first task
(\ie of $\X_{1}$)
are $1$s. 
\end{recall}

\bigskip

\begin{proof}[Proof for \lemref{lem:2_tasks_forgetting}] 
\label{prf:worst_general_T_2}
We notice that at the end of each cycle we perfectly fit the second task,
thus having forgetting only on the first one.
In the cyclic case, from \eqref{eq:cyclic-forgetting} we have
\begin{align*}
%\label{eq:forgetting_2_tasks}
\begin{split}
    F_{\tau,S}\left(k\right)
    &\le
    \frac{1}{2}
	\norm{\left(\I-\mP_{1}\right)
	\left(\mP_{2}\mP_{1}\right)^{n}}^{2}~.
\end{split}
\end{align*}
We now apply \lemref{lem:principal_form} (recall that $k=2n$) and conclude that:
\begin{align*}
%\label{eq:forgetting_2_tasks}
\begin{split}
    F_{\tau,S}\left(k\right)
    &\le
    \frac{1}{2}
	\norm{\left(\I-\mP_{1}\right)
	\left(\mP_{2}\mP_{1}\right)^{n}}^{2}
	=
\frac{1}{2}
\max_{i}
\left\{
\left(\cos^2 \theta_i\right)^{k-1}(1-\cos^2 \theta_i)
\right\}~,
\end{split}
\end{align*}
where 
$\{\theta_i\}_i \subseteq
\left(0, \tfrac{\pi}{2}\right]$
are the non-zero principal angles between the two tasks.
Finally, we show that when all non-zero singular values of $\X_1$ are $1$s it holds that 
$\mSigma_1=\mSigma_1^+
\mSigma_1$
and we get 
\begin{align*}
F_{\tau,\coll}(k)
&=
\frac{1}{2}
\tsum_{m=1}^{T}
\bignorm{
\X_m
\bigprn{\mP_2 \mP_1}^{n}
\teacher
}^2
=
\frac{1}{2}
\biggprn{
\bignorm{
\X_1
\bigprn{\mP_2 \mP_1}^{n}
\teacher
}^2
+
\underbrace{\bignorm{
\X_2
\bigprn{\mP_2 \mP_1}^{n}
\teacher
}^2
}_{=0}
}
\\
&
=
\frac{1}{2}
\bignorm{
\U_1 \mSigma_1 \V_1^\top
\bigprn{\mP_2 \mP_1}^{n}
\teacher
}^2
\\
\explain{\text{\propref{prop:norms}}}
&
=
\frac{1}{2}
\bignorm{
\mSigma_1 \V_1^\top
\bigprn{\mP_2 \mP_1}^{n}
\teacher
}^2
=
\frac{1}{2}
\bignorm{
\mSigma_1^+
\mSigma_1 \V_1^\top
\bigprn{\mP_2 \mP_1}^{n}
\teacher
}^2
\\
&
=
\frac{1}{2}
\bignorm{
(\I-\mP_1)
\bigprn{\mP_2 \mP_1}^{n}
\teacher
}^2~,
\end{align*}
proving the inequality saturates in this case.
\end{proof}

\hspace{1.5pt}
\newpage

\begin{recall}[\thmref{thm:2_tasks_worst_forgetting}]
For~a~cyclic ordering $\tau$ of $2$ tasks, 
the worst-case forgetting after $k=2n\ge2$ iterations
(\ie $n\ge 1$ cycles),
is
\begin{align*}
\sup_{
    \substack{
        \coll\in\mathcal{S}_{T=2}
    }
    }
    \!\!
    F_{\tau,\coll}
    \left(k\right)
=
\frac{1}{2e\left(k-1\right)} -
\frac{1}{4e\left(k-1\right)^2} +
\bigO\left(\frac{1}{k^3}\right)~.
\end{align*}
\end{recall}
\begin{proof}
Following our previous lemma, 
the key to deriving the worst-case bound is to find the maximum of 
$$\frac{1}{2}
\left(\cos^2 \theta\right)^x \left(1-\cos^2 \theta\right)~,~~~\forall x\in\mathbb{N}\cup \left\{0\right\}~.$$
For $x=0$, the expression above is maximized by $\theta=0$ and equals $\nicefrac{1}{2}$.
Generally, one could show that for any integer $x\in\naturals$, the angle that maximizes the expression holds $\sin^2 \theta = \frac{1}{x+1}$.
Plugging that solution into the expression, we get:
$$
\frac{1}{2}
\max_{\theta\in\left[0,\pi\right]}
\left(\cos^2 \theta\right)^x \left(1-\cos^2 \theta\right)
=
\frac{1}{2}
\left(1-\frac{1}{x+1}\right)^x 
\frac{1}{x+1}
=
\frac{1}{2ex} - \frac{1}{4ex^2} +
\bigO\left(\frac{1}{x^3}\right)~.$$
Using \lemref{lem:2_tasks_forgetting},
we conclude that 
\begin{align*}
\sup_{\coll \in \mathcal{S}_{T=2}}F_{\tau,\coll}\left(k\right)
=
\frac{1}{2e\left(k-1\right)} -
\frac{1}{4e\left(k-1\right)^2} +
\bigO\left(\frac{1}{k^3}\right)~.
\end{align*}
\end{proof}

\newpage

\subsection{Forgetting with $T\ge3$ tasks (\secref{sec:forgetting_many_tasks})}
\label{app:proofs_many_tasks}

\begin{recall}[\thmref{thm:worst_case_T3}]
For~any number of tasks $T\ge 3$ under a cyclic ordering $\tau$, the worst-case forgetting
after $k=nT\ge T^2$ iterations (\ie $n\ge T$ cycles), is
\begin{align*}
    \frac{T^2}{24ek}
    ~~\le
    %F_{\tau}\left(k\right)
    %\Big\rvert_{\maxrank}
    \!\!\!\!\!\!\!\!\!\!
    \sup_{
    \substack{
        \coll\in\mathcal{\coll}_{T\ge 3}: \\
        %\rank(\coll)\le~\!\maxrank
        \forall m:\,\rank(\X_m) \le \maxrank
    }
    }
    \!\!\!\!\!\!\!\!\!\!\!\!\!\!\!\!
    F_{\tau,\coll}(k)
    %F_{\tau}\left(k\right)
    ~~\le~~
    {
        \min\left\{
        \frac{T^2}{\sqrt{k}}
        ,~~
        \frac{T^2\prn{d-\maxrank}}{2k}
        \right\} 
    }~.
\end{align*}
%\vspace{-1.5mm}
% where we abuse notation slightly 
% and use
% $\rank(\coll)\!\!\le\!\maxrank$
% to indicate that 
% $\forall \X\!\!\in\!\coll\!: \rank(\X)\!\le\!\maxrank$.
%
% where 
% $F_{\tau}\left(k\right)\big\rvert_{\maxrank}
% \triangleq
% \sup
% \big\{
% F_{\tau,\coll}\left(k\right)
% \mid
% \coll \in \Coll,
% \max_{\X\in \coll}
% \rank \X
% \big\}
% $
% is the worst-case forgetting for tasks of ranks of at most $\maxrank$.

% where $\maxrank$ is the highest rank of any data matrix $\X_{m}$, \ie 
% $\maxrank\triangleq \max_{m\in\cnt{T}} \rank \X_{m}$.

Moreover, 
if the cyclic operator $\bigprn{\mP_T \cdots \mP_1}$ from \eqref{eq:cyclic-forgetting} is symmetric (\eg in a \emph{back-and-forth} setting  where tasks $m$ and $(T\!-\!m)$ are identical $\forall m\!\in\!\cnt{T}$),
%where $\kernel(\X_m)=\kernel(\X_{T-m})$),
then the worst-case forgetting is \emph{sharply}
${
%F_{\tau}\left(k\right)
\sup_{
\substack{
    \coll\in\mathcal{\coll}_{T\ge 3}
}
}
F_{\tau,\coll}(k)
=
\Theta\left(\nicefrac{T^2}{k}\right)
}
$.
\end{recall}

\subsubsection{Proving the upper bound}

We prove the upper bound using the two following lemmas. The proofs of the lemmas are given on the following pages.

Generally, the proofs revolve around the quantity $\left\Vert \M^{n}\vu\right\Vert _{2}^{2}
-
\left\Vert \M^{n+1}\vu\right\Vert _{2}^{2}$, 
that we bound both with and without using the dimension of the tasks.

\begin{lemma}[{Dimension-independent upper bounds}]
\label{lem:dimension_independent}
Let $\mP_{1},\dots,\mP_{T}\in \reals^{d\times d}$
be $T$ orthogonal projection operators forming a cyclic operator $\M=\mP_{T}\cdots\mP_{1}\in \reals^{d\times d}$.
Then:
\begin{enumerate}[label=(\textbf{Lemma~\ref*{lem:dimension_independent}\alph*)},leftmargin=2.5cm]\itemsep6pt
    \item 
    \label{lem:bound_on_projection_to_m_task}
    For any $\vv\in \complex^{d}$, $m\in\cnt{T-1}$, it holds that
    $$\left\Vert \left(\I-\mP_{m}\right)
    \vv\right\Vert _{2}^{2}
        \le 
        m
        \left(\left\Vert \vv\right\Vert _{2}^{2}-
        \left\Vert \M\vv\right\Vert _{2}^{2}\right)~;$$
    \item 
    \label{lem:bound_of_one_cycle}
    For any $\vv\in \complex^{d}$, $m\in\cnt{T-1}$, it holds that
    $\left\Vert \left(\I\!-\!\M\right)\vv\right\Vert _{2}^{2}
        \le 
        T
        \!
        \left(\left\Vert \vv\right\Vert _{2}^{2}-\left\Vert \M\vv\right\Vert _{2}^{2}\right)~\!;$
    \item 
    \label{lem:first_task_forgetting}
    For any vector $\vu\in \complex^{d}$ holding
    $\norm{\vu}\le 1$,
    after $n\ge 1$ cycles, it holds that
    \begin{align*}
        \left\Vert \M^{n}\vu\right\Vert _{2}^{2}
        -
        \left\Vert \M^{n+1}\vu\right\Vert _{2}^{2}
            \le 
        2 \norm{\M^{n} - \M^{n+1}}~;
        %\le 
        %2\sqrt{\frac{T}{n}}~;
    \end{align*}
    \item \label{lem:m_task_forgetting}
    The forgetting on task $m\in\cnt{T-1}$ after $n\ge 1$ cycles is upper bounded by
    \begin{align*}
        \norm{\left(\I-\mP_{m}\right)\M^{n}}^2 
        \le 
        2m 
        \norm{\M^{n} - \M^{n+1}}
        ~;
    \end{align*}
    \item 
    \label{lem:T_over_n_norm}
    For any number of cycles $n\ge 1$, it holds that
    $\norm{\M^{n-1} - \M^{n}} \le \sqrt{\nicefrac{T}{n}}$~;
    
    Moreover, when $\M$ is \emph{symmetric},
    we have
    $\norm{\M^{n-1} - \M^{n}} 
    \le \nicefrac{1}{e(n-1)}$~.
\end{enumerate}
\end{lemma}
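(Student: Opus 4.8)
The plan is to prove the five sub-claims in order, since each rests on its predecessors, using one common device: a telescoping decomposition through the \emph{partial} products of the cyclic operator. Write $\M_0 \triangleq \I$ and $\M_j \triangleq \mP_j\mP_{j-1}\cdots\mP_1$ for $j\in\cnt{T}$, so $\M=\M_T$. The basic identity, obtained by applying property~5 of \propref{prop:projections} to the single factor $\mP_j$, is $\norm{\M_{j-1}\vv}^2-\norm{\M_j\vv}^2=\norm{(\I-\mP_j)\M_{j-1}\vv}^2=\norm{\M_{j-1}\vv-\M_j\vv}^2$. Summing this telescopes to $\norm{\vv}^2-\norm{\M_m\vv}^2=\sum_{j=1}^m\norm{\M_{j-1}\vv-\M_j\vv}^2$, and since every $\mP_j$ is non-expansive we also have $\norm{\M\vv}=\norm{\M_T\vv}\le\norm{\M_m\vv}\le\norm{\vv}$.

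For \ref{lem:bound_on_projection_to_m_task} I would expand $\vv=\M_{m-1}\vv+\sum_{j=1}^{m-1}(\M_{j-1}\vv-\M_j\vv)$, apply the non-expansive operator $\I-\mP_m$ (noting $(\I-\mP_m)\M_{m-1}\vv=\M_{m-1}\vv-\M_m\vv$), and use the triangle inequality to get $\norm{(\I-\mP_m)\vv}\le\sum_{j=1}^m\norm{\M_{j-1}\vv-\M_j\vv}$; then \lemref{lem:square_ineq} gives $\norm{(\I-\mP_m)\vv}^2\le m\sum_{j=1}^m\norm{\M_{j-1}\vv-\M_j\vv}^2=m(\norm{\vv}^2-\norm{\M_m\vv}^2)\le m(\norm{\vv}^2-\norm{\M\vv}^2)$. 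Claim \ref{lem:bound_of_one_cycle} is the identical argument with $m$ replaced by $T$, starting from $(\I-\M)\vv=\sum_{j=1}^T(\M_{j-1}\vv-\M_j\vv)$.

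For \ref{lem:first_task_forgetting} I would factor $\norm{\vv}^2-\norm{\M\vv}^2=(\norm{\vv}-\norm{\M\vv})(\norm{\vv}+\norm{\M\vv})\le 2\norm{\vv}\norm{(\I-\M)\vv}$, using $\norm{\M\vv}\le\norm{\vv}$ and the reverse triangle inequality $\norm{\vv}-\norm{\M\vv}\le\norm{\vv-\M\vv}$; substituting $\vv=\M^n\vu$ and using $\norm{\M^n\vu}\le\norm{\vu}\le1$ together with $\norm{(\I-\M)\M^n\vu}=\norm{(\M^n-\M^{n+1})\vu}\le\norm{\M^n-\M^{n+1}}$ finishes it. Claim \ref{lem:m_task_forgetting} then follows by applying, for each unit vector $\vu$, sub-claim \ref{lem:bound_on_projection_to_m_task} to $\vv=\M^n\vu$ to get $\norm{(\I-\mP_m)\M^n\vu}^2\le m(\norm{\M^n\vu}^2-\norm{\M^{n+1}\vu}^2)$, bounding the right side via \ref{lem:first_task_forgetting}, and taking the supremum over $\vu$.

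The main obstacle is \ref{lem:T_over_n_norm}. Fix a unit vector $\vv$ and set $a_j\triangleq\norm{\M^j\vv-\M^{j+1}\vv}^2$. Two facts: $(a_j)_j$ is non-increasing, because $\M^j\vv-\M^{j+1}\vv=\M(\M^{j-1}\vv-\M^j\vv)$ and $\M$ is non-expansive; and by \ref{lem:bound_of_one_cycle} applied to $\M^j\vv$, $a_j\le T(\norm{\M^j\vv}^2-\norm{\M^{j+1}\vv}^2)$, so $\sum_{j=0}^{n-1}a_j\le T(\norm{\vv}^2-\norm{\M^n\vv}^2)\le T$. Hence $n\,a_{n-1}\le\sum_{j=0}^{n-1}a_j\le T$, giving $a_{n-1}\le T/n$ and, after taking the supremum over $\vv$, $\norm{\M^{n-1}-\M^n}^2\le T/n$. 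For the symmetric refinement I would first note that in the back-and-forth setting $\M$ is not merely symmetric but positive semidefinite, since it factors as $\Q^\top\Q$ (or $\Q^\top\mP\Q$ when $T$ is odd) with $\Q$ a product of projections, so its eigenvalues lie in $[0,1]$; then $\norm{\M^{n-1}-\M^n}=\max_i\lambda_i^{n-1}(1-\lambda_i)$, and maximizing $\lambda^{n-1}(1-\lambda)$ over $[0,1]$ gives the critical point $\lambda=(n-1)/n$ with value $(1-1/n)^{n-1}/n\le\frac{1}{e(n-1)}$, the last step being the elementary inequality $(1-1/n)^n\le1/e$. The only delicate point is confirming the positive-semidefiniteness of $\M$ under the stated hypothesis, which the $\Q^\top\Q$ factorization settles cleanly.
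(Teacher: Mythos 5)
Your proof is correct, and for the most part it travels the same road as the paper: your expansion of $\vv$ through the partial products $\M_j=\mP_j\cdots\mP_1$ is exactly the paper's identity $\I=(\I-\mP_1)+\sum_{\ell}(\I-\mP_{\ell+1})\mP_\ell\cdots\mP_1+\mP_m\cdots\mP_1$ (\clmref{clm:identity_trick} and \corref{cor:vec_property}), so your arguments for \ref{lem:bound_on_projection_to_m_task}, \ref{lem:bound_of_one_cycle} and \ref{lem:m_task_forgetting} coincide with the paper's up to replacing the vector-sum inequality of \lemref{lem:square_ineq} by triangle inequality plus Cauchy--Schwarz, and your telescoping-plus-monotonicity argument for $\norm{\M^{n-1}-\M^{n}}\le\sqrt{\nicefrac{T}{n}}$ is the paper's proof of \ref{lem:T_over_n_norm} essentially verbatim. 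Two points genuinely differ. For \ref{lem:first_task_forgetting} the paper works at the operator level: it notes $(\M^{n})^{\top}\M^{n}-(\M^{n+1})^{\top}\M^{n+1}\succeq\0$, bounds the quadratic form by the spectral norm, and splits $\A^{\top}\A-\B^{\top}\B=(\A^{\top}-\B^{\top})\A+\B^{\top}(\A-\B)$; your scalar difference-of-squares factorization $\bigprn{\norm{\M^{n}\vu}-\norm{\M^{n+1}\vu}}\bigprn{\norm{\M^{n}\vu}+\norm{\M^{n+1}\vu}}\le 2\norm{(\M^{n}-\M^{n+1})\vu}$ is more elementary and yields the same constant, so this is a cleaner route to the same bound. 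Second, for the symmetric refinement in \ref{lem:T_over_n_norm} the paper diagonalizes and writes $\max_i\lambda_i^{n-1}(1-\lambda_i)$, tacitly treating the eigenvalues as lying in $[0,1]$; you are right that symmetry alone only guarantees $\lambda_i\in[-1,1]$ and that positive semidefiniteness is the property actually needed, and your $\Q^{\top}\Q$ (resp.\ $\Q^{\top}\mP\Q$) factorization of the back-and-forth operator supplies it directly --- a more careful treatment than the paper's, and fully adequate for the way the bound is used in \thmref{thm:worst_case_T3}.
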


\bigskip

\begin{lemma}[Rank-dependent upper bound]
\label{lem:rank_dependent_bound}
For any vector $\vu\in \complex^{d}$ holding
$\norm{\vu}\le 1$,
and for any non-expansive operator $\M\in\mathbb{R}^{d\times d}$,
\ie~$\left\Vert \M\right\Vert _{2}\le1$, 
it holds that 
\begin{align*}
%\max_{\vu:\left\Vert \vu\right\Vert _{2}=1}
%\left(
\left\Vert \M^{n}\vu\right\Vert _{2}^{2}
-
\left\Vert \M^{n+1}\vu\right\Vert _{2}^{2}
%\right)	
~\le~
\frac{\rank\left(\M\right)}{n}
~\le~
\frac{d}{n}
~.
\end{align*}
\end{lemma}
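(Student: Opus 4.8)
\begin{proof-sketch}
The plan is to pass from the pointwise quantity $\decrease{n}{\vu}$ to an operator norm that is provably non-increasing in $n$, and then telescope. First I would use that $\M$ is real to write
\begin{align*}
\decrease{n}{\vu} &= \norm{\M^{n}\vu}^{2}-\norm{\M^{n+1}\vu}^{2} = \vu^{\hop}\A_{n}\vu , \\
\A_{n} &\triangleq (\M^{n})^{\top}\M^{n}-(\M^{n+1})^{\top}\M^{n+1} = (\M^{n})^{\top}\bigprn{\I-\M^{\top}\M}\M^{n} .
\end{align*}
Since $\norm{\M}\le 1$, we have $\I-\M^{\top}\M\succeq\0$, hence $\A_{n}\succeq\0$; as $\A_{n}$ is real symmetric this gives $\decrease{n}{\vu}\le\lambda_{\max}(\A_{n})\norm{\vu}^{2}=\norm{\A_{n}}\norm{\vu}^{2}\le\norm{\A_{n}}$, using $\norm{\vu}\le 1$. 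So it suffices to prove $\norm{\A_{n}}\le\rank(\M)/n$; the bound $d/n$ is then immediate from $\rank(\M)\le d$.

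Next I would establish two facts. (i) The sequence $\norm{\A_{s}}$ is non-increasing in $s$: writing $\mat{R}\triangleq(\I-\M^{\top}\M)^{1/2}$, we have $\A_{s}=(\mat{R}\M^{s})^{\top}(\mat{R}\M^{s})$, so $\norm{\A_{s}}=\norm{\mat{R}\M^{s}}^{2}$; since $\mat{R}\M^{s}=(\mat{R}\M^{s-1})\M$, submultiplicativity of the spectral norm (\propref{prop:norms}) together with $\norm{\M}\le1$ yields $\norm{\mat{R}\M^{s}}\le\norm{\mat{R}\M^{s-1}}$, hence $\norm{\A_{s}}\le\norm{\A_{s-1}}$. (ii) The traces telescope: since $\A_{s}\succeq\0$ we have $\norm{\A_{s}}\le\tr(\A_{s})$, and
\begin{align*}
\sum_{s=1}^{n}\tr(\A_{s})
&= \sum_{s=1}^{n}\Bigprn{\tr\bigprn{(\M^{s})^{\top}\M^{s}}-\tr\bigprn{(\M^{s+1})^{\top}\M^{s+1}}} \\
&= \norm{\M}_{F}^{2}-\norm{\M^{n+1}}_{F}^{2} \;\le\; \norm{\M}_{F}^{2} \;\le\; \rank(\M) ,
\end{align*}
where the last inequality uses that $\M$ has exactly $\rank(\M)$ nonzero singular values, each at most $\norm{\M}\le 1$. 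Combining (i) and (ii), $n\norm{\A_{n}}\le\sum_{s=1}^{n}\norm{\A_{s}}\le\sum_{s=1}^{n}\tr(\A_{s})\le\rank(\M)$, so $\norm{\A_{n}}\le\rank(\M)/n\le d/n$, which completes the argument.

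The step I expect to be the main obstacle is the monotonicity in (i), because the naive approach fails there. One is tempted to fix $\vu$ and telescope $\sum_{s=1}^{n}\decrease{s}{\vu}=\norm{\M\vu}^{2}-\norm{\M^{n+1}\vu}^{2}\le1$, then conclude a pointwise $1/n$ decay from convexity of $s\mapsto\norm{\M^{s}\vu}^{2}$ --- but this sequence is \emph{not} convex in general for non-normal $\M$ (\eg a nilpotent $\M$), so the pointwise route is wrong. The remedy is to monotonize at the operator level as above; equivalently, one can note that $\M^{s}(\M^{s})^{\top}$ is non-increasing in the Loewner order, since $\M^{s-1}(\M^{s-1})^{\top}-\M^{s}(\M^{s})^{\top}=\M^{s-1}\bigprn{\I-\M\M^{\top}}(\M^{s-1})^{\top}\succeq\0$, which makes $\tr(\A_{s})=\tr\bigprn{(\I-\M^{\top}\M)\M^{s}(\M^{s})^{\top}}$ non-increasing directly. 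The remaining pieces --- positive semidefiniteness of $\A_{n}$, the inequality $\norm{\A_{n}}\le\tr(\A_{n})$, and the telescoping identity --- are routine.
\end{proof-sketch}
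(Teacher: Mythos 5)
Your proposal is correct and follows essentially the same route as the paper's proof: both work with the positive semidefinite differences $(\M^{s})^{\top}\M^{s}-(\M^{s+1})^{\top}\M^{s+1}$, telescope their traces down to $\norm{\M}_F^2\le\rank(\M)$, and finish with $\lambda_{\max}\le\tr$ for PSD matrices. The only (harmless) difference is the monotonicity step — you show the spectral norms $\norm{\A_s}$ are non-increasing via the factorization with $(\I-\M^{\top}\M)^{1/2}$ and submultiplicativity, whereas the paper shows the traces are non-increasing via Von Neumann's trace inequality.
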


\newpage

\begin{proof}[Proof for \thmref{thm:worst_case_T3} -- upper bound]

First, we remind the reader that the forgetting after $n$ cycles is upper bounded by
\begin{align}
    \frac{1}{T}
    \left\Vert 
    \left[\begin{array}{c}
    I-\mP_{1}
    \\
    \vdots\\
    I-\mP_{T-1}
    \end{array}\right]
    \left(\mP_{T}\cdots\mP_{1}\right)^{n}\right\Vert _{2}^{2}	
    &\le
    \frac{1}{T}\sum_{m=1}^{T-1}\left\Vert \left(\I-\mP_{m}\right)
    \left(\mP_{T}\cdots\mP_{1}\right)^{n}
    \right\Vert _{2}^{2}~.
\end{align}

The dimension-independent bound directly follows 
from \ref{lem:m_task_forgetting} and
\ref{lem:T_over_n_norm}:
\begin{align*}
\begin{split}
    \frac{1}{T}\sum_{m=1}^{T-1}\left\Vert \left(\I-\mP_{m}\right)\left(\mP_{T}\cdots\mP_{1}\right)^{n}\right\Vert _{2}^{2}
    &
    \le
    \underbrace{\frac{1}{T}
    \sum_{m=1}^{T-1}2m}_{=\prn{T-1}}
        \sqrt{\frac{T}{n}}
    	=
    	\prn{T-1}\sqrt{\frac{T}{n}}
    	\le
    	T\sqrt{\frac{T}{n}}
    	=
    	\frac{T^2}{\sqrt{k}}
    	~.
\end{split}
\end{align*}

\bigskip
\paragraph{A symmetric cyclic operator 
$\M=\left(\mP_{T}\cdots\mP_{1}\right)$.}
Consider a symmetric cyclic operator
appearing for instance in back-and-forth settings where
$\forall m\in\cnt{T}$ we have
$\sol{m}=\sol{T-m}$.
Then, \ref{lem:T_over_n_norm} gives a tighter bound, which in turn (assuming $n\ge2$) yields an overall bound of
$$\frac{1}{T}\sum_{m=1}^{T-1}\left\Vert \left(\I-\mP_{m}\right)\left(\mP_{T}\cdots\mP_{1}\right)^{n}\right\Vert _{2}^{2}
\le
\frac{T-1}{e(n-1)}
\le
\frac{T}{n}
=
\frac{T^2}{k}~.
$$

Finally, since we prove the theorem's
$\nicefrac{T^2}{24ek}$
lower bound in \appref{app:lower_bound_proof} using a back-and-forth task collection,
\ie using a symmetric cyclic operator,
we get that in these back-and-forth settings we have a sharp
worst-case behavior of $\Theta(\nicefrac{T^2}{k})$

\bigskip
\bigskip
\bigskip

Finally, we notice that \ref{lem:bound_on_projection_to_m_task} implies that
\begin{align*}
    \left\Vert \left(\I-\mP_{m}\right)
    \!
    \M^{n}
    \right\Vert _{2}^{2}
    &\triangleq\!\!\!
    \max_{\vu:\left\Vert \vu\right\Vert _{2}=1}
    \left\Vert \left(\I-\mP_{m}\right)
    \!
    \M^{n}\vu
    \right\Vert_{2}^{2}
    \le
    m\cdot\!\!\!\!\!\max_{\vu:\left\Vert \vu\right\Vert _{2}=1}\!
    \left(
        \norm{\M^{n}\vu}^2
        -
        \norm{\M^{n+1}\vu}^2
    \right)\!,
\end{align*}
and by applying \lemref{lem:rank_dependent_bound} on $\M\triangleq\mP_{T}\cdots\mP_{1}$, 
we conclude the dimension-dependent bound:
\begin{align*}
\begin{split}
    \frac{1}{T}\sum_{m=1}^{T-1}\left\Vert \left(\I-\mP_{m}\right)\left(\mP_{T}\cdots\mP_{1}\right)^{n}\right\Vert _{2}^{2}
    &\le
    \underbrace{\frac{1}{T}
    \sum_{m=1}^{T-1}m}_{=\prn{T-1}/{2}}
    \frac{\rank\left(\M\right)}{n}
    \le
    \frac{T\left(d-\maxrank\right)}{2n}
    =
    \frac{T^2\left(d-\maxrank\right)}{2k}~,
\end{split}
\end{align*}
where we used our notation of
$\maxrank\triangleq 
\max_{m\in\cnt{T}} \rank \X_m
=
\max_{m\in\cnt{T}} \left(d-\rank \mP_m\right)$
and the fact that
$\forall m\in\cnt{T}: 
\rank\left(\M\right)
=
\rank\left(\mP_{T}\cdots\mP_{1}\right)
\le 
\min_{m\in\cnt{T}} \rank \mP_m$.
\end{proof}

\newpage

Before we prove the lemmas above,
we state an auxiliary claim (and a corollary).

\begin{claim}
\label{clm:identity_trick}
For any $m\in\cnt{T}$, it holds that
\begin{align*}
\begin{split}
\I
&=
\left(
\I
-
\mP_{1}
\right)
+
\sum_{\ell=1}^{m-1}
\left(
\I
-
\mP_{\ell+1}
\right)
\left(
\mP_{\ell}
\cdots
\mP_{1}
\right)
+
\mP_{m}
\cdots
\mP_{1}~.
\end{split}
\end{align*}
\end{claim}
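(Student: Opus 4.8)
The plan is to read the right-hand side as a telescoping sum over the partial products of the projection operators. First I would fix $m\in\cnt{T}$ and introduce the shorthand $\Q_0\triangleq\I$ and $\Q_j\triangleq\mP_j\cdots\mP_1$ for $j\in\cnt{m}$, so that $\Q_j=\mP_j\Q_{j-1}$ by construction. The elementary observation driving everything is that each consecutive difference contracts to a single factor:
\[
    \Q_{j-1}-\Q_j
    =\Q_{j-1}-\mP_j\Q_{j-1}
    =\left(\I-\mP_j\right)\Q_{j-1},\qquad j\in\cnt{m}.
\]

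Next I would sum this identity over $j=1,\dots,m$. The left-hand side telescopes to $\Q_0-\Q_m=\I-\mP_m\cdots\mP_1$, which rearranges to
\[
    \I=\mP_m\cdots\mP_1+\sum_{j=1}^{m}\left(\I-\mP_j\right)\Q_{j-1}.
\]
To match the stated form I would then peel off the $j=1$ summand, which equals $\left(\I-\mP_1\right)\Q_0=\I-\mP_1$, and reindex the remaining terms by $\ell\triangleq j-1$ ranging over $\cnt{m-1}$; under this shift $\left(\I-\mP_j\right)\Q_{j-1}$ becomes $\left(\I-\mP_{\ell+1}\right)\left(\mP_\ell\cdots\mP_1\right)$, yielding exactly the claimed identity.

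There is no genuine obstacle here — the entire content is the one-line contraction $\Q_{j-1}-\mP_j\Q_{j-1}=(\I-\mP_j)\Q_{j-1}$ followed by telescoping, and the identity uses nothing about the $\mP_i$ beyond associativity of matrix multiplication, so it holds verbatim for arbitrary square matrices. The only points worth a sentence of care are the empty-sum and empty-product conventions at the endpoints: for $m=1$ the middle sum is empty and the identity degenerates to $\I=(\I-\mP_1)+\mP_1$, while the case $m=T$ recovers all $T$ projections, which is the instance relevant to the subsequent bounds.
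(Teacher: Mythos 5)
Your proposal is correct and is essentially the paper's own argument: the paper writes $\I-\mP_m\cdots\mP_1$ as the telescoping sum of consecutive differences $\mP_{\ell}\cdots\mP_1-\mP_{\ell+1}\cdots\mP_1=(\I-\mP_{\ell+1})\mP_{\ell}\cdots\mP_1$, which is exactly your one-line contraction summed over $j$. The only difference is notational (your $\Q_j$ shorthand and explicit reindexing versus the paper writing the terms out directly), so nothing further is needed.
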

\begin{proof}
Recursively, we show that
\begin{align*}
\begin{split}
&
{\I
    -
    \mP_{m}
    \cdots
    \mP_{1}
}
\\
&=
%\norm
{
    \left(
    \I
    -
    \mP_{1}
    \right)
    +
    \left(
    \mP_{1}
    -
    \mP_{2}
    \mP_{1}
    \right)
    +
    \left(
    \mP_{2}
    \mP_{1}
    -
    \mP_{3}
    \mP_{2}
    \mP_{1}
    \right)
    +
    \dots
    +
    \left(\mP_{m-1}
    \cdots
    \mP_{1}
    -
    \mP_{m}
    \cdots
    \mP_{1}
    \right)
}
\\
&=
{
    \left(
    \I
    -
    \mP_{1}
    \right)
    +
    \left(\I-\mP_{2}\right)
    \mP_{1}
    +
    \left(\I-\mP_{3}\right)
    \left(
    \mP_{2}
    \mP_{1}
    \right)
    +
    \dots
    +
    \left(\I-\mP_{m}\right)
    \left(
    \mP_{m-1}
    \cdots
    \mP_{1}
    \right)
}
\\
&=
\left(
\I
-
\mP_{1}
\right)
+
\sum_{\ell=1}^{m-1}
\left(
\I
-
\mP_{\ell+1}
\right)
\left(
\mP_{\ell}
\cdots
\mP_{1}
\right)~,
\end{split}
\end{align*}
which proves our claim.
\end{proof}

\bigskip
\bigskip

\begin{corollary}
\label{cor:vec_property}
For any $m\in\cnt{T}$, it holds that,
\begin{align*}
    \left(\I-\mP_{m}\right)
    =
    \left(\I-\mP_{m}\right)
    \left(\I-\mP_{1}\right)
    +
    \sum_{\ell=1}^{m-1}
    \left(\I-\mP_{m}\right)
    \left(\I-\mP_{\ell+1}\right)\left(\mP_{\ell}\cdots\mP_{1}\right)~.
\end{align*}
\end{corollary}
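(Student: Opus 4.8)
The plan is to obtain this identity as an immediate consequence of Claim~\ref{clm:identity_trick} by left-multiplication. First I would recall the decomposition of the identity operator established in that claim, namely
\begin{align*}
\I
=
\left(
\I
-
\mP_{1}
\right)
+
\sum_{\ell=1}^{m-1}
\left(
\I
-
\mP_{\ell+1}
\right)
\left(
\mP_{\ell}
\cdots
\mP_{1}
\right)
+
\mP_{m}
\cdots
\mP_{1}~,
\end{align*}
which holds for every $m\in\cnt{T}$.

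Next I would multiply both sides of this equality on the left by the operator $\left(\I-\mP_{m}\right)$. Distributing over the sum yields
\begin{align*}
\left(\I-\mP_{m}\right)
=
\left(\I-\mP_{m}\right)\left(\I-\mP_{1}\right)
+
\sum_{\ell=1}^{m-1}
\left(\I-\mP_{m}\right)\left(\I-\mP_{\ell+1}\right)\left(\mP_{\ell}\cdots\mP_{1}\right)
+
\left(\I-\mP_{m}\right)\mP_{m}\cdots\mP_{1}~.
\end{align*}

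The only step requiring a remark is that the last term vanishes: by idempotence of the orthogonal projection $\mP_{m}$ (Property~\ref{prop:projections}), we have $\left(\I-\mP_{m}\right)\mP_{m}=\mP_{m}-\mP_{m}^{2}=\0$, hence $\left(\I-\mP_{m}\right)\mP_{m}\cdots\mP_{1}=\0$. Dropping this term gives exactly the claimed formula, completing the proof. There is no real obstacle here — the content is entirely carried by Claim~\ref{clm:identity_trick}, and this corollary is just the pointwise/operator form needed later when bounding $\norm{\left(\I-\mP_{m}\right)\M^{n}\vu}$ in terms of the per-cycle decrease.
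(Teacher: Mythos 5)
Your proposal is correct and matches the paper's own proof: left-multiply the identity decomposition of Claim~\ref{clm:identity_trick} by $\left(\I-\mP_{m}\right)$ and annihilate the last term via $\left(\I-\mP_{m}\right)\mP_{m}=\0$ (Property~\ref{prop:projections}). Nothing is missing.
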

\begin{proof}
We use \clmref{clm:identity_trick} and get that
\begin{align*}
    &\left(\I-\mP_{m}\right)
    =
    \left(\I-\mP_{m}\right)
    \I
    \\
    &=
    \left(\I-\mP_{m}\right)
    \left[
    \left(\I-\mP_{1}\right)
    +
    \sum_{\ell=1}^{m-1}
    \left(\I-\mP_{\ell+1}\right)\left(\mP_{\ell}\cdots\mP_{1}\right)
    +
    \mP_{m}
    \cdots
    \mP_{1}
    \right]
    \\
    &=
    \left(\I-\mP_{m}\right)
    \left[
    \left(\I-\mP_{1}\right)
    +
    \sum_{\ell=1}^{m-1}
    \left(\I-\mP_{\ell+1}\right)\left(\mP_{\ell}\cdots\mP_{1}\right)
    \right]
    +
    \underbrace{
        \left(\I-\mP_{m}\right)
        \mP_{m}
    }_{=\0,~\text{by \propref{prop:projections}}}
    \mP_{m-1}
    \cdots
    \mP_{1}
    ~.
\end{align*}
\end{proof}

\newpage

\paragraph{Proof for \lemref{lem:dimension_independent}}.
We now prove our dimension-independent upper bounds, step by step, by proving all the statements of \lemref{lem:dimension_independent}.

\bigskip

\begin{proof}[Proof for \ref{lem:bound_on_projection_to_m_task}] 
For $m=1$, one can show that:
\begin{align*}
    \left\Vert \left(\I-\mP_{1}\right)\vv\right\Vert _{2}^{2}
    \stackrel{\text{idempotence}}{=}
    \left\Vert \vv\right\Vert _{2}^{2}
    -
    \left\Vert \mP_{1}\vv\right\Vert _{2}^{2}
    \stackrel{\substack{\text{projections}\\ \text{contract}}}{\le}
    1\cdot
    \left(
    \left\Vert \vv\right\Vert _{2}^{2}
    -
    \left\Vert \left(\mP_{T}\cdots\mP_{1}\right)\vv\right\Vert _{2}^{2}
    \right)~.
\end{align*}

Then, for $m=2,3,\dots, T-1$, we apply \corref{cor:vec_property} and obtain
\begin{align*}
\begin{split}
    &\left\Vert \left(\I-\mP_{m}\right)\vv\right\Vert _{2}^{2}
    \\
    \explain{\text{\corref{cor:vec_property}}}
    &=
    \norm{
    \left(\I-\mP_{m}\right)
    \left(\I-\mP_{1}\right)\vv
    +
    \sum_{\ell=1}^{m-1}
    \left(\I-\mP_{m}\right)
    \left(\I-\mP_{\ell+1}\right)\left(\mP_{\ell}\cdots\mP_{1}\right)\vv
    }^2
    \\
    \explain{\text{\lemref{lem:square_ineq}}}
    &\le 
    m
    \left(
    \norm{
    \left(\I-\mP_{m}\right)
    \left(\I-\mP_{1}\right)\vv
    }^2
    +
    \sum_{\ell=1}^{m-1}
    \norm{
    \left(\I-\mP_{m}\right)
    \left(\I-\mP_{\ell+1}\right)\left(\mP_{\ell}\cdots\mP_{1}\right)\vv
    }^2
    \right)
    \\
    \explain{\text{projections}\\ \text{contract}}
    &\le 
    m
    \left(
    \norm{
    \left(\I-\mP_{1}\right)\vv
    }^2
    +
    \sum_{\ell=1}^{m-1}
    \norm{
    \left(\I-\mP_{\ell+1}\right)\left(\mP_{\ell}\cdots\mP_{1}\right)\vv
    }^2
    \right)
    \\
    \explain{\text{idempotence}}
    &= 
    m
    \left(
    \norm{\vv}^2
    -
    \norm{
    \mP_{1}\vv
    }^2
    +
    \sum_{\ell=1}^{m-1}
    \left(
    \norm{
    \left(\mP_{\ell}\cdots\mP_{1}\right)\vv
    }^2
    -
    \norm{
    \left(\mP_{\ell+1}\cdots\mP_{1}\right)\vv
    }^2
    \right)
    \right)
    \\
    \explain{\text{telescoping}}
    &=
    m
    \left(
    \norm{\vv}^2
    -
    \norm{
    \left(\mP_{m}\cdots\mP_{1}\right)\vv
    }^2
    \right)
    \le
    m
    \left(
    \norm{\vv}^2
    -
    \norm{
    \left(\mP_{T}\cdots\mP_{1}\right)\vv
    }^2
    \right)
    ~,
\end{split}
\end{align*}
which completes our proof.
\end{proof}
\paragraph{Remark.}
We note in passing that this dependence on $m$ can be further improved.
Using similar techniques,
one can also prove that
$\left\Vert 
\left(\I-\mP_{m}\right)\M\vv\right
\Vert_{2}^{2}
\le
\bigprn{T-m}
\left(
\norm{\vv}^2
-
\norm{
\M\vv
}^2
\right)$.
This in turn can help tighten the upper bound in
\thmref{thm:worst_case_T3}
by a multiplicative factor of $2$, but yields a slightly less elegant expression.

\newpage

\begin{proof}[Proof for \ref{lem:bound_of_one_cycle}] 
Notice that \clmref{clm:identity_trick} (with $m=T$) implies that
\begin{align*}
\begin{split}
\I -
\mP_{T}
\cdots
\mP_{1}
&=
\left(
\I
-
\mP_{1}
\right)
+
\sum_{\ell=1}^{T-1}
\left(
\I
-
\mP_{\ell+1}
\right)
\left(
\mP_{\ell}
\cdots
\mP_{1}
\right)~.
\end{split}
\end{align*}
Then, we prove our lemma:
\begin{align*}
\begin{split}
\norm{\vv
    -
    \mP_{T}
    \cdots
    \mP_{1}
    \vv
}^2
&=
\norm{
    \left(
    \I
    -
    \mP_{1}
    \right)\vv
+
\sum_{\ell=1}^{T-1}
    \left(
    \I
    -
    \mP_{\ell+1}
    \right)
    \left(
    \mP_{\ell}
    \cdots
    \mP_{1}
    \right)\vv
}^2
\\
\explain{\text{\lemref{lem:square_ineq}}}
&\le
T
\left(
\norm{
    \left(
    \I
    -
    \mP_{1}
    \right)\vv
}^2
+
\sum_{\ell=1}^{T-1}
\norm{
    \left(
    \I
    -
    \mP_{\ell+1}
    \right)
    \left(
    \mP_{\ell}
    \cdots
    \mP_{1}
    \right)\vv
}^2
\right)
\\
\explain{\text{idempotence}}
&=
T
\left(
\norm{\vv}^2
-
\norm{
    \mP_{1}\vv
}^2
+
\sum_{\ell=1}^{T-1}
\left(
\norm{
    \left(
    \mP_{\ell}
    \cdots
    \mP_{1}
    \right)\vv
}^2
-
\norm{
    \left(
    \mP_{\ell+1}
    \cdots
    \mP_{1}
    \right)\vv
}^2
\right)
\right)
\\
\explain{\text{telescoping}}&=
T
\left(
\norm{\vv}^2
-
\norm{
    \left(
    \mP_{T}
    \cdots
    \mP_{1}
    \right)\vv
}^2
\right)~.
\end{split}
\end{align*}
\end{proof}
\vspace{-13pt}
\paragraph{Remark.}
We note in passing that after completing our proof above, 
we found that a similar proof was already presented in \citet{netyanun2006iterated} while discussing Kakutani's lemma.
We still brought our proof here for the sake of completeness.
Moreover, they showed that for a non-expansive \emph{self-adjoint positive semi-definite} operator $\M$, the factor $T$ can be alleviated from the inequality. 
That is,
$
\norm{\vv
    -
    \mP_{T}
    \cdots
    \mP_{1}
    \vv
}^2
\le
\cancel{T}
\left(
\norm{\vv}^2
-
\norm{
    \left(
    \mP_{T}
    \cdots
    \mP_{1}
    \right)\vv
}^2
\right)
$.
But clearly this does not suit our general cyclic operators which are not necessarily self-adjoint.
Indeed, our proof for \ref{lem:T_over_n_norm} yields a similar conclusion for \emph{symmetric} cyclic operators.

\bigskip
\bigskip
\bigskip

\begin{proof}[Proof for \ref{lem:first_task_forgetting}] 
Let $\vu\in\complex^d$. 
% Recall that the operators we are dealing with are all real matrices.
%
First, we notice that since $\M$
is a non-expansive operator,
we have that 
$\left\Vert \M^{n}\vu\right\Vert_{2}^{2}
\ge
\left\Vert \M^{n+1}\vu\right\Vert_{2}^{2}$.
In turn, this means
\begin{align*}
0
\le
\left\Vert \M^{n}\vu\right\Vert_{2}^{2}
-
\left\Vert \M^{n+1}\vu\right\Vert_{2}^{2}
=
\vu^\hop
\Bigprn{
\underbrace{
\left(\M^{n}\right)^{\hop}
\M^{n}
-
\left(\M^{n+1}\right)^{\hop}
\M^{n+1}
}_{\succeq{\0}~\text{because it is clearly symmetric as well}}
}
\vu~.
%
% \le
% \norm{
% \left(\M^{n}\right)^{\hop}
% \M^{n}
% -
% \left(\M^{n+1}\right)^{\hop}
% \M^{n+1}
% }
%
\\
% 2 \norm{\M^{n} - \M^{n+1}}
\end{align*}
Then, we use the fact that for any positive semi-definite matrix $\A\succeq\0$, 
we have that
$\vu^\hop \A \vu \le 
\lambda_1 \left(\A\right)
=\norm{\A}$,
and get
\begin{align*}
\left\Vert \M^{n}\vu\right\Vert_{2}^{2}
-
\left\Vert \M^{n+1}\vu\right\Vert_{2}^{2}
&\le
\norm{
\left(\M^{n}\right)^{\hop}
\M^{n}
-
\left(\M^{n+1}\right)^{\hop}
\M^{n+1}
}
\\
\explain{
\A^\hop \A -\B^\hop \B
=
\prn{\A^\hop -\B^\hop}\A
+
\B^\hop\prn{\A -\B}
}
&=
\norm{
\left(\M^{n}\right)^{\hop}
\prn{
\M^{n}
-
\M^{n+1}
}
+
\prn{
\left(\M^{n}\right)^{\hop}
-
\left(\M^{n+1}\right)^{\hop}
}
\M^{n}
}
\\
\explain{\text{triangle inequality}}
&\le
\norm{
\left(\M^{n}\right)^{\hop}
\prn{
\M^{n}
-
\M^{n+1}
}
}
+
\norm{
\prn{
\left(\M^{n}\right)^{\hop}
-
\left(\M^{n+1}\right)^{\hop}
}
\M^{n}
}
\\
\explain{~\norm{\A^\hop}=\norm{\A}~}
&\le
2\norm{
\left(\M^{n}\right)^{\hop}
\prn{
\M^{n}
-
\M^{n+1}
}
}
\\
\explain{\text{contraction}}
&\le
2\norm{
\M^{n}
-
\M^{n+1}
}~.
%
% \\
% \explain{\ref{lem:T_over_n_norm}}
% &
% \le
% 2\sqrt{\frac{T}{n}}
\end{align*}
\end{proof}

\newpage

\begin{proof}[Proof for \ref{lem:m_task_forgetting}] 
This lemma is a direct corollary of
\ref{lem:bound_on_projection_to_m_task}
and
\ref{lem:first_task_forgetting}.

First, we set
$\vv=\M^{n}\vu$ and apply
\ref{lem:bound_on_projection_to_m_task}:
\begin{align*}
\left\Vert 
\left(\I-\mP_{m}\right)\M^{n}
\right\Vert^2
=
\max_{\vu:\left\Vert \vu\right\Vert _{2}=1}
\left\Vert \left(\I-\mP_{m}\right)
\M^{n} \vu
\right\Vert^2
\le
m\cdot\!\!
\max_{\vu:\left\Vert \vu\right\Vert _{2}=1}
\prn{
\left\Vert 
\M^{n} \vu\right\Vert_{2}^{2}
-
\left\Vert \M^{n+1} \vu
\right\Vert_{2}^{2}
}~.
\end{align*}
Then, we apply \ref{lem:first_task_forgetting}
to get:
\begin{align*}
\left\Vert 
\left(\I-\mP_{m}\right)\M^{n}
\right\Vert^2
\le
m\cdot\!\!
\max_{\vu:\left\Vert \vu\right\Vert _{2}=1}
\prn{
\left\Vert 
\M^{n} \vu\right\Vert_{2}^{2}
-
\left\Vert \M^{n+1} \vu
\right\Vert_{2}^{2}
}
\le
2m \norm{\M^{n} - \M^{n+1}}~.
\end{align*}
\end{proof}

\bigskip
\bigskip

\begin{proof}[Proof for \ref{lem:T_over_n_norm}] 
We use \ref{lem:bound_of_one_cycle} 
to show that
\begin{align*}
\begin{split}
\sum_{t=0}^{n-1} \norm{\left(\M^t-\M^{t+1}\right)\vv}^2
=
\sum_{t=0}^{n-1} \norm{
\left(\I-\M\right)\M^t\vv}^2
&
\le
T\sum_{t=0}^{n-1}
\left(
\norm{\M^t\vv}^2
-
\norm{\M^{t+1}\vv}^2
\right)
\\
\explain{\text{telescoping}}
&
=
T
\left(
\norm{\vv}^2
-
\norm{\M^{n}\vv}^2
\right)
\le
T \norm{\vv}^2~.
\end{split}
\end{align*}
Since
$\norm{\left(\M^t-\M^{t+1}\right)\vv}^2
=
\norm{\M\left(\left(\M^{t-1}-\M^{t}\right)\vv\right)}^2$
and since the cyclic operator $\M$ is a contraction operator,
we get that the series $\left\{\norm{\left(\M^t-\M^{t+1}\right)\vv}^2\right\}_t$
is monotonic non-increasing.
This in turn means that
\begin{align*}
    \forall{\vv \in \complex^d}:
    \norm{\left(\M^{n-1}-\M^{n}\right)\vv}^2
    &=\!\!
    \min_{t=0,\dots,n-1} \norm{\left(\M^t-\M^{t+1}\right)\vv}^2
    \\&
    \le 
    \frac{1}{n}
    \sum_{t=0}^{n-1} \norm{\left(\M^t-\M^{t+1}\right)\vv}^2
    \le
    \frac{T}{n} \norm{\vv}^2~,
\end{align*}
which finally implies that
$\norm{\M^{n-1} - \M^{n}}^2 \le \nicefrac{T}{n}$
as required.

\bigskip
\bigskip

\paragraph{A symmetric cyclic operator 
$\M=\left(\mP_{T}\cdots\mP_{1}\right)$.}
When $\M$ is Hermitian matrix,
its spectral decomposition $\M=\Q\mLambda\Q^\hop$
reveals a tighter bound:
\begin{align*}
    \norm{\M^{n-1} \!-
    \!\M^{n}}
    =
    \norm{\Q(\mLambda^{n-1} 
    \!-\!
    \mLambda^{n})\Q^\hop}
    =
    \norm{\mLambda^{n-1} 
    \!-\! \mLambda^{n}}
    =
    \max_{i}
    \prn{\lambda_i^{n-1}
    \prn{1 \!-\! \lambda_i}}
    \le
    \frac{1}{e(n-1)}\,.
\end{align*}
\end{proof}

\newpage

\begin{proof}[Proof for \lemref{lem:rank_dependent_bound}] 
We follow a proof by
\citet{mathoverflowbound}.

Define the series of matrices $\left\{\boldsymbol{B}_n\right\}$,
where $\boldsymbol{B}_{n}=\M^{n^{\top}}\M^{n}-\M^{n+1^{\top}}\M^{n+1}$.
Notice that it holds that $\boldsymbol{B}_{n}\succeq0$ (since it is symmetric and $\forall\vv:\vv^{*}\boldsymbol{B}_{n}\vv=\left\Vert \M^{n}\vv\right\Vert _{2}^{2}-\left\Vert \M^{n+1}\vv\right\Vert _{2}^{2}\ge0$).

% That means that there exists a Cholesky decomposition using a lower triangular matrix \boldsymbol{L} such that \boldsymbol{B}_{n}=\boldsymbol{L}\boldsymbol{L}^{*} (since \boldsymbol{B}_{n} is not strictly positive definite, the elements on the diagonal of \boldsymbol{L} can be zeros).

Furthermore, 
Von Neumann's trace inequality
and the fact that $\M$ is a contraction operator, imply that
\begin{align*}
\tr\left(\boldsymbol{B}_{n+1}\right)
=
\tr
\bigprn{
\M^{\top}\boldsymbol{B}_{n}\M
}
=
\tr
\bigprn{
\underbrace{\M\M^{\top}}_{\succeq 0}
\underbrace{\boldsymbol{B}_{n}}_{\succeq 0}
}
\le
\bignorm{\M\M^{\top}}_2
\tr\left(\boldsymbol{B}_{n}\right)
\le
\tr\left(\boldsymbol{B}_{n}\right)~,
\end{align*}
meaning that the sequence 
$\left\{\tr\left(\boldsymbol{B}_{n}\right)\right\}_n$
is monotonically non-increasing. 
Hence,  
%$\tr\left(\boldsymbol{B}_{n}\right)\le\tr\left(\boldsymbol{B}_{\ell}\right)$, $\forall\ell\le n$.
we get that
\begin{align*}
\begin{split}
    n\tr\left(\boldsymbol{B}_{n}\right)	
    &
    \le\sum_{\ell=1}^{n}\tr\left(\boldsymbol{B}_{\ell}\right)
    =\sum_{\ell=1}^{n}\tr\left(\M^{\ell^{\top}}\M^{\ell}-\M^{\ell+1^{\top}}\M^{\ell+1}\right)
    \\
    &=\sum_{\ell=1}^{n}\left(\tr\left(\M^{\ell^{\top}}\M^{\ell}\right)-\tr\left(\M^{\ell+1^{\top}}\M^{\ell+1}\right)\right)
	\\
    \explain{\text{telescoping}}
	&
	=\tr\left(\M^{^{\top}}\M\right)-\underbrace{\tr\left(\M^{n+1^{\top}}\M^{n+1}\right)}_{\ge0}
	\le\tr\left(\M^{^{\top}}\M\right)
	\stackrel{\text{contraction}}{\le}\rank{\left(\M\right)}
	\\
    \Longrightarrow 
    \tr\left(\boldsymbol{B}_{n}\right)
    &
    \le\frac{\rank{\left(\M\right)}}{n}~.
    \end{split}
\end{align*}
We are now ready to conclude this lemma,
\begin{align*}
    \begin{split}
	\max_{\vv:\left\Vert \vv\right\Vert _{2}=1}\left(\left\Vert \M^{n}\vv\right\Vert _{2}^{2}-\left\Vert \M^{n+1}\vv\right\Vert _{2}^{2}\right)
	&=
	\max_{\vv:\left\Vert \vv\right\Vert _{2}=1}
	\vv^{*}\left(\M^{n^{\top}}\M^{n}-\M^{n+1^{\top}}\M^{n+1}\right)\vv
	\\
	&=\max_{\vv:\left\Vert \vv\right\Vert _{2}=1}\vv^{*}\underbrace{\boldsymbol{B}_{n}}_{\succeq0}\vv
	=\underbrace{\lambda_{1}\left(\boldsymbol{B}_{n}\right)}_{\in\mathbb{R}_{+}}
	%\\
	%&
	\le\tr\left(\boldsymbol{B}_{n}\right)\le\frac{\rank{\left(\M\right)}}{n}~.
    \end{split}
\end{align*}
\end{proof}
\paragraph{Remarks.}
For the sake of completeness, we briefly discuss our result above.
First, this result is useful for our derivations since our cyclic operator $\left(\mP_{T}\cdots\mP_{1}\right)$ is essentially a contraction operator.
In fact, being a product of projections,
our cyclic operator has great expressiveness.
It was shown by \citet{oikhberg1999products} that \emph{any} contraction operator can be decomposed into a product of sufficiently,
though sometimes infinitely, many projections.
This explains our interest in analyzing ``general'' contraction operators.

Finally, we should mention that the bound derived here is sharp (up to a constant).
For instance, for the Toeplitz operator $\boldsymbol{M}=
% \left[\begin{array}{ccccc}
%  & 1\\
%  &  & \ddots\\
%  &  &  & 1\\
% 1-\epsilon
% \end{array}\right]
% =
\left[
\begin{smallmatrix}
 & 1\\
 &  & \ddots\\
 &  &  & 1\\
1-\epsilon
\end{smallmatrix}
\right]$, 
a proper choice of $\epsilon$ can yield a rate of
$\nicefrac{d}{en}$ for the quantity we bound in the lemma above.
However, such an operator cannot be expressed as a product of a \emph{finite} number of projection operators. 
One can approximate this operator 
arbitrarily-well
by replacing the $1$s with $\beta\in\prn{0,1}$.
However, we were not able to use this approximation to improve our lower bound
in \thmref{thm:worst_case_T3}, since 
$\beta<1$ implies a geometric contraction of
\emph{all} elements at every cycle and requires a very large number of constructing projections (\ie tasks).

% Previous upper bound and its lemma can be found on 
%93b_previous_UB.tex

\newpage

\subsubsection{Proving the lower bound}
\label{app:lower_bound_proof}
\begin{proof}[Proof for \thmref{thm:worst_case_T3} -- lower bound] 
\label{prf:worst_case_T3}

To prove the lower bound, we show a construction of a task collection $\coll\in\Coll$ with $T$ tasks whose forgetting is
\begin{align*}
    \frac{T^2}{24ek}
    \le
    F_{\tau,\coll}(k)~.
\end{align*}

\begin{figure}[b!]
  %\begin{minipage}[c]{0.58\textwidth}
   \centering
  %\end{minipage}
  %
  %\hfill
  %
    \caption{Construction of $T=6$ "back-and-forth" tasks for the lower bound proof in  \thmref{thm:worst_case_T3}.}
  %\begin{minipage}[c]{0.4\textwidth}
  \includegraphics[width=.5\textwidth]{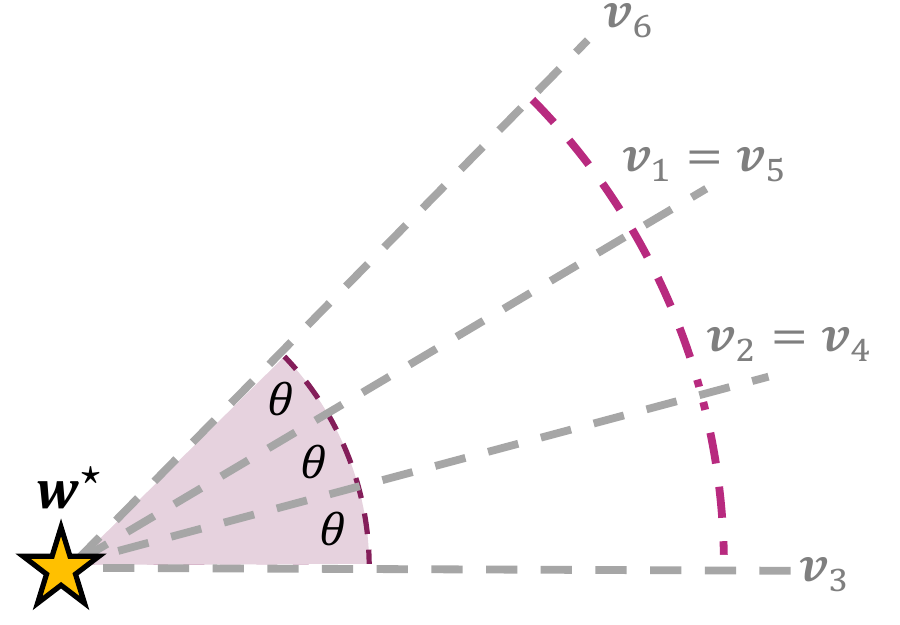}
    %
%   \caption{Construction illustration}
  %\end{minipage}
    \label{fig:lower_bound}
\end{figure}

%\hspace{1.5cm}

We construct $\coll$ using data matrices of rank $r=d-1$ whose non-zero singular values all equal to $1$.
Let $\vv_1,\dots,\vv_T$
be the normalized vectors spanning the $T$ rank-\emph{one} solution spaces $\sol{1},\dots,\sol{T}$.
We also choose to generate labels using a \emph{unit norm} offline solution, \ie $\norm{\w^\star}=1$.
We spread $\vv_1,\dots,\vv_T$
%uniformly
on a 2-dimensional hyperplane
%using the same angle $\theta$ between every two consecutive tasks
such that the tasks "go" back-and-forth, \ie switch direction at the middle task (see illustration in \figref{fig:lower_bound}).

\bigskip\bigskip

Using \lemref{lem:forgetting_lower_bound} we have
%under the cyclic ordering 
%and the constructed task collection and get the following forgetting
%
%
\begin{align*}
&F_{\tau,\coll}\left(k\right) 
\geq
\frac{1}{k}\left(\prod_{i=1}^{k-1}\cos^{2}\theta_{\tau\left(i,i+1\right)}\right)\sum_{m=1}^{k-1}\left(1-\cos^{2}\theta_{\tau\left(m,k\right)}\right)
 \\&=
 \frac{1}{k}\!
 \left(\prod_{i=1}^{k-1}\cos^{2}\theta_{\tau\left(i,i+1\right)}\right)\frac{k}{T}\sum_{m=1}^{T}\left(1\!-\!\cos^{2}\theta_{m,T}\right)
 \!=\!
 \frac{1}{T}\!
 \left(\prod_{i=1}^{k-1}\cos^{2}\theta_{\tau\left(i,i+1\right)}\!\right)^{\!k-1}
 \!\!\!
 \sum_{m=1}^{T}\!\left(1\!-\!\cos^{2}\theta_{m,T}\right).
\end{align*}
We consider separately the case of even $T$ and the case of odd $T$.

\newpage

\paragraph{Even $T$: }
In this case, we set 
$
\forall i\in\left[T-1\right]:\,\,\theta_{i,i+1}=\begin{cases}
\theta & i<\frac{T}{2}\\
-\theta & i\ge\frac{T}{2}
\end{cases}
~$,
thus obtaining

\[
\frac{1}{T}\!
\left(\prod_{i=1}^{k-1}\cos^{2}\theta_{\tau\left(i,i+1\right)}\right)^{k-1}
\!\!\!
\sum_{m=1}^{T}\left(1\!-\!\cos^{2}\theta_{m,T}\right)
=
\frac{1}{T}\!\left(\cos^{2}\theta\right)^{k-1}
\sum_{m=1}^{T}
\!\left(1\!-\!
\cos^{2}\left(\left(\frac{T}{2}\!-\!m\right)\theta\right)\right)~.
\]
We choose $\theta=\sqrt{\nicefrac{1}{k-1}}$ and get
\begin{align*}
&\frac{1}{T}\left(\cos^{2}\theta\right)^{k-1}\sum_{m=1}^{T}\left(1-\cos^{2}\left(\left(\frac{T}{2}-m\right)\theta\right)\right)
\\
&
=\frac{1}{T}\left(\cos^{2}\sqrt{\frac{1}{k-1}}\right)^{k-1}\sum_{m=1}^{T}\left(1-\cos^{2}\left(\left(\frac{T}{2}-m\right)\sqrt{\frac{1}{k-1}}\right)\right).
\end{align*}
Next, using the inequality $1-\cos^{2}x\ge x^{2}-\frac{1}{3}x^{4}$, we get
\begin{align*}
&\frac{1}{T}\left(\cos^{2}\theta\right)^{k-1}\sum_{m=1}^{T}\left(1-\cos^{2}\left(\left(\frac{T}{2}-m\right)\theta\right)\right)
\\
&\ge\frac{1}{T}\left(\cos^{2}\sqrt{\frac{1}{k-1}}\right)^{k-1}\sum_{m=1}^{T}\left(\left(\frac{T}{2}-m\right)^{2}\frac{1}{k-1}-\frac{1}{3}\left(\frac{T}{2}-m\right)^{4}\frac{1}{\left(k-1\right)^{2}}\right)
\\
&\ge\frac{1}{T}\frac{1}{k-1}\left(\cos^{2}\sqrt{\frac{1}{k-1}}\right)^{k-1}\sum_{m=1}^{T}\left(\left(\frac{T}{2}-m\right)^{2}-\left(\frac{T}{2}-m\right)^{4}\frac{1}{3\left(k-1\right)}\right)~.
\end{align*}
By using $\forall x\ge1:\frac{1}{x}
\left(
\cos^{2}\sqrt{\nicefrac{1}{x}}\right)^{x}\ge\nicefrac{1}{e\left(x+1\right)}$
we get
\begin{align*}
&\frac{1}{T}\left(\cos^{2}\theta\right)^{k-1}\sum_{m=1}^{T}\left(1-\cos^{2}\left(\left(\frac{T}{2}-m\right)\theta\right)\right)
\\
&\ge\frac{1}{Tek}\sum_{m=1}^{T}\left(\left(\frac{T}{2}-m\right)^{2}-\left(\frac{T}{2}-m\right)^{4}\frac{1}{3\left(k-1\right)}\right)
\\
&=\frac{1}{Tek}\left(\frac{1}{12}T\left(T^{2}+2\right)-
\frac{T\left(3T^{4}+20T^{2}-8\right)}{240\cdot 3(k-1)}\right)
=\frac{1}{12ek}\left(T^{2}+2-\frac{3T^{4}+20T^{2}-8}{60\left(k-1\right)}\right)~.
\end{align*}
For $k\ge T^{2}$ we get
\begin{equation}
\frac{1}{T}\left(\cos^{2}\theta\right)^{k-1}\sum_{m=1}^{T}\left(1-\cos^{2}\left(\left(\frac{T}{2}-m\right)\theta\right)\right)\ge\frac{1}{12ek}\left(T^{2}+2-\frac{3T^{4}+20T^{2}-8}{60\left(T^{2}-1\right)}\right)~.\label{eq:evenT}
\end{equation}
Note that for $T\ge4$ we have
$
\frac{1}{12ek}\left(T^{2}+2-\frac{3T^{4}+20T^{2}-8}{60\left(T^{2}-1\right)}\right)\ge\frac{T^{2}}{13ek}$.

Therefore for even $T\ge4$ we have
\[
F_{\tau,\coll}\left(k\right)\ge\frac{1}{T}\left(\cos^{2}\theta\right)^{k-1}\sum_{m=1}^{T}\left(1-\cos^{2}\left(\left(\frac{T}{2}-m\right)\theta\right)\right)\ge\frac{T^{2}}{13ek}~.
\]

\newpage

\paragraph{Odd $T$:}
In this case we set 
$
\forall i\in\left[T-1\right]:\,\,
\theta_{i,i+1}=\begin{cases}
\theta & i<\frac{T-1}{2}\\
-\frac{\theta}{2} & i=\frac{T-1}{2}\,\,\textrm{or}\,\,i=\frac{T+1}{2}\\
-\theta & i>\frac{T+1}{2}
\end{cases}~
$,
and since $\cos\left(\nicefrac{\theta}{2}\right)\ge\cos\left(\theta\right)$
(for small $\theta$, as we choose later) we have

\[
\frac{1}{T}\left(\prod_{i=1}^{k-1}\cos^{2}\theta_{\tau\left(i,i+1\right)}\right)^{k-1}\sum_{m=1}^{T}\left(1-\cos^{2}\theta_{m,T}\right)\ge\frac{1}{T}\left(\cos^{2}\theta\right)^{k-1}\sum_{m=1}^{T}\left(1-\cos^{2}\theta_{m,T}\right)~.
\]
Next,
\[
\sum_{m=1}^{T}\left(1-\cos^{2}\theta_{m,T}\right)=1-\cos^{2}\left(\left(\frac{T-2}{2}\right)\theta\right)+\sum_{m=1}^{T-1}\left(1-\cos^{2}\left(\left(\frac{T-1}{2}-m\right)\theta\right)\right)~.
\]
Therefore we get
\begin{align}
 & \frac{1}{T}\left(\cos^{2}\theta\right)^{k-1}\sum_{m=1}^{T}\left(1-\cos^{2}\theta_{m,T}\right)\nonumber \\
= & \frac{1}{T}\left(\cos^{2}\theta\right)^{k-1}\left(1-\cos^{2}\left(\left(\frac{T-2}{2}\right)\theta\right)\right)\nonumber \\
 & +\frac{1}{T}\left(\cos^{2}\theta\right)^{k-1}\sum_{m=1}^{T-1}\left(1-\cos^{2}\left(\left(\frac{T-1}{2}-m\right)\theta\right)\right)\nonumber \\
= & \frac{1}{T}\left(\cos^{2}\theta\right)^{k-1}\left(1-\cos^{2}\left(\left(\frac{T-2}{2}\right)\theta\right)\right)\nonumber \\
 & +\frac{T-1}{T}\frac{1}{T-1}\left(\cos^{2}\theta\right)^{k-1}\sum_{m=1}^{T-1}\left(1-\cos^{2}\left(\left(\frac{T-1}{2}-m\right)\theta\right)\right)~.\label{eq:odd1}
\end{align}
We choose again $\theta=\sqrt{\nicefrac{1}{k-1}}$. For the first term
in Eq. (\ref{eq:odd1}) we have
\begin{align*}
 & \frac{1}{T}\left(\cos^{2}\theta\right)^{k-1}\left(1-\cos^{2}\left(\left(\frac{T-2}{2}\right)\theta\right)\right)\\
 & 
 =
 \frac{1}{T}\left(\cos^{2}\sqrt{\frac{1}{k-1}}\right)^{k-1}\left(1-\cos^{2}\left(\left(\frac{T-2}{2}\right)\sqrt{\frac{1}{k-1}}\right)\right)
 \\
 & 
 \ge
 \frac{1}{T}\left(\cos^{2}\sqrt{\frac{1}{k-1}}\right)^{k-1}\left(1-\cos^{2}\left(\frac{1}{2}\sqrt{\frac{1}{k-1}}\right)\right)
 \ge
 \frac{1}{4Tek}~.
\end{align*}
For the second term in Eq. (\ref{eq:odd1}) we use Eq. (\ref{eq:evenT})
and get
\begin{align*}
\frac{T-1}{T}\frac{1}{T-1}\left(\cos^{2}\theta\right)^{k-1}\sum_{m=1}^{T-1}\left(1-\cos^{2}\left(\left(\frac{T-1}{2}-m\right)\theta\right)\right)\\
\ge\frac{T-1}{T}\frac{1}{12ek}\left(\left(T-1\right)^{2}+2-\frac{3\left(T-1\right)^{4}+20\left(T-1\right)^{2}-8}{60\left(\left(T-1\right)^{2}-1\right)}\right)~.
\end{align*}
Therefore in Eq. (\ref{eq:odd1}) we have
\begin{align*}
 & \frac{1}{T}\left(\cos^{2}\theta\right)^{k-1}\sum_{m=1}^{T}\left(1-\cos^{2}\theta_{m,T}\right)\\
 & \ge\frac{1}{4Tek}+\frac{T-1}{T}\frac{1}{12ek}\left(\left(T-1\right)^{2}+2-\frac{3\left(T-1\right)^{4}+20\left(T-1\right)^{2}-8}{60\left(\left(T-1\right)^{2}-1\right)}\right)\\
 & =\frac{1}{ek}\left(\frac{1}{4T}+\frac{T-1}{12T}\left(\left(T-1\right)^{2}+2-\frac{3\left(T-1\right)^{4}+20\left(T-1\right)^{2}-8}{60\left(\left(T-1\right)^{2}-1\right)}\right)\right)\\
\left[T\ge3\right] & \ge\frac{T^{2}}{24ek}~,
\end{align*}
and thus for odd $T\ge3$ we have $F_{\tau,\coll}\left(k\right)\ge\nicefrac{T^{2}}{24ek}$,
which concludes the proof.

\bigskip
\bigskip
\bigskip

\paragraph{Regarding the rank.}
Notice that the lower bound we derived
used a construction consisting of tasks
of rank $d-1$.
On the other hand, 
the lower bound in
\thmref{thm:worst_case_T3}
should be correct even when $\maxrank < d-1$.
In such a case, we can always use an identical task collection 
using $T$ tasks spread on a $2$-d hyperplane,
and add to the tasks' data matrices any number of directions which are orthogonal to that hyperplane, but appear identically on \emph{all} tasks.
Then, any expression of the form 
$\norm{\bigprn{\I-\mP_m}\bigprn{\mP_T \cdots \mP_1}^n}$ will disregard these added directions, since they are shared across all tasks (hence they do not appear in $\I\!-\!\mP_m$).
The remaining hyperplane's behavior will remain unchanged as in the analysis above.
Thus our analysis above is valid for any task rank.
\end{proof}

\newpage

\section{Supplementary material: Random task orderings (\secref{sec:random})}
\label{app:proofs6}

\begin{recall}[\thmref{thm:expected_forgetting}]
Under the uniform i.i.d. task ordering $\tau$,
the worst-case expected forgetting  %(\defref{def:expected-forgetting}) 
after $k$ iterations is
\begin{align*}
    %\expfor_{\tdist}
    \sup_{
    \substack{
        \coll\in\mathcal{\coll}_{T}: \\
        \tfrac{1}{T}
        \tsum_{m=1}^{T} \rank(\X_m) = \avgrank
    }
    }
    \!\!\!\!\!\!\!\!\!\!\!\!\!\!\!\!\!\!\!
    \expfor_{\tau,\coll}
    \left(k\right)
    \le
    \frac{%\left(d-\avgrank\right)
    9\left(d-\avgrank\right)}{k}~.
\end{align*}
%
% where we abuse notation slightly 
% and use
% $\rank(\coll)\!\!\ge\!\avgrank$
% to indicate that 
% $\forall \X\!\!\in\!\coll\!: \rank(\X)\!\ge\!\avgrank$.
\end{recall}

\bigskip

\begin{proof}[Proof for \thmref{thm:expected_forgetting}] 
\label{prf:expected_forgetting}
Let $\coll=\left\{ \mP_{1},\dots,\mP_{T}\right\}$  be a given collection of $T$ arbitrary projection matrices sampled i.i.d. at each iteration from a uniform distribution.

As in \eqref{eq:linear_forgetting},
we first express the expected forgetting
on a task collection $\coll\in\Coll$,
defined 
in \defref{def:expected-forgetting},
in terms of the projection matrices
induced by the $T$ data matrices in $\coll$.
That is, we bound the expected forgetting by
\begin{align}
\expfor_{\tau,\coll} \left(k\right)
\!
\triangleq
\expectation_{\tau} 
\!
\bigg[
\frac{1}{k}
\sum_{\itr=1}^{k}
\!
\Bignorm{\X_{\tau(\itr)}\w_k-\y_{\tau(\itr)}}^2
\bigg]
\!
\le
\expectation_{\tau} 
\bigg[
\frac{1}{k}
\sum_{\itr=1}^{k}
\!
\Bignorm{
 \left(\I-\mP_{\tau(\itr)}\right)
{
\mP_{\tau\left(k\right)}\cdots
\mP_{\tau\left(1\right)}
}
}^2
\bigg]~.
\label{eq:expected-forgetting-projections}
\end{align}
\subsection{Detour: Rephrasing and bounding a more natural expression}
\subsubsection{Rephrasing}
Before we will bound the upper bound above,
we start by bounding a slightly different 
quantity which is easier to work with and can be seen as a straightforward MSE loss over $T$ tasks (as studied in NLMS papers, \eg \cite{slock1993convergence,sankaran2000convergence}).
Instead of averaging the forgetting (\ie residuals) over \emph{previously-seen} tasks,
we average over \emph{all} tasks in the collection $\coll$.
That is, we start by bounding.
\begin{align}
    \expectation_{\tau} 
    \bigg[
    \frac{1}{T}
    \sum_{m=1}^{T}
    \!
    \left\Vert \left(\I-\mP_{m}\right)
    \mP_{\tau\left(k\right)}\cdots\mP_{\tau\left(1\right)}
    \right\Vert_{2}^{2}\bigg]
    =
    \expectation_{\tau,\mP} 
    \left\Vert \left(\I-\mP\right)
    \mP_{\tau\left(k\right)}\cdots\mP_{\tau\left(1\right)}
    \right\Vert_{2}^{2}~,
    \label{eq:expected-average}
\end{align}
where $\mathbb{E}_{\mP}$ means averaging over all $T$ projection matrices w.r.t. a uniform distribution.

To further ease reading,
we slightly abuse notation
such that instead of explicitly sampling a task ordering $\tau$,
we simply uniformly sample $k+1$ orthogonal projection operators, $\mP_1,\mP_2,\dots,\mP_k$ and $\mP$ from 
some i.i.d. uniform distribution 
(\eg uniformly from a finite set of $T$ such operators).
Finally, we obtain the following expression:
\begin{align}
\mathbb{E}_{
\mP_{1},\dots,\mP_{k},\mP}\left\Vert \left(\I-\mP\right)\mP_{k}\cdots\mP_{1}\right\Vert _{2}^{2}~.
\label{eq:random-projections}
\end{align}

\bigskip
Next, we will bound \eqref{eq:random-projections} which is equivalent to \eqref{eq:expected-average}.
Then, we will use \eqref{eq:expected-average}
to bound the right hand side of
\eqref{eq:expected-forgetting-projections}
and conclude an upper bound for the expected forgetting, as required.

\newpage

\subsubsection{Bounding \eqref{eq:random-projections}}
We begin by using a known inequality between the spectral norm and the Frobenius norm, that is,
${\left\Vert \boldsymbol{A}\right\Vert _{2}^{2}\!=\!\lambda_{1}\left(\boldsymbol{A}^{*}\boldsymbol{A}\right)\le\trace\left(\boldsymbol{A}^{*}\boldsymbol{A}\right)\!=\left\Vert \boldsymbol{A}\right\Vert _{F}^{2}}$, 
and present an easier, but perhaps looser, surrogate quantity $f(k)$ which we will bound:
\begin{align}
\expectation_{\mP_{1},\dots,\mP_{k},\mP}\left\Vert \left(\I-\mP\right)\mP_{k}\cdots\mP_{1}\right\Vert _{2}^{2}	
\le
\expectation_{\mP_{1},\dots,\mP_{k},\mP}\trace\left[\mP_{1}\cdots\mP_{k}\left(\I-\mP\right)\mP_{k}\cdots\mP_{1}\right]
\triangleq
f\!\left(k\right)~.
\label{eq:trace_bound}
\end{align}
Key to our following derivations, is the fact that all the projection matrices including $\mP$ are identically distributed, allowing us to freely change variables. This property, together with the linearity of the expectation and trace operators, facilitates our derivations.

We now notice that the bound we got is \emph{non-increasing} in $k$, \ie
\begin{align*}
 f\!\left(k-1\right)-f\!\left(k\right)
 =&\expectation_{\mP_{1},\dots,\mP_{k-1},\mP}\tr\left[\mP_{1}\cdots\mP_{k-1}\left(\I-\mP\right)\mP_{k-1}\cdots\mP_{1}\right]
 -
\\
&\hskip 0cm
 \expectation_{\mP_{1},\dots,\mP_{k},\mP}\tr\left[\mP_{1}\cdots\mP_{k}\left(\I-\mP\right)\mP_{k}\cdots\mP_{1}\right]
 \\
\\
\left[\substack{\text{identically}\\
\text{distributed}
}
\right]	
=&\!\!\!\expectation_{\mP_{2},\dots,\mP_{k+1}}\tr\left[\mP_{2}\cdots\mP_{k}\left(\I-\mP_{k+1}\right)\mP_{k}\cdots\mP_{2}\right]
 -
\\
&\hskip 0cm
\expectation_{\mP_{1},\dots,\mP_{k+1}}\tr\left[\mP_{1}\cdots\mP_{k}\left(\I-\mP_{k+1}\right)\mP_{k}\cdots\mP_{1}\right]
\\
\\
\left[\text{linearity}\right] 
=&
\expectation_{\mP_{1},\dots,\mP_{k+1}}
\Bigg[\tr\bigg[\underbrace{\mP_{2}\cdots\mP_{k}\left(\I-\mP_{k+1}\right)\mP_{k}\cdots\mP_{2}}_{\triangleq\boldsymbol{M}}\bigg]
-
\\
&\hskip 1.95cm
\tr\bigg[\mP_{1}\underbrace{\mP_{2}\cdots\mP_{k}\left(\I-\mP_{k+1}\right)\mP_{k}\cdots\mP_{2}}_{\triangleq\boldsymbol{M}}\mP_{1}\bigg]
\Bigg]
\\
\\
&
=\expectation_{\mP_{1},\dots,\mP_{k},\mP}
\bigg[\underbrace{\tr\left[\boldsymbol{M}\right]-\tr\left[\mP_{1}\boldsymbol{M}\mP_{1}\right]}_{\ge0}\bigg]\ge0~,
\end{align*}
where in the last inequality we (again) used Von Neumann's trace inequality
to show that
$$
\tr\left(\mP_1\M\mP_1\right)
=
\tr\left(\mP_1\mP_1\M\right)
=
\tr
\bigprn{
\underbrace{\mP_1}_{\succeq 0}
\underbrace{\M}_{\succeq 0}
}
\le
\bignorm{\mP_1}_2
\tr\left(\M\right)
=
\tr\left(\M\right)~.
$$

Since the bounds are non-increasing,
we bound $f\!\left(k\right)$ at iteration k, using 
$f\!\left(1\right),\dots,f\!\left(k\right)$:
\begin{align*}
f\left(k\right)
&=
\min_{\itr\in\left[k\right]}f\left(\itr\right)
\le
\frac{1}{k}\sum_{\itr=1}^{k}
f\left(\itr\right)=
\frac{1}{k}\sum_{\itr=1}^{k}
\expectation_{\mP_{1},\dots,\mP_{\itr},\mP}\trace\left[\mP_{1}\cdots\mP{}_{\itr}\left(\I-\mP\right)\mP_{\itr}\cdots\mP_{1}\right]
\\
&=
\frac{1}{k}
\sum_{\itr=1}^{k}\left(\expectation_{\mP_{1},\dots,\mP_{\itr}}\trace\left[\mP_{1}\cdots\mP_{\itr}\cdots\mP_{1}\right]
-\!\!\!
\expectation_{\mP_{1},\dots,\mP_{\itr},\mP}\trace\left[\mP_{1}\cdots\mP{}_{\itr}\mP\mP_{\itr}\cdots\mP_{1}\right]\right)
\\
\left[\substack{\text{identically}\\
\text{distributed}
}
\right]	
&=
\frac{1}{k}\sum_{\itr=1}^{k}\left(\expectation_{\mP_{1},\dots,\mP_{\itr}}\trace\left[\mP_{1}\cdots\mP_{\itr}\cdots\mP_{1}\right]
-\!\!\!
\expectation_{\mP_{1},\dots,\mP_{\itr},\mP_{\itr+1}}\trace\left[\mP_{1}\cdots\mP{}_{\itr+1}\cdots\mP_{1}\right]\right)
\\
\left[\text{telescoping}\right]	&=
\frac{1}{k}\expectation_{\mP_{1}}\trace\left[\mP_{1}\right]-\underbrace{\expectation_{\mP_{1},\dots\mP_{k+1}}\trace\left[\mP_{1}\cdots\mP{}_{k+1}\cdots\mP_{1}\right]}_{\ge0,\text{ because \ensuremath{\mP_{1}\cdots\mP{}_{k+1}\cdots\mP_{1}\succeq\boldsymbol{0}}}}
\\
&
\le
\frac{1}{k}\expectation_{\mP}\rank\left(\mP\right)
=
\frac{1}{k}
\cdot
\frac{1}{T}
\sum_{m=1}^{T}
(d-\rank\left(\X_{m}\right))
\triangleq
\frac{d-\avgrank}{k}~,
\end{align*}
and conclude that
\begin{align*}
    \expectation_{\tau} 
    \bigg[
    \frac{1}{T}
    \sum_{m=1}^{T}
    \!
    \left\Vert \left(\I-\mP_{m}\right)
    \mP_{\tau\left(k\right)}\cdots\mP_{\tau\left(1\right)}
    \right\Vert_{2}^{2}\bigg]
    &
    \stackrel{
    \ref{eq:expected-average} \& \ref{eq:random-projections}
    }{\equiv}
\!\!\!
\expectation_{
\mP_{1},\dots,\mP_{k},\mP}\left\Vert \left(\I-\mP\right)\mP_{k}\cdots\mP_{1}\right\Vert _{2}^{2}
\\&
\stackrel{
\ref{eq:trace_bound}
}{\le}
f\!\left(k\right)
\le
\frac{d-\avgrank}{k}~.
\end{align*}

\newpage

\subsection{Back to the expected forgetting}
Recall
that our initial goal in \eqref{eq:expected-forgetting-projections} was to bound 
$
\expectation_{\tau} 
\Big[
\frac{1}{k}
\sum_{\itr=1}^{k}
\!
\bignorm{
 \left(\I-\mP_{\tau(\itr)}\right)
{
\mP_{\tau\left(k\right)}\cdots
\mP_{\tau\left(1\right)}
}
}^2
\Big]$.
Instead, we started by bounding the slightly different
quantities from \eqref{eq:expected-average}
and
(\ref{eq:random-projections}), 
\ie
we showed that
$
\expectation_{\tau,\mP} 
\left\Vert \left(\I-\mP\right)
\mP_{\tau\left(k\right)}\cdots\mP_{\tau\left(1\right)}
\right\Vert_{2}^{2}
\le
\tfrac{d-\avgrank}{k}$.
We now use the bound we proved to bound the original quantity of interest.
\vspace{-1mm}
\begin{align*}
&
	\mathbb{E}_{\tau}\!\left[\left\Vert \left(\boldsymbol{I}-\boldsymbol{P}_{\tau\left(t\right)}\right)\boldsymbol{P}_{\tau\left(k\right)}\cdots\boldsymbol{P}_{\tau\left(t\right)}\cdots\boldsymbol{P}_{\tau\left(1\right)}\right\Vert _{2}^{2}\right]
	\\
&
	=
	\mathbb{E}_{\tau}\!\left[\left\Vert \left(\boldsymbol{I}-\boldsymbol{P}_{\tau\left(t\right)}\right)\boldsymbol{P}_{\tau\left(k\right)}\cdots\boldsymbol{P}_{\tau\left(t+1\right)}\left(\boldsymbol{P}_{\tau\left(t\right)}+\boldsymbol{I}-\boldsymbol{I}\right)\boldsymbol{P}_{\tau\left(t-1\right)}\cdots\boldsymbol{P}_{\tau\left(1\right)}\right\Vert _{2}^{2}\right]
\\
\left[{\text{\propref{prop:norms}}}\right]
&
\le
2\bigg(\mathbb{E}_{\tau}\!\left[\left\Vert \left(\boldsymbol{I}-\boldsymbol{P}_{\tau\left(t\right)}\right)\boldsymbol{P}_{\tau\left(k\right)}\cdots\boldsymbol{P}_{\tau\left(t+1\right)}\left(\boldsymbol{I}-\boldsymbol{P}_{\tau\left(t\right)}\right)\boldsymbol{P}_{\tau\left(t-1\right)}\cdots\boldsymbol{P}_{1}\right\Vert _{2}^{2}\right]+
\\
&
	\,\,\,\,\,\,\,\,\,\,\,\,\,\,+\mathbb{E}_{\tau}\!\left\Vert \left(\boldsymbol{I}-\boldsymbol{P}_{\tau\left(t\right)}\right)\boldsymbol{P}_{\tau\left(k\right)}\cdots\boldsymbol{P}_{\tau\left(t+1\right)}\boldsymbol{P}_{\tau\left(t-1\right)}\cdots\boldsymbol{P}_{1}\right\Vert _{2}^{2}\bigg)
\\
\left[\text{i.i.d.}\right]	
&
=
2\bigg(\mathbb{E}_{\tau}\!\left\Vert \left(\boldsymbol{I}-\boldsymbol{P}_{\tau\left(t\right)}\right)\boldsymbol{P}_{\tau\left(k\right)}\cdots\boldsymbol{P}_{\tau\left(t+1\right)}\left(\boldsymbol{I}-\boldsymbol{P}_{\tau\left(t\right)}\right)\boldsymbol{P}_{\tau\left(t-1\right)}\cdots\boldsymbol{P}_{1}\right\Vert _{2}^{2}
+\\
&
	\,\,\,\,\,\,\,\,\,\,\,\,\,\,+
\underbrace{\mathbb{E}_{\tau,\boldsymbol{P}}\!\left\Vert \left(\boldsymbol{I}-\boldsymbol{P}\right)\boldsymbol{P}_{\tau\left(k-1\right)}\cdots\boldsymbol{P}_{1}\right\Vert _{2}^{2}}_{\le\nicefrac{(d-\avgrank)}{\left(k-1\right)}}\bigg)
\\
\left[\text{norms}\right]
&
\le
2
\left(\mathbb{E}_{\tau}\!\left[\left\Vert \left(\boldsymbol{I}-\boldsymbol{P}_{\tau\left(t\right)}\right)\boldsymbol{P}_{\tau\left(k\right)}\cdots\boldsymbol{P}_{\tau\left(t+1\right)}\right\Vert _{2}^{2}\left\Vert \left(\boldsymbol{I}-\boldsymbol{P}_{\tau\left(t\right)}\right)\boldsymbol{P}_{\tau\left(t-1\right)}\cdots\boldsymbol{P}_{1}\right\Vert _{2}^{2}\right]+\frac{d-\avgrank}{k-1}\right)
\\
&
=2\left(\mathbb{E}_{\tau,\boldsymbol{P}}\!\left[\left\Vert \left(\boldsymbol{I}-\boldsymbol{P}\right)\boldsymbol{P}_{\tau\left(k\right)}\cdots\boldsymbol{P}_{\tau\left(t+1\right)}\right\Vert _{2}^{2}\left\Vert \left(\boldsymbol{I}-\boldsymbol{P}\right)\boldsymbol{P}_{\tau\left(t-1\right)}\cdots\boldsymbol{P}_{1}\right\Vert _{2}^{2}\right]+\frac{d-\avgrank}{k-1}\right)~.
\end{align*}

Recall that both the spectral norms above are upper bounded by 1. We upper bound the norm consisting of less projections by 1 and keep the other one.

Assume $t>\nicefrac{\left(k+1\right)}{2}$, and thus $t-1>k-t$.
We get,	
$\mathbb{E}_{\tau,\boldsymbol{P}}\!\left\Vert \left(\boldsymbol{I}-\boldsymbol{P}\right)\boldsymbol{P}_{\tau\left(t-1\right)}\cdots\boldsymbol{P}_{1}\right\Vert _{2}^{2}\le\frac{d-\avgrank}{t-1}$.
Similarly, when $t\le\nicefrac{\left(k+1\right)}{2}$ and $t-1\le k-t$, we get
$
\mathbb{E}_{\tau,\boldsymbol{P}}\!\left\Vert \left(\boldsymbol{I}-\boldsymbol{P}\right)\boldsymbol{P}_{\tau\left(k\right)}\cdots\boldsymbol{P}_{\tau\left(t+1\right)}\right\Vert _{2}^{2}\le\frac{d-\avgrank}{k-t}$.

\newpage

Overall, we get a final bound of
\vspace{-1mm}
\begin{align*}
&\frac{1}{k}\sum_{t=1}^{k-1}\mathbb{E}_{\tau}\left[\left\Vert \left(\boldsymbol{I}-\boldsymbol{P}_{\tau\left(t\right)}\right)\boldsymbol{P}_{\tau\left(k\right)}\cdots\boldsymbol{P}_{\tau\left(t\right)}\cdots\boldsymbol{P}_{\tau\left(1\right)}\right\Vert _{2}^{2}\right]
\\
&
\le\frac{2}{k}
\bigg(\left(k-1\right)\frac{d-\avgrank}{k-1}+\sum_{t=1}^{\left\lfloor \left(k+1\right)/2\right\rfloor }\frac{d-\avgrank}{k-t}+\sum_{t=\left\lfloor \left(k+1\right)/2\right\rfloor +1}^{k-1}\frac{d-\avgrank}{t-1}\bigg)
\\
&
=\frac{2(d-\avgrank)}{k}
\bigg(1+
\!\!\!
\sum_{t=k-\left\lfloor \left(k+1\right)/2\right\rfloor }^{k-1}\!\!\!\nicefrac{1}{t}
+\!\!
\sum_{t=\left\lfloor \left(k+1\right)/2\right\rfloor }^{k-2}\!\!\!\nicefrac{1}{t}\bigg)
\le\frac{2(d-\avgrank)}{k}\bigg(1+2\!\sum_{t=\left\lfloor k/2\right\rfloor }^{k-1}\!\!\nicefrac{1}{t}\bigg)
\\
&
\stackrel{\left[k\ge2\right]}{\le}
\frac{2(d-\avgrank)}{k}\bigg(1+2+2\intop_{\left\lfloor k/2\right\rfloor }^{k-1}\frac{1}{t}dt\bigg)
\le
\frac{2(d-\avgrank)}{k}\left(3+2\left(\ln\left(k-1\right)-\ln\left(\left\lfloor k/2\right\rfloor \right)\right)\right)
\\
&
=
\frac{2(d-\avgrank)}{k}\left(3+2\ln\left(\frac{k-1}{\left\lfloor k/2\right\rfloor }\right)\right)\le
\frac{2(d-\avgrank)}{k}
\underbrace{\left(3+2\ln\left(2\right)\right)}_{\le 4.5}
\le\frac{9(d-\avgrank)}{k}~.
\end{align*}
\end{proof}

\newpage

\section{Extension to the average iterate (Remark~\ref{rmrk:average_iterate})}
\label{app:average_iterate}
Here we briefly demonstrate how our results from Sections~\ref{sec:cyclic} and \ref{sec:random} can be readily improved by considering the average iterate 
$\overline{\w}_{k}\triangleq\frac{1}{k}\sum_{\itr=1}^{k}\boldsymbol{w}_{\itr}$
instead of the last iterate $\w_k$.
The average iterate is known to be easier to analyze, and yields generally stronger results when analyzing the SGD algorithm \cite{varre2021last} and the Kaczmard method \cite{morshed2020accelerated}. %, for example.
Of course, working with the average iterate requires maintaining a running mean parameter vector, which is generally more memory consuming.

\subsection{Proof sketch for the cyclic setting}
As a proof of concept,
we prove better bounds for an ``easier'' iterate that averages only iterates at the end of cycles,
\ie for $k\!=\!nT$ we define
$\overline{\boldsymbol{w}}_{n}\!=\!\frac{1}{n}\sum_{n'=1}^{n}\!\boldsymbol{w}_{Tn'}$.
For the first task, we get
\begin{align*}
\left\Vert \boldsymbol{X}_{1}\left(\overline{\boldsymbol{w}}_{n}-\boldsymbol{w}^{\star}\right)\right\Vert ^{2}	
&
=
\left\Vert \boldsymbol{X}_{1}
\Bigprn{
\frac{1}{n}\sum_{n'=1}^{n}\boldsymbol{w}_{Tn'}-\boldsymbol{w}^{\star}
}
\right\Vert ^{2}
	=\frac{1}{n^{2}}\left\Vert \boldsymbol{X}_{1}\sum_{n'=1}^{n}\left(\boldsymbol{P}_{T}\cdots\boldsymbol{P}_{1}\right)^{n'}\left(\cancel{\boldsymbol{w}_{0}}-\boldsymbol{w}^{\star}\right)\right\Vert ^{2}
	\\
	&
	\le
	\frac{1}{n^{2}}\left\Vert \sum_{n'=1}^{n}\left(\boldsymbol{I}-\boldsymbol{P}_{1}\right)\left(\boldsymbol{P}_{T}\cdots\boldsymbol{P}_{1}\right)^{n'}\boldsymbol{w}^{\star}\right\Vert ^{2}
	\\
	\explain{\text{\lemref{lem:square_ineq}}}
	&
	\le\frac{1}{n}\sum_{n'=1}^{n}\left\Vert \left(\boldsymbol{I}-\boldsymbol{P}_{1}\right)\left(\boldsymbol{P}_{T}\cdots\boldsymbol{P}_{1}\right)^{n'}\boldsymbol{w}^{\star}\right\Vert ^{2}
	\\&
	=\frac{1}{n}\sum_{n'=1}^{n}\left(\left\Vert \left(\boldsymbol{P}_{T}\cdots\boldsymbol{P}_{1}\right)^{n'}\boldsymbol{w}^{\star}\right\Vert ^{2}-\left\Vert \boldsymbol{P}_{1}\left(\boldsymbol{P}_{T}\cdots\boldsymbol{P}_{1}\right)^{n'}\boldsymbol{w}^{\star}\right\Vert ^{2}\right)
	\\
	&
	\le\frac{1}{n}\sum_{n'=1}^{n}\left(\left\Vert \left(\boldsymbol{P}_{T}\cdots\boldsymbol{P}_{1}\right)^{n'}\boldsymbol{w}^{\star}\right\Vert ^{2}-\left\Vert \left(\boldsymbol{P}_{T}\cdots\boldsymbol{P}_{1}\right)^{n'+1}\boldsymbol{w}^{\star}\right\Vert ^{2}\right)
	\\
	&
	=\frac{1}{n}\Bigprn{ \underbrace{\left\Vert\left(\boldsymbol{P}_{T}\cdots\boldsymbol{P}_{1}\right)\boldsymbol{w}^{\star}\right\Vert ^{2}}_{\le 1}
	-
	\underbrace{\left\Vert \left(\boldsymbol{P}_{T}\cdots\boldsymbol{P}_{1}\right)^{n+1}\boldsymbol{w}^{\star}\right\Vert ^{2}}_{\ge 0}
	}
	\le\frac{1}{n}~.
\end{align*}
Similarly, for a general task $m\in\cnt{T}$, we have
\begin{align*}
\left\Vert \boldsymbol{X}_{m}\left(\overline{\boldsymbol{w}}_{n}-\boldsymbol{w}^{\star}\right)\right\Vert ^{2}	
&
% \le
% \frac{1}{n^{2}}\left\Vert \left(\boldsymbol{I}-\boldsymbol{P}_{m}\right)\sum_{n'=1}^{n}\left(\boldsymbol{P}_{T}\cdots\boldsymbol{P}_{1}\right)^{n'}\boldsymbol{w}^{\star}\right\Vert ^{2}
% 	=\frac{1}{n^{2}}\left\Vert \sum_{n'=1}^{n}\left(\boldsymbol{I}-\boldsymbol{P}_{m}\right)\left(\boldsymbol{P}_{T}\cdots\boldsymbol{P}_{1}\right)^{n'}\boldsymbol{w}^{\star}\right\Vert ^{2}
% 	\\
% 	&
	\le\frac{1}{n}\sum_{n'=1}^{n}\left\Vert \left(\boldsymbol{I}-\boldsymbol{P}_{m}\right)\left(\boldsymbol{P}_{T}\cdots\boldsymbol{P}_{1}\right)^{n'}\boldsymbol{w}^{\star}\right\Vert ^{2}
	\\
	&
	=\frac{1}{n}\sum_{n'=1}^{n}\left(\left\Vert \left(\boldsymbol{P}_{T}\cdots\boldsymbol{P}_{1}\right)^{n'}\boldsymbol{w}^{\star}\right\Vert ^{2}-\left\Vert \boldsymbol{P}_{m}\left(\boldsymbol{P}_{T}\cdots\boldsymbol{P}_{1}\right)^{n'}\boldsymbol{w}^{\star}\right\Vert ^{2}\right)
	\\
\left[\text{\ref{lem:bound_on_projection_to_m_task}}\right]
	&	
	\le\frac{m}{n}\sum_{n'=1}^{n}\left(\left\Vert \left(\boldsymbol{P}_{T}\cdots\boldsymbol{P}_{1}\right)^{n'}\boldsymbol{w}^{\star}\right\Vert ^{2}-\left\Vert \left(\boldsymbol{P}_{T}\cdots\boldsymbol{P}_{1}\right)^{n'+1}\boldsymbol{w}^{\star}\right\Vert ^{2}\right)
	\\
	&
	=\frac{m}{n}\left(\left\Vert \left(\boldsymbol{P}_{T}\cdots\boldsymbol{P}_{1}\right)\boldsymbol{w}^{\star}\right\Vert ^{2}-\left\Vert \left(\boldsymbol{P}_{T}\cdots\boldsymbol{P}_{1}\right)^{n+1}\boldsymbol{w}^{\star}\right\Vert ^{2}\right)
	\le\frac{m}{n}~.
\end{align*}
Overall, we get the following bound on the forgetting in the cyclic setting:
\begin{align*}
\begin{split}
    F_{\tau,\coll}(\overline{\w}_{n})
    =
    \frac{1}{T}
    \sum_{m=1}^{T}
    \left\Vert \boldsymbol{X}_{m}\left(\overline{\boldsymbol{w}}_{n}-\boldsymbol{w}^{\star}\right)\right\Vert ^{2}	
    \le
    \frac{1}{T}
    \sum_{m=1}^{T-1}\frac{m}{n}
    =
    \frac{T-1}{2n}
    \le
    \frac{T^2}{2k}
    	~.
\end{split}
\end{align*}

\newpage

\subsection{Proof sketch for the random setting}
For this case, we will demonstrate how one can easily bound
a similar expected forgetting to the one in
\eqref{eq:random-projections},
by $\tfrac{1}{k}$ instead of $\tfrac{d-\avgrank}{k}$.

Plugging in the average iterate we get,
\begin{align*}
&
\mathbb{E}_{
\mP_{1},\dots,\mP_{k},\mP}
\Bignorm{\left(\I-\mP\right)
\bigprn{\overline{\w}_{k}-\teacher}
}_{2}^{2}
\\
&
=
\mathbb{E}_{
\mP_{1},\dots,\mP_{k},\mP}
\Bignorm{\left(\I-\mP\right)
\bigprn{
\teacher
-
\frac{1}{k}\sum_{\itr=1}^{k}\boldsymbol{w}_{\itr}}
}_{2}^{2}
\\
&
=
\frac{1}{k^2}
\mathbb{E}_{
\mP_{1},\dots,\mP_{k},\mP}
\Bignorm{\left(\I-\mP\right)
\sum_{\itr=1}^{k}\bigprn{\boldsymbol{w}_{\itr}-\teacher}
}_{2}^{2}
\\
\explain{\text{\lemref{lem:square_ineq}}}
&
\le
\frac{1}{k}
\sum_{\itr=1}^{k}
\mathbb{E}_{
\mP_{1},\dots,\mP_{k},\mP}
\Bignorm{\left(\I-\mP\right)
\bigprn{\boldsymbol{w}_{\itr}-\teacher}
}_{2}^{2}
\\
\explain{\text{\lemref{total_contraction}}}
&
=
\frac{1}{k}
\sum_{\itr=1}^{k}
\mathbb{E}_{
\mP_{1},\dots,\mP_{k},\mP}
\Bignorm{\left(\I-\mP\right)\mP_{t}\cdots\mP_{1}
\teacher
}_{2}^{2}
\\
\explain{\text{\propref{prop:projections}}}
&
=
\frac{1}{k}
\sum_{\itr=1}^{k}
\mathbb{E}_{
\mP_{1},\dots,\mP_{k},\mP}
\left[
\Bignorm{\mP_{t}\cdots\mP_{1}
\teacher
}_{2}^{2}
-
\Bignorm{\mP\mP_{t}\cdots\mP_{1}
\teacher
}_{2}^{2}
\right]
\\
\explain{\text{i.i.d.}}
&
=
\frac{1}{k}
\sum_{\itr=1}^{k}
\mathbb{E}_{
\mP_{1},\dots,\mP_{k}}
\left[
\Bignorm{\mP_{t}\cdots\mP_{1}
\teacher
}_{2}^{2}
-
\Bignorm{\mP_{t+1}\cdots\mP_{1}
\teacher
}_{2}^{2}
\right]
\\
\explain{\text{telescoping}}
&
=
\frac{1}{k}
\mathbb{E}_{
\mP_{1},\dots,\mP_{k}}
\left[
\Bignorm{\mP_{1}
\teacher
}_{2}^{2}
-
\Bignorm{\mP_{k+1}\cdots\mP_{1}
\teacher
}_{2}^{2}
\right]
\le
\frac{1}{k}
~.
\end{align*}

\newpage

\section{Additional related works}
\label{app:related}

\subsection{Continual learning}
\paragraph{Practical methods for continual learning.}

Like we explained in \secref{sec:related}, algorithmic approaches for preventing catastrophic forgetting roughly partition into three categories:
\begin{enumerate}
     \item \emph{Memory-based replay approaches} actively repeat examples from previous tasks to avoid forgetting. 
     Some store examples from observed tasks 
     (\eg \citet{robins1995rehearsal}),
     while others replay \emph{generated} synthetic data from previous tasks to the main model
     (\eg \citet{shin2017generativeReplay}).
    
 \item
\emph{Regularization approaches} limit the plasticity of the learned model in order to enhance its stability.
One can perform regularization in the \emph{parameter space}, 
to protect parameters that are important to previous tasks
(\eg \citet{kirkpatrick2017overcoming,nguyen2017variational, zeno2021onlineVariational}),
or in the \emph{function space}, 
to protect the \emph{outputs} of previous tasks
(\eg \citet{li2017lwf, lopez2017GEM, farajtabar2020OGD}).
Few recent works (\citet{benzing2021unifying_regularization},
\citet{lubana2021regularization}) focus on obtaining a better understanding of popular quadratic regularization techniques like EWC \citep{kirkpatrick2017overcoming},
MAS \citep{aljundi2018memory},
and SI \citep{zenke2017continual}.

\item \emph{Parameter isolation approaches} allocate different subsets of parameters to different tasks. 
This can be done by expanding the model upon seeing a new task (\eg \citet{yoon2018lifelong}),
or by compressing the existing architecture after each task, to free parameters for future tasks (\eg \citet{mallya2018packnet,schwarz2021powerpropagation}).
\end{enumerate}

\paragraph{Understanding customary training techniques}
Some recent papers also test the effect of common deep learning training techniques that change the optimization dynamics and/or have an implicit regularizing implications.
For instance, \citet{Mirzadeh2020regimes}
study how different training regimes affect the loss landscape geometry, thus influencing the overall degree of forgetting in continual learning settings.
\citet{goodfellow2013empirical} were the first to suggest that training with the \emph{dropout} technique can be beneficial to remedy \cf in continual learning settings.
\citet{delange2021continual} also show that dropout is fruitful in many continual learning methods (some of which are mentioned above).
They point out that it mainly improves the initial performance on learned tasks (by mitigating overfitting), but leads to an increased amount of forgetting when learning later tasks.

Optimization hyperparameters are known to change the geometry of the loss landscape and affect generalization (see for instance \citet{Jastrzebski2017minima}).
\citet{Mirzadeh2020regimes} point out that 
most empirical papers in this field use small batch sizes and SGD with learning rate decay.
They also discuss how a large learning rate increases the plasticity of deep models, thus having an ill-effect on forgetting. 

Finally, \citet{mirzadeh2021wide} pointed out that architectural decisions also have implications on catastrophic forgetting, and specifically observe that wide neural networks tend to forget less catastrophically.

%\pagebreak

\subsection{Wider scope}
Finally, we note that 
similar questions to those we ask here and that are asked generally in the continual learning paradigm,
are often asked on other fields, such as multi-task learning, meta learning, transfer learning, 
curriculum learning \cite{weinshall2020theory}, 
and online learning.
Despite the similarities, there are differences though.
For example, in multi-task learning, the data from all tasks are simultaneously available, in transfer learning the goal to adapt models for a new target tasks and preserving performance on old source task is not a priority, and in online learning, typically, training data for previously seen tasks are assumed to be available in sequentially adapting to data from new tasks. 
However, a more detailed review of these related areas is beyond the scope of this work.

\end{document}